\newcommand{\BEAS}{\begin{eqnarray*}}
\newcommand{\EEAS}{\end{eqnarray*}}
\newcommand{\BEA}{\begin{eqnarray}}
\newcommand{\EEA}{\end{eqnarray}}
\newcommand{\BA}{\begin{align}}
\newcommand{\EA}{\end{align}}
\newcommand{\BEQ}{\begin{equation}}
\newcommand{\EEQ}{\end{equation}}
\newcommand{\BIT}{\begin{itemize}}
\newcommand{\EIT}{\end{itemize}}
\newcommand{\BNUM}{\begin{enumerate}}
\newcommand{\ENUM}{\end{enumerate}}
\newcommand{\diag}{\mathop{\rm diag}}
\newcommand{\D}{\Delta}
\renewcommand{\O}{\mathcal{O}}
\newcommand{\M}{\mathcal{M}}
\newcommand{\X}{\mathcal{X}}
\newcommand{\F}{\mathcal{F}}
\newcommand{\g}{\mathfrak{g}}
\newcommand{\G}{\mathcal{G}_{d,k}}
\newcommand{\Hess}{\nabla^2}
\newcommand{\eps}{\varepsilon}
\newcommand{\tp}[2]{\Gamma_{#1}^{#2}}
\newcommand{\te}[2]{\Lambda_{#1}^{#2}}
\newcommand{\mysec}[1]{Section~\ref{sec:#1}}
\newcommand{\myapp}[1]{Appendix~\ref{sec:#1}}
\newcommand{\eq}[1]{Eq.~(\ref{eq:#1})}
\newcommand{\myfig}[1]{Figure~\ref{fig:#1}}
\newcommand{\Exp}{{\mathop { \rm Exp{}}}}
\newcommand{\idm}{I}
\newcommand{\rb}{\mathbb{R}}
\newcommand{\E}{\mathbb{E}}
\newtheorem{theorem}{Theorem}
\newtheorem{assumption}{Assumption}
\newtheorem{lemma}[theorem]{Lemma}
\newtheorem{proposition}[theorem]{Proposition}
\newtheorem{remark}{Remark}
\title{Averaging Stochastic Gradient Descent on Riemannian Manifolds}
\author{
Nilesh Tripuraneni\\
University of California, Berkeley\\
\texttt{nilesh\_tripuraneni@berkeley.edu}\\
\and
Nicolas Flammarion\\
University of California, Berkeley\\
\texttt{flammarion@berkeley.edu}\\
\and
\hspace{.3cm} Francis Bach\\
\hspace{.3cm} INRIA, Ecole Normale Sup\'erieure\\
\hspace{.3cm} PSL Research University\\
\hspace{.3cm} \texttt{francis.bach@inria.fr}\\
\and \hspace{.8cm}
Michael I. Jordan\\
\hspace{.8cm} University of California, Berkeley\\
\hspace{.8cm} \texttt{jordan@cs.berkeley.edu}\\
}
\date{}
\begin{document}

\maketitle

\begin{abstract}
We consider the minimization of a function defined on a Riemannian manifold $\mathcal{M}$ accessible only through unbiased estimates of its gradients. We develop a geometric framework to transform a sequence of slowly converging iterates generated from stochastic gradient descent (SGD) on $\mathcal{M}$ to an averaged iterate sequence with a robust and fast $O(1/n)$ convergence rate. We then present an application of our framework to geodesically-strongly-convex (and possibly Euclidean non-convex) problems.  Finally, we demonstrate how these ideas apply to the case of streaming $k$-PCA, where we show how to accelerate the slow rate of the randomized power method  (without requiring knowledge of the eigengap) into a robust algorithm achieving the optimal rate of convergence.
\end{abstract}

%!TEX root = main.tex
\section{Introduction}
We consider stochastic optimization of a (potentially non-convex) function $f$  defined on a Riemannian manifold $\M$, and accessible only through unbiased estimates of its gradients.
The framework is broad---encompassing fundamental problems such as principal components analysis (PCA) \citep{edelman1998geometry}, dictionary learning \citep{SunQinWri17}, low-rank matrix completion \citep{BouAbs11} and tensor factorization \citep{IshAbsVanDeL11}.

The classical setting of stochastic approximation in $\mathbb{R}^d$, first appearing in the work of
\citet{robbins1951stochastic}, has been thoroughly explored in both the optimization and machine learning communities. A key step in the development of this theory was the discovery of Polyak-Ruppert averaging---a technique in which the iterates are averaged along the optimization path. Such averaging provably reduces the impact of noise on the problem solution and improves convergence rates in certain important settings~\citep{Pol90, ruppert1988efficient}.

By contrast, the general setting of stochastic approximation  on Riemannian manifolds has been far less studied. There are many open questions regarding achievable rates and the possibility of accelerating these rates with techniques such as averaging.  The problems are twofold: it is not always clear how to extend a gradient-based algorithm to a setting which is missing the global vector-space structure of Euclidean space, and, equally importantly, classical analysis techniques often rely on the Euclidean structure and do not always carry over. In particular, Polyak-Ruppert averaging  relies critically on the Euclidean structure of the configuration space, both in its design and its analysis.  We therefore ask: \emph{Can the classical technique of Polyak-Ruppert iterate averaging be adapted to the Riemannian setting?}  Moreover, do the advantages of iterate averaging, in terms of rate and robustness, carry over from the Euclidean setting to the Riemannian setting?

Indeed, in the traditional setting of Euclidean stochastic optimization, averaged optimization algorithms not only improve convergence rates, but also have the advantage of adapting to the hardness of the problem \citep{moulines2011non}.  They provide a single, robust algorithm that achieves optimal rates with and without strong convexity, and they also achieve the statistically optimal asymptotic variance. In the presence of strong convexity, setting the step size proportional to $\gamma_n = \frac{1}{\mu n}$ is sufficient to achieve the optimal $O(\frac{1}{n})$ rate. However, as highlighted by \citet{NemJudLan08} and \citet{moulines2011non}, the convergence of such a scheme is highly sensitive to the choice of constant prefactor $C$ in the step size; an improper choice of $C$ can lead to an arbitrarily slow convergence \emph{rate}. Moreover, since $\mu$ is often never known, properly calibrating $C$ is often impossible (unless an explicit regularizer is added to the cost function, which adds an extra hyperparameter).

In this paper, we provide a practical iterate-averaging scheme that enhances the robustness and speed of stochastic, gradient-based optimization algorithms, applicable to a  wide range of Riemannian optimization problems---including those that are (Euclidean) non-convex. Principally, our framework extends the classical Polyak-Ruppert iterate-averaging scheme (and its inherent benefits) to the Riemannian setting.
Moreover, our results hold in the general stochastic approximation setting and do not rely on any finite-sum structure of the objective.

Our main contributions are:
\begin{itemize}
  \item The development of a geometric framework to transform a sequence of slowly converging iterates on $\M$, produced from SGD, to an iterate-averaged sequence
  with a robust, fast $O(\frac{1}{n})$ rate.
  \item A general formulation of geometric iterate averaging for a class of locally smooth and geodesically-strongly-convex optimization problems.
  \item An application of our framework to the (non-convex) problem of streaming PCA, where we show how to transform the slow rate of the randomized power method (with no knowledge of the unknown eigengap) into an algorithm that achieves the optimal rate of convergence and which empirically outperforms existing algorithms.
\end{itemize}

\subsection{Related Work}
\textbf{Stochastic Optimization:}
The literature on (Euclidean) stochastic optimization is vast, having been studied through the lens of machine learning \citep{bottou-98x, ShaShaSreSri09}, optimization \citep{NesVia08}, and stochastic approximation \citep{KusYin03}.
Polyak-Ruppert averaging first appeared in the works of \citet{Pol90} and \citet{ruppert1988efficient}; \citet{polyak1992acceleration} then provided asymptotic normality results for
the distribution of the averaged iterate sequence. \citet{moulines2011non} later generalized these results, providing non-asymptotic guarantees for the rate of convergence of the averaged iterates.
An important contribution of \citet{moulines2011non} was to present a unified analysis showing that iterate averaging coupled with sufficiently slow learning rates could achieve the
optimal convergence in all settings (i.e., with and without strong convexity).
\\
\textbf{Riemannian Optimization:}
Riemannian optimization has not been explored in the machine learning community until relatively recently. \citet{udriste1994convex} and
\citet{absil2009optimization} provide comprehensive background on the topic. Most existing work has primarily focused on providing asymptotic convergence guarantees for non-stochastic algorithms \citep[see, e.g.,][who analyze the convergence of Riemannian trust-region and Riemannian L-BFGS methods, respectively]{Absil2007,ring2012optimization}.

\citet{bonnabel2013stochastic}
provided the first asymptotic convergence
proof of stochastic gradient descent (SGD) on Riemannian manifolds while highlighting diverse applications of the Riemannian framework to problems such as PCA.
The first global complexity results for first-order Riemannian optimization, utilizing the notion of \emph{functional g-convexity}, were obtained
in the foundational work of \citet{zhang2016first}. The finite-sum, stochastic setting has been further investigated by \citet{zhang2016riemannian} and \citet{sato2017riemannian}, who developed Riemannian SVRG methods. However, the potential utility of Polyak-Ruppert averaging in the Riemannian setting has been unexplored.

%!TEX root = main.tex
\section{Results} \label{sec:results}
We consider the optimization of a function $f$ over a compact, connected subset $\X \subset \M$,
    \[
\min_{x \in \X \subset \M} f(x),
\] with access to a (noisy) first-order oracle $\{ \nabla f_n(x) \}_{n \geq 1}$. Given a sequence of iterates $\{x_n\}_{n\geq0}$  in~$\mathcal{M}$ produced from the first-order optimization of $f$,
\begin{align}
 x_{n} = R_{x_{n-1}} \left(-\gamma_n \nabla f_{n} \left(x_{n-1} \right)\right), \label{eq:grad_desc}
\end{align}
that are converging to a \emph{strict} local minimum of $f$, denoted by $x_\star$, we consider (and analyze the convergence of) a streaming average of iterates:
\begin{align}
 \tilde{x}_{n} = R_{\tilde{x}_{n-1}} \left(\frac{1}{n} R_{\tilde{x}_{n-1}}^{-1}\left(x_{n}\right)\right). \label{eq:ave_grad_desc}
\end{align}
Here we use $R_x$ to denote a retraction mapping (defined formally in \mysec{background}), which provides a natural means of moving along a vector (such as the gradient) while restricting movement to the manifold.
As an example, when $\M = \mathbb{R}^d$  we can take $R_x$ as vector addition by $x$. In this setting, \eq{grad_desc} reduces to the standard gradient update $x_n = x_{n-1} - \gamma_n \nabla f_n(x_{n-1})$ and \eq{ave_grad_desc} reduces to the ordinary average $\tilde{x}_n = \tilde{x}_{n-1} + \frac{1}{n}(x_{n} - \tilde{x}_{n-1})$.
In the update in \eq{grad_desc}, we will always consider step-size sequences of the form $\gamma_n = \frac{C}{n^\alpha}$ for $C>0$ and $\alpha \in \left(\frac{1}{2}, 1\right)$, which satisfy the usual stochastic approximation step-size rules $\sum_{i=1}^\infty \gamma_i=\infty$ and $\sum_{i=1}^\infty \gamma_i^2<\infty$ \citep[see, e.g.,][]{BenPriMet90}.

Intuitively, our main result
states that if the iterates $x_n$ converge to $x_\star$
at a slow $O(\gamma_n)$ rate, their streaming Riemannian average will converge to $x_\star$ at the the optimal $O(\frac{1}{n})$ rate. This result requires  several technical assumptions, which are standard generalizations of those appearing in the Riemannian optimization and stochastic approximation literatures (detailed in \mysec{assumptions}). The critical assumption we make is that all iterates remain bounded in $\X$---where the manifold behaves well and the algorithm is well-defined (Assumption~\ref{assump:manifold}).
The notion of slow convergence to an optimum is formalized in the following assumption:
\begin{assumption}\label{assump:slowrate}
  If $\Delta_n = R_{x_\star}^{-1}(x_n)$ for a sequence of iterates evolving in \eq{grad_desc}, then
  \[ \E[\Vert \Delta_n \Vert^2] = O(\gamma_n). \]
\end{assumption}
Assumption~\ref{assump:slowrate} can be verified in a variety of optimization problems, and
we provide such examples in \mysec{application}. As $x_\star$ is unknown, $\Delta_n$ is not computable but is primarily a tool for our analysis. Importantly, $\Delta_n$ is a tangent vector in $T_{x_\star} \M$. Note also that the norm $\Vert \Delta_n \Vert$  is locally equivalent to the geodesic distance $d(x_n,x_\star)$ on $\M$ (see \mysec{background}).

We use $\Sigma$ to denote the covariance of the noisy gradients at the optima $x_\star$.
Formally, our main convergence result regarding Polyak-Ruppert averaging in the manifold setting is as follows (where Assumptions~\ref{assump:manifold} through \ref{assump:noiseLip} will be presented later):
\begin{theorem} \label{thm:main}
  Let Assumptions \ref{assump:slowrate}, \ref{assump:manifold}, \ref{assump:strongconvpoint},
  \ref{assump:HessianLip}, \ref{assump:noiseunbiased},
  and  \ref{assump:noiseLip}
  hold for the iterates evolving according to \eq{grad_desc} and \eq{ave_grad_desc}.
    Then $\tilde{\Delta}_n = R_{x_{\star}}^{-1}(\tilde{x}_n)$ satisfies:
  \begin{align}
   \sqrt{n} \tilde{\Delta}_n \overset{D}{\to} \mathcal{N}(0, \Hess f(x_\star)^{-1} \Sigma \Hess f(x_\star)^{-1}). \notag
  \end{align}
  If we additionally assume a bound on the fourth moment of the iterates---of the form $\mathbb{E}[\Vert \Delta_n \Vert^4] = O(\gamma_n^2)$---then a
  non-asymptotic result holds:
  \begin{align}
    \mathbb{E}[\Vert \tilde{\Delta}_n \Vert^2] \leq \frac{1}{n} \tr[\Hess f(x_\star)^{-1} \Sigma \Hess f(x_\star)^{-1}] + O(n^{-2\alpha}) + O(n^{\alpha-2}). \notag
  \end{align}
  \end{theorem}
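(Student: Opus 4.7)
I plan to push everything into the tangent space $T_{x_{\star}}\mathcal{M}$ through the retraction coordinates $\Delta_n = R_{x_{\star}}^{-1}(x_n)$ and $\tilde{\Delta}_n = R_{x_{\star}}^{-1}(\tilde{x}_n)$. Since $T_{x_{\star}}\mathcal{M}$ is a genuine vector space, this lets me mimic the classical Polyak--Juditsky / \citet{moulines2011non} analysis. The proof decomposes into three tasks: (i) rewrite \eq{grad_desc} as an approximate linear recursion for $\Delta_n$, (ii) show that the Riemannian streaming average $\tilde{\Delta}_n$ coincides with the ordinary Ces\`aro mean $\tfrac{1}{n}\sum_{k=1}^{n}\Delta_k$ up to controllable second-order retraction errors, and (iii) apply a martingale CLT to the linearized quantity to identify the asymptotic covariance $H^{-1}\Sigma H^{-1}$, where $H = \Hess f(x_{\star})$.

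\textbf{Linearizing the SGD step.} I would Taylor-expand the noisy gradient $\nabla f_n(x_{n-1})$ around $x_{\star}$ (using Assumptions~\ref{assump:HessianLip}, \ref{assump:noiseunbiased}, \ref{assump:noiseLip}) and the retraction $R_{x_{n-1}}$ in normal coordinates centred at $x_{\star}$. The update \eq{grad_desc} should then take the form
\begin{align*}
\Delta_n = (I - \gamma_n H)\Delta_{n-1} - \gamma_n \xi_n + r_n,
\end{align*}
where $\xi_n$ is a martingale increment with conditional covariance tending to $\Sigma$ and $r_n$ gathers curvature and Hessian-Lipschitz remainders. Assumption~\ref{assump:slowrate} gives $\mathbb{E}[\|r_n\|^2] = O(\gamma_n \|\Delta_{n-1}\|^2) = O(\gamma_n^2)$, small enough to fit into the error budget.

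\textbf{Linearizing the average.} The main geometric step is to establish
\begin{align*}
\tilde{\Delta}_n = \tilde{\Delta}_{n-1} + \tfrac{1}{n}\bigl(\Delta_n - \tilde{\Delta}_{n-1}\bigr) + \tilde{r}_n,
\end{align*}
with $\tilde{r}_n$ a quadratic-in-iterate remainder. This requires two successive expansions: the inner map $R_{\tilde{x}_{n-1}}^{-1}(x_n)$, when transported to $T_{x_{\star}}\mathcal{M}$, differs from $\Delta_n - \tilde{\Delta}_{n-1}$ only by terms of order $\|\tilde{\Delta}_{n-1}\|\,\|\Delta_n - \tilde{\Delta}_{n-1}\|$, and the outer retraction $R_{\tilde{x}_{n-1}}$ contributes a similar second-order correction. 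Unrolling the recursion should then give $\tilde{\Delta}_n = \tfrac{1}{n}\sum_{k=1}^{n}\Delta_k + \text{(smaller error)}$.

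\textbf{Assembling the conclusion and main obstacle.} Once both recursions are linearized, Abel summation converts $\tfrac{1}{n}\sum_k \Delta_k$ into $-\tfrac{1}{n}H^{-1}\sum_k \xi_k + \text{error}$, after which a martingale CLT yields the stated Gaussian limit. The non-asymptotic bound follows by squaring, Cauchy--Schwarz, and the extra fourth-moment hypothesis combined with Assumption~\ref{assump:slowrate} to bound each error contribution at scale $O(n^{-2\alpha}) + O(n^{\alpha - 2})$. The principal obstacle I foresee lies in the averaging step itself: in the Euclidean case $\tilde{\Delta}_n$ is exactly $\tfrac{1}{n}\sum_k \Delta_k$, but on $\mathcal{M}$ the base point of the retraction drifts with $\tilde{x}_{n-1}$, so second-order retraction remainders accumulate across iterations. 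Controlling this accumulation carefully enough that the residual remains $o(1/\sqrt{n})$, rather than degrading the overall rate, is where the slow-rate hypothesis $\mathbb{E}[\|\Delta_n\|^2] = O(\gamma_n)$ -- and, for the non-asymptotic statement, its fourth-moment strengthening -- must be applied most delicately.
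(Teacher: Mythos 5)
Your proposal follows essentially the paper's own path: push everything into $T_{x_\star}\mathcal{M}$ via $\Delta_n = R_{x_\star}^{-1}(x_n)$, linearize the SGD step into a perturbed linear recursion, average in the tangent space by Polyak--Juditsky / Bach--Moulines arguments, and transfer the rate back to the computable iterate $\tilde x_n$ through a second linearization of the streaming average. Your three tasks (i)--(iii) correspond to the paper's Sections~\ref{sec:pfsketch1}--\ref{sec:pfsketch3}, and the recursion $\tilde\Delta_n = \tilde\Delta_{n-1} + \frac{1}{n}(\Delta_n - \tilde\Delta_{n-1}) + \tilde r_n$ is precisely Lemma~\ref{lem:stream_avg_iters}.

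Two points in your sketch are genuinely incomplete and would block a rigorous write-up. In the primal linearization, after expanding $R_{x_\star}^{-1}\circ R_{x_n}$, the noisy gradient $\nabla f_{n+1}(x_n)$ arrives in $T_{x_\star}\mathcal{M}$ through the inverse differential of the retraction, $[\te{x_\star}{x_n}]^{-1}$, whereas the manifold Taylor theorem you need to extract $\Hess f(x_\star)\Delta_n$ from $\nabla f(x_n)$ is phrased in parallel-transport coordinates $\tp{x_n}{x_\star}$. These are \emph{different} vector transports, and the paper's Lemma~\ref{lem:tangent_rec_3} shows $[\te{x_\star}{x_n}]^{-1}\tp{x_\star}{x_n} = I + O(\|\Delta_n\|^2)$ only because Assumption~\ref{assump:manifold} requires $R$ to be a \emph{second-order} retraction, forcing the first-order correction $\nabla_{\dot\gamma(0)}DR$ to vanish; a generic first-order retraction gives an $O(\|\Delta_n\|)$ discrepancy, which is too large to absorb into the error budget. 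Your phrase ``normal coordinates centred at $x_\star$'' glosses over this reconciliation. Second, you correctly flag base-point drift in the streaming average as the main obstacle, but you never say how to control $\|\tilde\Delta_{n-1}\|$ itself, which your quadratic remainder $\tilde r_n$ requires. The paper resolves this using the retraction strong convexity of $x\mapsto\|R_{x_\star}^{-1}(x)\|^2$ (again Assumption~\ref{assump:manifold}), which yields Lemma~\ref{lem:avg_iters}: $\mathbb{E}[\|\tilde\Delta_n\|^2] = O(\gamma_n)$. That a-priori bound on $\tilde\Delta_n$, not just on $\Delta_n$, is what keeps the cumulative remainder $\frac{1}{n}\sum_k\tilde e_k$ at $O(\gamma_n) = o(n^{-1/2})$. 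Finally, a moment-bookkeeping slip: Assumption~\ref{assump:slowrate} gives only $\mathbb{E}[\|\Delta_n\|^2] = O(\gamma_n)$, hence a first-moment bound $\mathbb{E}[\|r_n\|] = O(\gamma_n^2)$ on your remainder; the second-moment control needed for Theorem~\ref{thm:nonasymp_ave} (i.e.\ $\mathbb{E}[\|r_n\|^2]=O(\gamma_n^4)$) is exactly where the additional hypothesis $\mathbb{E}[\|\Delta_n\|^4] = O(\gamma_n^2)$ must be invoked, and is \emph{not} a consequence of Assumption~\ref{assump:slowrate} alone.
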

We make several remarks regarding this theorem:
\begin{itemize}
  \item The asymptotic result in Theorem \ref{thm:main} is a generalization of the classical asymptotic result of \citet{polyak1992acceleration}. In particular, the leading term has variance $O(\frac{1}{n})$ \textit{independently} of the step-size choice $\gamma_n$. In the presence of strong convexity, SGD can achieve the $O(\frac{1}{n})$ rate with a carefully chosen step size, $\gamma_n = \frac{C}{\mu n}$ (for $C=1$). However, the result is fragile: too small a value of $C$ can lead to an arbitrarily slow convergence rate, while too large a $C$ can lead to an ``exploding,'' non-convergent sequence \citep{NemJudLan08}. In practice determining $\mu$ is often as difficult as the problem itself.
  \item Theorem \ref{thm:main} implies that the distance  (measured in $T_{x_\star} \M$) of the streaming average $\tilde{x}_n$ to the optimum,
  asymptotically saturates the Cramer-Rao bound on the manifold $\M$ \citep{smi05, Bou13}---asymptotically achieving the statistically optimal covariance\footnote{Note the estimator $\tilde{\Delta}_n$ is only asymptotically unbiased, and hence the Cramer-Rao bound is only meaningful in the asymptotic limit. However, this result can also be understood as saturating the H\`{a}jek-Le Cam local asymptotic minimax lower bound \citep[Ch. 8]{van1998asymptotic}.}. SGD, even with the carefully calibrated step-size choice of $\gamma_n = \frac{1}{\mu n}$, does not achieve this optimal asymptotic variance \citep{NevHas73}.
\end{itemize}
We exhibit two applications of this general result in \mysec{application}. Next, we introduce the relevant background and assumptions that are necessary to prove our theorem.

%!TEX root = main.tex
\section{Preliminaries}\label{sec:background}
We recall some important concepts from Riemannian geometry. \citet{do2016differential}  provides more a thorough review, with \citet{absil2009optimization} providing a perspective particularly relevant for Riemannian optimization.

As a base space we consider a Riemannian manifold $(\M, \g)$---a smooth manifold equipped with a Riemannian metric $\g$ containing a compact, connected subset $\X$. At all $x \in \M$, the metric $\g$ induces a natural inner product on the tangent space $T_{x} \M$, denoted by $\langle\cdot,\cdot\rangle$---this inner product induces a norm on each tangent space denoted by $\Vert \cdot \Vert$. The metric $\g$ also provides a means of measuring the length of a parametrized curve from $\mathbb{R}$ to the manifold; a \emph{geodesic} is a constant speed curve $\gamma : [0,1] \to \M$ that is locally distance-minimizing with respect to the distance~$d$ induced by $\g$.

When considering functions $f : \M \to \mathbb{R}$ we will use $\nabla f(x) \in T_{x} M$ to denote the \emph{Riemannian gradient} of $f$ at $x \in \M$, and $\Hess f(x) : T_{x} M \to T_{x} M$, the \emph{Riemannian Hessian} of $f$ at $x$. When considering functions between manifolds $F : \M \to \M$, we will use $D F(x) : T_{x} \M \to T_{F(x)} \M$ to denote the \emph{differential} of the mapping at $x$ (its linearization) % for which the chain rule and inverse function theorem also hold
\citep[see][for more formal definitions of these objects]{absil2009optimization}.
 \begin{figure}[!ht]
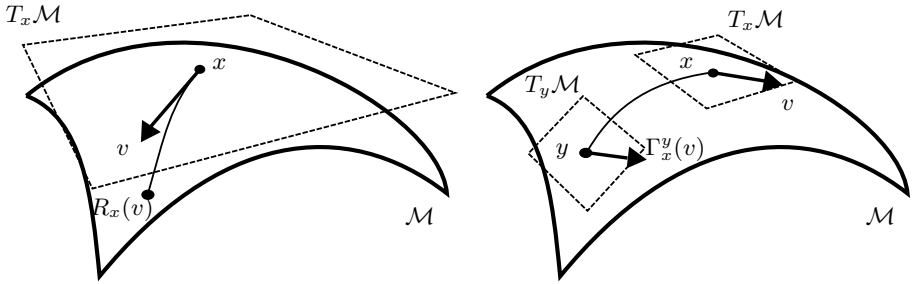

 \vspace{-.5cm}
\centering
\begin{minipage}[c]{.48\linewidth}
 \def\svgwidth{2.5in}
\input{Figs/retraction.tex}
   \end{minipage}
   \begin{minipage}[c]{.48\linewidth}
 \def\svgwidth{2.5in}
\input{Figs/transport.tex}
  \end{minipage}
  \vspace{-.5cm}
\caption{Left: a tangent vector $v$ in the tangent space of a point $x$ and the corresponding retraction generating a curve pointing in the ``direction'' of the tangent vector $v$. Right:
The parallel transport of a different $v$ along the same path.}
\label{fig:manifold}
\end{figure}
The \emph{exponential map} $\Exp_{x}(v) : T_{x} \M \to \M$ maps $v \in T_{x} \M$ to $y \in \M$ such that there is a geodesic with $\gamma(0)=x$, $\gamma(1)=y$, and $\frac{d}{dt}\gamma(0)=v$; although it may not be defined on the whole tangent space. If there is a unique geodesic connecting $x, y \in \X$, the exponential map will have a well-defined inverse $\Exp_{x}^{-1}(y) : \M \to T_{x}\M$, such that the length of the connecting geodesic is $d(x,y)=\Vert \Exp_{x}^{-1}(y) \Vert$. We also use $R_x : T_{x} \M \to \M$ and $R_x^{-1} : \M \to T_{x}\M$ to denote a \emph{retraction mapping} and its inverse (when well defined), which is an approximation to the exponential map (and its inverse). $R_x$ is often computationally cheaper to compute then the entire exponential map $\Exp_x$. Formally, the map $R_x$ is defined as a first-order retraction if $R_x(0) = x$ and $D R_x(0)= id_{T_{x} \M}$---so locally $R_x(\xi)$ must move in the ``direction'' of $\xi$. The map $R_x$ is a second-order retraction if $R_x$ also satisfies $\frac{D^2}{dt^2} R_x(t \xi)|_{0} = 0$ for all $\xi \in T_{x}\M$, where $\frac{D^2}{dt^2}\gamma = \frac{D}{dt} \dot{\gamma}$ denotes the acceleration vector field \citep[][Sec.~5.4]{absil2009optimization}. This condition ensures $R_x$ satisfies a ``zero-acceleration'' initial condition.
Note that locally, for $x$ close to $y$, the retraction satisfies $\norm{R^{-1}_{x}(y)} = d(x, y) + o(d(x, y))$.

If we consider the example where the manifold is a sphere (i.e., $\M = S^{d-1}$ with the round metric $\g$), the exponential map along a vector will generate a curve that is a great circle on the underlying sphere. A nontrivial example of a retraction $R$ on the sphere can be defined by first moving along the tangent vector in the embedded Euclidean space $\mathbb{R}^d$, and then projecting this point to the closest point on the sphere.

We further define the \emph{parallel translation} $\tp{x}{y} : T_x \mathcal{M} \to T_y \mathcal{M}$ as the map
transporting a vector $v \in T_x \M$ to $\tp{x}{y} v$, along a path $R_{x}(\xi)$ connecting $x$ to $y = R_{x}(\xi)$, such that the vector stays ``constant" by satisfying a zero-acceleration condition. This is illustrated in \myfig{manifold}. The map $\tp{x}{y}$ is an isometry. We also consider, a different vector transport map $\te{x}{y}:T_x \mathcal{M} \to T_y \mathcal{M}$ which is the differential  $DR_{x}(R_x^{-1}(y))$ of the retraction $R$ \citep[][Sec.~8.1]{absil2009optimization}.

Following \citet{huang2015broyden}, we will call a function $f$ on $\X$ \emph{retraction convex} on $\X$ (with respect to $R$) if for all $x \in \X$ and all $\eta \in T_{x}\M$ satisfying $\norm{\eta}=1$, $t \mapsto f(R_{x}(t \eta))$ is convex for all $t$ such that $R_{x}(t \eta) \in \X$; similarly $f$ is \emph{retraction strongly convex} on $\X$ if $t \mapsto f(R_{x}(t \eta))$ is strongly convex under the same conditions. If $R_x$ is the exponential map, this reduces to the definition of geodesic convexity \citep[see the work of][for further details]{zhang2016first}.

\section{Assumptions} \label{sec:assumptions}
We introduce several assumptions on the manifold $\M$, function $f$, and the noise process $\{\nabla f_n\}_{n\geq1}$ that will be relevant throughout the paper.
\subsection{Assumptions on $\M$}
First, we assume the iterates of the algorithm in \eq{grad_desc} and \eq{ave_grad_desc}
remain in $\mathcal X$ where the manifold ``behaves well.'' Formally,
\begin{assumption} \label{assump:manifold}
  For a sequence of iterates $\{ x_n \}_{n \geq 0}$ defined in \eq{grad_desc}, there exists a compact, connected subset $\mathcal X$ such that $x_n \in \mathcal X$ for all $n \geq 0$, and $x_\star \in \mathcal X$. Furthermore, $\mathcal X$ is totally retractive (with respect  to the retraction $R$) and the function $x\mapsto \Vert \frac{1}{2} R_y^{-1}(x)\Vert^2$ is retraction strongly convex on $\X$ for all $y\in\X$. Also, $R$ is a second-order retraction at $x_\star$.
\end{assumption}
Assumption~\ref{assump:manifold} is restrictive, but standard in prior work on stochastic approximation on manifolds \cite[e.g.,][]{bonnabel2013stochastic, zhang2016riemannian, sato2017riemannian}.
As further detailed by \citet{huang2015broyden}, a totally retractive neighborhood $\X$ is such that
for all $x \in \X$ there exists $r>0$ such that $\X \subset R_{x} (\mathbb{B}_{r}(0))$
where $R_x$ is a diffeomorphism on $\mathbb{B}_{r}(0)$. A totally retractive neighborhood is analogous to the concept of a totally normal neighborhood \citep[see, e.g.,][Chap. 3, Sec. 3]{do2016differential}.
Principally, Assumption~\ref{assump:manifold} ensures that the retraction map (and its respective inverse) are well-defined when applied to the iterates of our
algorithm.

If $\M$ is a Hadamard manifold, the exponential map (and its inverse) is defined everywhere on~$\M$, although this globally may not
be true for a retraction $R$. Similarly, if $\M$ is a compact manifold the first statement of Assumption~\ref{assump:manifold} is always satisfied. Moreover, in the case of the exponential map, $x \mapsto \frac{1}{2} \Vert \Exp_y^{-1}(x)\Vert^2$ is strongly convex in a ball around $y$
whose radius depends on the curvature, as explained by \citet{Afs11} and \citet[Ch. IV, Sec. 2 Lemma 2.9]{Sak96}. For our present purpose, we also assume the retraction $R$ agrees with the Riemannian exponential map up to second order near $x_\star$. This assumption, that $R$ is a second-order retraction, is fairly general and is satisfied by projection-like retraction maps on matrix manifolds \citep[see, e.g.,][]{AbsMal12}.

\subsection{Assumptions on $f$}
We now introduce some regularity assumptions on the function $f$ ensuring
sufficient differentiability and strong convexity at $x_\star$:
\begin{assumption} \label{assump:strongconvpoint}
The function $f$ is twice-continuously differentiable on $\X$. Further
the Hessian of the function $f$ at $x_\star$, $\Hess f(x_\star)$, satisfies,
for all $v \in T_{x_\star}\M$ and $\mu>0$,
\[
\langle v, \Hess f(x_\star)v\rangle \geq \mu \Vert v\Vert^2>0.
\]
\end{assumption}
Continuity of the Hessian  also ensures local retraction strong convexity in a neighborhood of $x_\star$ \citep[][Prop. 5.5.6]{absil2009optimization}. Moreover, since the function $f$ is twice-continuously differentiable on $\X$ its Hessian is Lipschitz on this compact set.
We formalize this as follows:
\begin{assumption}  \label{assump:HessianLip}
 There exists $M>0$ such that the Hessian of the function $f$, $\Hess f$, is $M$-Lipschitz  at $x_\star$.
That is, for all $y \in \X$ and $v \in T_{y}\M$,
\[
\Vert \tp{y}{x_\star}  \circ \Hess f(y) \circ \tp{x_\star}{y} -  \Hess f(x_\star)\Vert_{op} \leq M \Vert R_{x_\star}^{-1}(y) \Vert.
\]
\end{assumption}
Note that $\Vert R_{x_\star}^{-1}(y) \Vert$ is not necessarily symmetric under the exchange of $x_\star$ and $y$. This term could also be replaced with $d(x_\star, y)$, since these expressions will be locally equivalent
in a neighborhood of $x_\star$, but would come at the cost of a less transparent analysis.
\subsection{Assumptions on the noise} \label{sec:assumptions_noise}
We state several assumptions on the noise process that will be relevant throughout our discussion.
Let $(\F_n)_{n \geq 0}$ be an increasing sequence of sigma-fields. We will assume access to a sequence $\{\nabla f_n\}_{n \geq 1}$
of noisy estimates of the true gradient $\nabla f$ of the function $f$,
\begin{assumption} \label{assump:noiseunbiased}
The sequence of (random) vector fields $\{ \nabla f_n \}_{n\geq1} : \mathcal M \to T\mathcal M$ is $\F_n$-measurable, square-integrable and unbiased:
    \[
\forall x \in \X, \ \forall n \geq 1, \ \E[\nabla f_n(x)\vert \F_{n-1}]=\nabla f(x).
\]
\end{assumption}
This general framework subsumes two situations of interest.
\begin{itemize}
  \item
Statistical Learning (on Manifolds): minimizing a loss function $\ell: \mathcal M \times  \mathcal Z \to \mathbb R$ over $x \in \mathcal X$, given a sequence
of i.i.d. observations in $\mathcal Z$, with access only to noisy, unbiased estimates of the gradient $\nabla f_n=\nabla \ell(\cdot,z_n)$ \citep{AswBicTom11}.
\item
Stochastic Approximation (on Manifolds): minimizing a function $f(x)$ over $x \in \mathcal X$, with access only to the
(random) vector field $\nabla f(x) + \epsilon_n(x)$ at each iteration. Here the gradient vector field is perturbed
 by a square-integrable martingale-difference sequence (for all $x\in\mathcal M $, $\E[\epsilon_n(x) \vert \F_{n-1}]=0$) \citep{bonnabel2013stochastic}.
\end{itemize}

Lastly, we will assume the vector fields $\{\nabla f_n\}_{n \geq 1}$ are individually Lipschitz and have
bounded covariance at the optimum $x_\star$:
\begin{assumption} \label{assump:noiseLip}
 There exists $L > 0$ such that for all $x \in \mathcal X$ and $n\geq 1$, the vector field $\nabla f_n$ satisfies
\[ \E [\Vert  \tp{x}{x_\star} \nabla f_n(x)- \nabla f_n(x_\star)\Vert^2\vert  \F_{n-1}]\leq L^2 \ \Vert R_{x_\star}^{-1}(x) \Vert^2 ,
 \]
there exists $\tau>0$ such that $ \E [\Vert  \nabla f_{n}(x) \Vert^4\vert  \F_{n-1}]\leq \tau^4$ for all $x \in \X$, and a symmetric positive-definite matrix $\Sigma$ such that,
 \[
 \E[\nabla f_n(x_\star) \otimes \nabla f_n(x_\star) \vert\mathcal F_{n-1}] = \Sigma \text{ a.s.} \]
\end{assumption}
These are natural generalizations of standard assumptions in the optimization literature \citep{Fab68} to the setting of
Riemannian manifolds\footnote{Assuming bounded gradients does not contradict Assumption~\ref{assump:strongconvpoint}, since we are constrained to the compact set $\X$.}. Note that the assumption, \\ $\E[\nabla f_n(x_\star) \otimes \nabla f_n(x_\star) \vert\mathcal F_{n-1}] = \Sigma \text{ a.s.}$ could be slightly relaxed (as detailed in \myapp{conv_rates}),
but allows us to state our main result more cleanly.

%!TEX root = main.tex
\section{Proof Sketch} \label{sec:pfsketch}
We provide an overview of the arguments that comprise the proof of Theorem \ref{thm:main} (full details are deferred to \myapp{app_pfsketch}). We highlight three key steps. First, since we assume the iterates $x_n$
produced from SGD converge to within $\sim O(\sqrt{\gamma_n})$ of $x_\star$, we can perform a
Taylor expansion of the recursion in \eq{grad_desc}, to relate the points $x_n$ on the manifold $\M$ to vectors $\Delta_n$ in the tangent space $T_{x_\star}\M$. This
generates a (perturbed) linear recursion governing the evolution of the vectors $\Delta_n \in T_{x_\star} \M$.
Recall that as $x_\star$ is unknown, $\Delta_n$ is not accessible, but is primarily a tool for our analysis. Second, we can show a fast $O(\frac{1}{n})$ convergence rate for the averaged vectors $\bar{\Delta}_n \in T_{x_\star} \M$, using techniques from the Euclidean setting.
Finally, we once again use a local expansion of \eq{ave_grad_desc} to connect the averaged tangent vectors $\bar \Delta_n$ to the streaming, Riemannian average $\tilde \Delta_n$---transferring the fast rate for the inaccessible vector $\bar{\Delta}_n$ to the computable point $\tilde x_n$.
Throughout our analysis we extensively use Assumption~\ref{assump:manifold}, which restricts the iterates $x_n$ to the subset $\X$.

\subsection{From $\M$ to $T_{x_\star}\M$ } \label{sec:pfsketch1}
We begin by linearizing the progress of the SGD iterates $x_n$ in the tangent space of $x_\star$ by considering the evolution of $\Delta_n = R_{x_\star}^{-1}(x_n)$.
\begin{itemize}
  \item First, as the $\Delta_n$ are all defined in the vector space $T_{x_\star} \M$, Taylor's theorem applied  to $R_{x_\star}^{-1} \circ R_{x_n}:T_{x_n} \M \to T_{x_\star} \M$ along with \eq{grad_desc} allows us to conclude that
  \[
  \D_{n+1} = \D_n - \gamma_{n+1} [\te{x_\star}{x_n}]^{-1} (\nabla f_{n+1}(x_n)) + O(\gamma_{n+1}^2).
  \]
  See Lemma \ref{lem:tangent_rec} for more details.
  \item Second, we use the manifold version of Taylor's theorem and appropriate Lipschitz conditions on the gradient to further expand the gradient term $ \tp{x_n}{x_\star} \nabla f_{n+1}(x_n)$ as
  \[
 \tp{x_n}{x_\star} \nabla f_{n+1}(x_n)=\Hess f(x_\star)\Delta_n + \nabla f_{n+1}(x_\star)+ \xi_{n+1}+O(\Vert \Delta_n\Vert^2),
  \]
  where the noise term is controlled as $\E[\ \xi_{n+1}\vert\mathcal F_{n}]=0$, and $\E[\Vert \xi_{n+1}\Vert^2 \vert\mathcal F_{n}]=O( \Vert\Delta_n\Vert^2)$. See Lemma \ref{lem:tangent_rec_2} for more details.
  \item Finally, we argue that the operator $ [\te{x_\star}{x_n}]^{-1}\tp{x_\star}{x_n} : T_{x_\star}\M \to  T_{x_\star}\M$ is a local isometry up to second-order terms,
  \[
    [\te{x_\star}{x_n}]^{-1}\tp{x_\star}{x_n} = I + O(\norm{\Delta_n}^2),
  \]
  which crucially rests on the fact $R$ is a second-order retraction. See Lemma \ref{lem:tangent_rec_3} for more details.
  \item Assembling the aforementioned lemmas allows us to derive a (perturbed) linear recursion, governing the tangent vectors $\{ \Delta_n \}_{n \geq 0}$ as
  \begin{equation} \label{eq:final_proof_sketch}
    \D_{n+1} = \D_n - \gamma_{n+1} \Hess f(x_\star) \D_n  -\gamma_{n+1} \nabla f_{n+1}(x_\star)  -\gamma_{n+1}\xi_{n+1}   +  O(\norm{\D_n}^2\gamma_n + \gamma_n^2).
  \end{equation}
  See Theorem \ref{thm:linear} for more details.
\end{itemize}
\subsection{Averaging in $T_{x_\star} \M$} \label{sec:pfsketch2}
Our next step is to prove both asymptotic and non-asymptotic convergence rates for a general, perturbed linear recursion (resembling \eq{final_proof_sketch}) of the form,
\begin{align}
  \D_{n}=\D_{n-1} -\gamma_n \Hess f(x_\star) \D_{n-1}+ \gamma_n (\eps_n+\xi_{n}+e_{n}),\label{eq:rec_with_error}
\end{align}
under appropriate assumptions on the error $\{ e_n \}_{n \geq 0}$ and noise $\{ \eps_n \}_{n \geq 0}$, $\{ \xi_n \}_{n \geq 0}$ sequences detailed in \myapp{conv_rates}. Under these assumptions we can derive an asymptotic rate for the average, $\bar{\Delta}_n = \frac{1}{n}\sum_{i=1}^{n} \Delta_i$, under a first-moment condition on $e_n$:
  \[
  \sqrt n \bar{\Delta}_n  \overset{D}{\to} \mathcal N (0,  \Hess f(x_\star)^{-1}\Sigma \Hess f(x_\star)^{-1}),
  \]
  and, under a slightly stronger second-moment condition on $e_n$ we have:
  \[
    \mathbb{E}[\Vert \bar{\Delta}_n \Vert ^2] \leq \frac{1}{n} \tr [\Hess f(x_\star)^{-1} \Sigma \Hess f(x_\star)^{-1}] +  O(n^{-2\alpha}) + O(n^{\alpha-2}),
  \]
  where $\Sigma$ denotes the asymptotic covariance of the noise $\eps_n$. The proof techniques are similar to those of \citet{polyak1992acceleration} and \citet{moulines2011non} so we do not detail them here. See Theorems \ref{thm:asymp_ave} and \ref{thm:nonasymp_ave} for more details.
  Note that $\bar{\Delta}_n$ is \textit{not} computable, but does have an interesting interpretation as an upper bound on the Riemannian center-of-mass, $K_n = \arg \min_{x \in \M}\sum_{i=1}^{n} \norm{R_{x}^{-1}(x_i)}^2$, of a set of iterates $\{ x_n \}_{n \geq 0}$ in $\M$
   \citep[see \mysec{com} and][for more details]{Afs11}.

\subsection{From $T_{x_\star} \M$ back to $\M$} \label{sec:pfsketch3}
Using the previous arguments, we can conclude that the averaged vector $\bar{\Delta}_n$ obeys both asymptotic and non-asymptotic Polyak-Ruppert-type results. However, $\bar{\Delta}_n$
is \textit{not} computable. Rather, $\tilde{\Delta}_n = R_{x_\star}^{-1}(\tilde{x}_n)$ corresponds to the computable, Riemannian streaming average $\tilde{x}_n$ defined in \eq{ave_grad_desc}. In order to conclude our result, we argue that $\tilde{\Delta}_n = R_{x_\star}^{-1}(\tilde{x}_n)$ and $\bar{\Delta}_n$ are close up to $O(\gamma_n)$ terms. The argument proceeds in two steps:
\begin{itemize}
  \item Using the fact that $x \to \norm{R_{x_\star}^{-1}(x)}^2$ is retraction convex we can conclude that $\E[\norm{\Delta_n}^2] = O(\gamma_n)$ implies that  $\E[\Vert \tilde{\Delta}_n\Vert^2] = O(\gamma_n)$ as well. See Lemma \ref{lem:avg_iters} for more details.
  \item Then, we can locally expand \eq{ave_grad_desc} to find that,
  \[
    \tilde{\Delta}_{n+1} = \tilde{\Delta}_n + \frac{1}{n+1}(\Delta_{n+1}-\tilde{\Delta}_n)+\tilde{e}_n,
  \]
  where $\E[\Vert \tilde{e}_n\Vert] = O(\frac{\gamma_n}{n+1})$. Rearranging and summing this recursion shows that $\tilde{\Delta}_n = \bar{\Delta}_n+e_n$ for $\E[\Vert e_n\Vert] = O(\gamma_n)$, showing these terms are close. See Lemma \ref{lem:stream_avg_iters} for details.
\end{itemize}

%!TEX root = main.tex
\section{Applications} \label{sec:application}

We now introduce two applications of our Riemannian iterate-averaging framework.
\subsection{Application to Geodesically-Strongly-Convex Functions}
\label{sec:geostrong}
In this section, we assume that $f$ is globally geodesically convex over $\X$ and take $R \equiv \Exp$, which allows the derivation of global convergence rates. This function class encapsulates interesting problems such as the matrix Karcher mean problem
 \citep{bini2013computing} which is non-convex in Euclidean space but geodesically strongly convex with an appropriate choice of metric on $\M$.

\citet{zhang2016first} show for geodesically-convex $f$, that averaged SGD with step size $\gamma_n \propto \frac{1}{\sqrt{n}}$ achieves the slow $O\big(\frac{1}{\sqrt{n}}\big)$ convergence rate. If in addition, $f$ is geodesically strongly convex on $\X$, they obtain the fast $O(\frac{1}{n})$ rate. However, their result is not \textit{algorithmically robust}, requiring a delicate specification of the step size $\gamma_n \propto \frac{1}{\mu n}$, which is often practically impossible due to a lack of knowledge of $\mu$. Assuming smoothness of $f$, our iterate-averaging framework provides a means of obtaining a robust and global convergence rate. First, we make the following assumption:
\begin{assumption} \label{assump:strongconv}
The function $f$ is $\mu$-geodesically-strongly-convex on $\X$, for $\mu>0$, and the set $\X$ is geodesically convex.
\end{assumption}
Then using our main result in Theorem \ref{thm:main}, with $\gamma_n \propto \frac{1}{n^{\alpha}}$, we have:
\begin{proposition}   \label{prop:strongconvrate}
  Let Assumptions \ref{assump:manifold},
  \ref{assump:HessianLip},
  \ref{assump:noiseunbiased},
  \ref{assump:noiseLip}, and \ref{assump:strongconv}
  hold for the iterates evolving in \eq{grad_desc} and \eq{ave_grad_desc} and take the retraction $R$ to be the exponential map $\Exp$. Then,
  \[
    \mathbb{E}[\Vert \tilde{\Delta}_n \Vert^2] \leq \frac{1}{n} \tr[\Hess f(x_\star)^{-1} \Sigma \Hess f(x_\star)^{-1}] + O(n^{-2\alpha}) + O(n^{\alpha-2}).
  \]
\end{proposition}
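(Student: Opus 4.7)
The proposition is set up to follow directly from the non-asymptotic half of Theorem \ref{thm:main}, so the plan is to verify the two hypotheses of that theorem not listed among the assumptions of the proposition: Assumption \ref{assump:strongconvpoint}, and the moment bounds $\mathbb{E}\|\Delta_n\|^2 = O(\gamma_n)$ and $\mathbb{E}\|\Delta_n\|^4 = O(\gamma_n^2)$. The first is immediate: $\mu$-geodesic strong convexity of $f$ on the geodesically convex set $\X$, together with twice-continuous differentiability (Assumption \ref{assump:HessianLip}), forces $\Hess f(x)\succeq \mu\,I$ throughout $\X$, so in particular $\langle v, \Hess f(x_\star) v\rangle \geq \mu\|v\|^2$ for every $v \in T_{x_\star}\M$. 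Because we take $R = \Exp$, moreover, $\|\Delta_n\|$ coincides with $d(x_n, x_\star)$, so the remaining task is to bound the second and fourth moments of the geodesic distance $d(x_n, x_\star)$.

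\textbf{Second moment.} I would rely on the standard Riemannian ``law-of-cosines'' comparison (as in Zhang and Sra, 2016): on a compact geodesically convex set whose sectional curvature is bounded below by some $\kappa$, the SGD update $x_{n+1} = \Exp_{x_n}(-\gamma_{n+1}\nabla f_{n+1}(x_n))$ satisfies
\begin{equation*}
d(x_{n+1}, x_\star)^2 \leq d(x_n, x_\star)^2 + 2\gamma_{n+1}\langle \nabla f_{n+1}(x_n), \Exp_{x_n}^{-1}(x_\star)\rangle + \zeta\, \gamma_{n+1}^2\, \|\nabla f_{n+1}(x_n)\|^2,
\end{equation*}
where $\zeta = \zeta(\kappa, \mathrm{diam}(\X))$ is a finite curvature-dependent constant, finite because $\X$ is compact by Assumption \ref{assump:manifold}. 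Conditioning on $\F_n$, $\mu$-geodesic strong convexity gives $\langle \nabla f(x_n), \Exp_{x_n}^{-1}(x_\star)\rangle \leq -\tfrac{\mu}{2}\, d(x_n, x_\star)^2$, and Assumption \ref{assump:noiseLip} gives $\mathbb{E}[\|\nabla f_{n+1}(x_n)\|^2\,|\,\F_n] \leq \tau^2$. This produces a Robbins--Siegmund-type recursion
\begin{equation*}
u_{n+1} \leq (1 - \mu\gamma_{n+1})\, u_n + \zeta\, \tau^2\, \gamma_{n+1}^2, \qquad u_n := \mathbb{E}[d(x_n, x_\star)^2],
\end{equation*}
which, for $\gamma_n = C/n^\alpha$ with $\alpha \in (1/2, 1)$, yields $u_n = O(\gamma_n)$ by the standard stochastic-approximation step-size bookkeeping.

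\textbf{Fourth moment and conclusion.} Squaring the one-step inequality, taking conditional expectations, and bounding the cross terms via Cauchy--Schwarz, the fourth-moment gradient bound of Assumption \ref{assump:noiseLip}, and the second-moment control just obtained, I would derive a recursion of the form
\begin{equation*}
v_{n+1} \leq (1 - 2\mu\gamma_{n+1})\, v_n + O(\gamma_{n+1}^2)\, u_n + O(\gamma_{n+1}^3)\, \sqrt{u_n} + O(\gamma_{n+1}^4), \qquad v_n := \mathbb{E}[d(x_n, x_\star)^4].
\end{equation*}
Substituting $u_n = O(\gamma_n)$ reduces the forcing term to $O(\gamma_{n+1}^3)$, and iterating as before yields $v_n = O(\gamma_n^2)$. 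With this plus the second-moment bound in hand, every hypothesis of the non-asymptotic conclusion of Theorem \ref{thm:main} is in force, and Proposition \ref{prop:strongconvrate} follows immediately. The main obstacle is the curvature correction $\zeta$: on a curved manifold the one-step expansion of the squared distance is only an inequality whose constant depends on a lower bound for the sectional curvature of $\X$, whereas in Euclidean space it is an equality with $\zeta = 1$; compactness of $\X$ keeps $\zeta$ finite, after which the remainder of the argument reduces to the familiar Euclidean step-size analysis.
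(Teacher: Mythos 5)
Your proposal follows essentially the same route as the paper: the paper's Proposition~\ref{prop:mom2} supplies exactly the second- and fourth-moment bounds you derive, via the same one-step comparison inequality (Corollary~8 of \citet{zhang2016first}) and the same unrolling argument, and Proposition~\ref{prop:strongconvrate} is then concluded by invoking Theorem~\ref{thm:main}. Two small points of divergence, both immaterial. First, you verify Assumption~\ref{assump:strongconvpoint} explicitly from geodesic strong convexity---the paper leaves this implicit---which is a welcome addition, although you misattribute twice-continuous differentiability to Assumption~\ref{assump:HessianLip} (it is actually posited in Assumption~\ref{assump:strongconvpoint} itself, while Assumption~\ref{assump:HessianLip} only states Lipschitzness of the Hessian; in practice the Lipschitz-Hessian requirement presupposes enough smoothness for your argument to go through). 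Second, for the fourth moment the paper avoids substituting the second-moment bound directly by introducing the Lyapunov function $u_n = a_n + \frac{2(3+\zeta)\upsilon^2}{\mu}\gamma_{n+1}b_n$ so that the coupled recursion in $(a_n,b_n)$ telescopes cleanly; you instead substitute $\E[d^2(x_n,x_\star)]=O(\gamma_n)$ into the one-step bound and also carry a harmless extra cross term $O(\gamma_{n+1}^3)\sqrt{u_n}$ (whereas the paper eliminates it by the AM--GM bound $4\gamma^3\zeta\upsilon^3 d \le 2\gamma^4\zeta^2\upsilon^4 + 2\gamma^2\upsilon^2 d^2$ before taking expectations). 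Both give $\E[d^4(x_n,x_\star)]=O(\gamma_n^2)$, so the difference is purely stylistic; your version is slightly more direct, the paper's slightly tighter in its constants.
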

We make several remarks.
\begin{itemize}
  \item In order to show the result, we first derive a
  slow rate of convergence for SGD,
 by arguing that $\E[d^2(x_{n},x_\star)]\leq \frac{2C  \zeta \upsilon^2}{\mu n^\alpha } + O( \exp(-c\mu n^{1-\alpha} ))$ and $
  \E[d^4(x_{n},x_\star)]\leq\frac{4C(3+\zeta)\zeta\upsilon^4}{\mu n^{2\alpha} } + O( \exp(-c\mu n^{1-\alpha} ))$ where $c, C> 0$ and $\zeta > 0$ is a geometry-dependent constant (see Proposition \ref{prop:mom2} for more details). The result follows by combining these results and Theorem \ref{thm:main}.

  \item As in Theorem \ref{thm:main} we also obtain convergence in law and the statistically optimal covariance.
  \item Importantly, taking the step size to be $\gamma_n \propto \frac{1}{\sqrt{n}}$ provides a single, robust algorithm achieving both the slow $O\big(\frac{1}{\sqrt{n}}\big)$ rate in the absence of strong convexity (by \citet{zhang2016first}) and the fast $O(\frac{1}{n})$ rate in the presence of strong convexity. Thus (Riemannian) averaged SGD automatically adapts to the strong-convexity in the problem without any prior knowledge of its existence (i.e., the value of $\mu$).
\end{itemize}
\subsection{Streaming Principal Component Analysis (PCA)} \label{sec:stream_pca}
The framework of geometric optimization is far-reaching, containing even (Euclidean) non-convex problems such as PCA. Recall the classical formulation of streaming $k$-PCA: we are given a stream of i.i.d.~symmetric positive-definite random matrices $H_n \in \rb^{d \times d}$ such that $\E H_n\!=\!H$, with eigenvalues $\{ \lambda_i \}_{1 \leq i \leq d}$ sorted in decreasing order, and hope to approximate the subspace of the top~$k$ eigenvectors, $\{ v_i \}_{1 \leq i \leq k}$. Sharp convergence rates for streaming PCA (with $k\!=\!1$) were first obtained by \citet{jain2016streaming} and \citet{shamir16b} using the randomized power method.
\citet{shamir2016fast} and \citet{AllenLi2017-streampca}
  later extended this work to the more general streaming $k$-PCA setting. These results are powerful---particularly because they provide \textit{global} convergence guarantees.

  For streaming $k$-PCA, a similar dichotomy to the convex setting exists: in the absence of an eigengap ($\lambda_k=\lambda_{k+1}$) one can only attain the slow $O\big(\frac{1}{\sqrt{n}}\big)$ rate, while the fast $O(\frac{1}{n})$ rate is achievable when the eigengap is positive ($\lambda_k>\lambda_{k+1}$).
However, as before, a practically burdensome requirement of these fast $O (\frac{1}{n} )$, global-convergence guarantees is that the step sizes of their corresponding algorithms depend explicitly on the unknown eigengap\footnote{In this example, the eigengap is analogous to the strong-convexity parameter $\mu$.} of the matrix $H$.

 By viewing the
  $k$-PCA problem as minimizing the Rayleigh quotient, $f(X) = -\frac{1}{2} \tr [X^\top H X]$, over the Grassmann manifold, we show how to apply the Riemannian iterate-averaging framework developed here to derive a fast, \textit{robust} algorithm,
\begin{align}
  X_n = R_{X_{n-1}} \left(\gamma_n H_n X_{n-1}\right) \quad \text{ and } \quad  \tilde X_n = R_{\tilde X_{n-1}}\Big(\frac{1}{n} X_{n}X_{n}^\top \tilde X_{n-1}\Big), \label{eq:robust_oja}
\end{align}
for streaming $k$-PCA. Recall that the Grassmann manifold $\G$ is the set of the $k$-dimensional subspaces of a $d$-dimensional Euclidean space which we equip with the projection-like, second-order retraction map
$R_X(V)=(X+V)[(X+V)^\top(X+V)]^{-1/2}$.
Observe that the randomized power method update \citep{OjaKar85},
$
X_n = R_{X_{n-1}}\big( \gamma_n H_n  X_{n-1} \big)
$, in \eq{robust_oja},
is almost identical to the Riemannian SGD update,
$
X_n = R_{X_{n-1}}\big(\gamma_n(I-X_{n-1} X_{n-1}^{\top}) H_n X_{n-1}\big)
$, in \eq{grad_desc}.
The principal difference between both is that in the randomized power method, the Euclidean gradient is used instead of the Riemannian gradient. Similarly, the average $ \tilde X_n = R_{\tilde X_{n-1}}\big(\frac{1}{n} X_{n}X_{n}^\top \tilde X_{n-1}\big)$, considered in \eq{robust_oja}, closely resembles the (Riemannian) streaming average in \eq{ave_grad_desc} (see \myapp{alg_stream}).

In fact we can argue that the randomized power method, Riemannian SGD, and the classic Oja iteration (the linearization of the randomized power method in $\gamma_n$) are equivalent up to $O(\gamma_n^2)$ corrections (see Lemma \ref{lem:equiv_oja}). The average $ \tilde X_n = R_{\tilde X_{n-1}}\big(\frac{1}{n} X_{n}X_{n}^\top \tilde X_{n-1}\big)$ also admits the same linearization as the Riemannian streaming average up to $O(\gamma_n)$ corrections (see Lemma \ref{lem:average_oja}).

Using results from  \citet{shamir2016fast} and  \citet{AllenLi2017-streampca} we can then argue that the randomized power method iterates satisfy a slow rate of convergence under suitable conditions on their initialization. Hence, the present framework is applicable and we can use geometric iterate averaging to obtain a local, robust, accelerated convergence rate. In the following, we will use $\{e_j \}_{1\leq j \leq k}$ to denote the standard basis vectors in $\mathbb{R}^k$.
\begin{theorem} \label{thm:oja_main}
  Let Assumption~\ref{assump:manifold} hold for the set $\X = \{ X : \Vert X_\star^\top X\Vert_F^2 \geq k-\eta \}$, for some constant $0 < \eta <\frac{1}{4}$, where $X_\star$ minimizes $f(X)$ over the $k$-Grassmann manifold. Denote, $\tilde{H}_n =  H^{-1/2} H_n H^{-1/2}$, and the 4th-order tensor $C_{ii'jj'} = \E [(v_i^\top \tilde H_n v_j) (v_{i'}^\top \tilde H_n v_{j'})]$. Further assume that $\norm{H_n}_2 \leq 1$ a.s., and that $\lambda_k > \lambda_{k+1}$. Then if $X_n$ and $\tilde{X}_n$ evolve according to \eq{robust_oja}, there exists a positive-definite matrix $C$, such that $\tilde{\Delta}_n = R_{X_{\star}}^{-1}(\tilde{X}_n)$ satisfies:
  \begin{align}
   \sqrt{n} \tilde{\Delta}_n \overset{D}{\to}  \mathcal{N}(0, C) \quad \text{ with } \quad  C = \sum_{j'=1}^k\sum_{i'=k+1}^d \sum_{j=1}^k\sum_{i=k+1}^d C_{ii'jj'} \frac{\sqrt{\lambda_i \lambda_j} \cdot \sqrt{\lambda_{i'} \lambda_{j'}}}{(\lambda_{j}-\lambda_{i}) \cdot (\lambda_{j'}-\lambda_{i'})} (v_i e_j^\top) \otimes (v_{i'} e_{j'}^\top). \notag
  \end{align}
\end{theorem}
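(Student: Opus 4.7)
The plan is to reduce Theorem~\ref{thm:oja_main} to the general CLT in Theorem~\ref{thm:main}, applied to the function $f(X) = -\tfrac{1}{2}\tr[X^\top H X]$ on the Grassmann manifold $\G$, and then to compute the asymptotic covariance $\Hess f(X_\star)^{-1} \Sigma \Hess f(X_\star)^{-1}$ explicitly. First I would verify the structural assumptions on $(\M, f)$. At $X_\star = [v_1,\ldots,v_k]$ the tangent space of $\G$ is parametrized by matrices $V = \sum_{j\leq k,\,i>k} c_{ij}\, v_i e_j^\top$, and a direct computation gives $\Hess f(X_\star)[v_i e_j^\top] = (\lambda_j - \lambda_i)\, v_i e_j^\top$, so Assumption~\ref{assump:strongconvpoint} holds with $\mu = \lambda_k - \lambda_{k+1} > 0$. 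Twice-continuous differentiability and Lipschitzness of the Hessian on the compact warm-started set $\X$ follow from the polynomial form of $f$, and the bound $\|H_n\|_2\leq 1$ a.s. gives bounded fourth moments and the Lipschitz noise condition in Assumption~\ref{assump:noiseLip}. The projection retraction $R_X(V) = (X+V)[(X+V)^\top(X+V)]^{-1/2}$ is a second-order retraction on $\G$, putting Assumption~\ref{assump:manifold} in place on $\X$.

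Next I would bridge the randomized power method iteration $X_n = R_{X_{n-1}}(\gamma_n H_n X_{n-1})$ with the Riemannian SGD iteration $R_{X_{n-1}}\bigl(\gamma_n (I - X_{n-1}X_{n-1}^\top) H_n X_{n-1}\bigr)$ via Lemma~\ref{lem:equiv_oja}: the two differ only by an $O(\gamma_n^2)$ perturbation in the tangent space of $X_\star$, which is absorbable into the error $e_n$ of the perturbed linear recursion~\eq{rec_with_error} at the moment order required by Theorem~\ref{thm:main}. The slow-rate Assumption~\ref{assump:slowrate}, together with the fourth-moment refinement needed for the non-asymptotic conclusion, is supplied by the global warm-start analyses of \citet{shamir2016fast} and \citet{AllenLi2017-streampca}: under the eigengap $\lambda_k > \lambda_{k+1}$ and the warm-start $\|X_\star^\top X_0\|_F^2 \geq k-\eta$, their results yield $\E[\|\Delta_n\|^2] = O(\gamma_n)$, $\E[\|\Delta_n\|^4] = O(\gamma_n^2)$, and containment of the iterates in $\X$ needed for Assumption~\ref{assump:manifold}.

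Then I would transfer the CLT from the Riemannian streaming average to the PCA-specific average $\tilde X_n = R_{\tilde X_{n-1}}\bigl(\tfrac{1}{n} X_n X_n^\top \tilde X_{n-1}\bigr)$ using Lemma~\ref{lem:average_oja}: its linearization in the tangent space of $X_\star$ matches that of the Riemannian streaming average up to $O(\gamma_n)$ corrections, and those contribute $\sqrt{n}\cdot O(\gamma_n) \to 0$ for $\alpha > 1/2$, hence do not affect the limit law. Applying Theorem~\ref{thm:main} then yields $\sqrt{n}\tilde{\Delta}_n \overset{D}{\to} \mathcal N(0, \Hess f(X_\star)^{-1}\Sigma \Hess f(X_\star)^{-1})$. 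The explicit form of $\Sigma$ follows by expanding the Riemannian gradient noise $\nabla f_n(X_\star) = -(I - X_\star X_\star^\top) H_n X_\star$ in the basis $\{v_i e_j^\top\}$, giving $\langle v_i e_j^\top,\Sigma\, v_{i'} e_{j'}^\top\rangle = \E[(v_i^\top H_n v_j)(v_{i'}^\top H_n v_{j'})]$. Sandwiching with the diagonal $\Hess f(X_\star)^{-1}$ (eigenvalues $1/(\lambda_j-\lambda_i)$) and rewriting the moments through $\tilde H_n = H^{-1/2} H_n H^{-1/2}$ produces the stated $\sqrt{\lambda_i\lambda_j}\sqrt{\lambda_{i'}\lambda_{j'}}$ numerator and the tensorial form of $C$.

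The main obstacle will be the careful bookkeeping of the two non-Riemannian approximations, power-method vs.\ projected SGD and PCA-average vs.\ Riemannian streaming average: in each case the $O(\gamma_n^2)$ or $O(\gamma_n)$ discrepancy must be shown, with sufficient moment control, to enter the perturbed linear recursion as an error that Theorem~\ref{thm:main} tolerates. A secondary delicate piece is promoting the warm-start condition $\|X_\star^\top X_0\|_F^2 \geq k - \eta$ to uniform containment of the random iterates in the retraction-good set $\X$ with high probability, which is where the martingale arguments of \citet{shamir2016fast} and \citet{AllenLi2017-streampca} are essential.
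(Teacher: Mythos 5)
Your proposal follows essentially the same route as the paper's own proof: Lemma~\ref{lem:oja_converge} (via the Shamir/Allen-Zhu-Li warm-start analyses) and Lemma~\ref{lem:distequiv} supply Assumption~\ref{assump:slowrate}; Lemmas~\ref{lem:equiv_oja}/\ref{lem:oja_linear} show the randomized power iterates share the same tangent-space linearization as Riemannian SGD up to $O(\gamma_n^2)$; Lemma~\ref{lem:average_oja} transfers this to the PCA-specific average; Theorem~\ref{thm:main} then delivers the CLT; and the covariance is computed by diagonalizing $\Hess f(X_\star)$ in the basis $\{v_i e_j^\top\}$ and expanding $\nabla f_n(X_\star)$. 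The only cosmetic difference is the order of operations in the covariance computation: you first express $\Sigma$ with the un-normalized moments $\E[(v_i^\top H_n v_j)(v_{i'}^\top H_n v_{j'})]$ and then substitute $\tilde H_n = H^{-1/2} H_n H^{-1/2}$, while the paper reparametrizes to $\tilde H_n$ up front so the $\sqrt{\lambda_i\lambda_j}$ factors appear immediately; both give the stated $C$. One further small point: where you propose to verify Assumptions~\ref{assump:HessianLip} and~\ref{assump:noiseLip} directly, the paper remarks this is tedious and instead re-derives the key expansion (Lemma~\ref{lem:tangent_rec_2}) for the Rayleigh quotient from first principles, bypassing those checks — a minor shortcut worth noting but not a different argument.
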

We  make the following observations:
\begin{itemize}
  \item If the 4th-order tensor satisfies\footnote{For example if $H_n = h_n h_n^\top$ for $h_n \sim \mathcal{N}(0, \Sigma)$ -- so $H_n$ is a rank-one stream of Gaussian random vectors -- this condition is satisfied. See the proof of Theorem \ref{thm:oja_main} for more details. } $C_{ii'jj'} = \kappa \delta_{ii'}\delta_{jj'}$ for constant $\kappa$, the aforementioned covariance structure simplifies to, 
  \begin{align}
    C = \kappa \sum_{j=1}^k\sum_{i=k+1}^d\frac{{\lambda_i \lambda_j}}{(\lambda_j-\lambda_i)^2} (v_i e_j^\top) \otimes (v_{i} e_{j}^\top). \notag
\end{align}
  This asymptotic variance matches the result
  of \citet{reiss2016non}, achieving the same statistical performance as the empirical risk minimizer and matching the lower bound
  of \citet{CaiMaWu13} obtained for the (Gaussian) spiked covariance model. 
  \item Empirically, even using a constant step size in \eq{robust_oja} appears to yield convergence in a variety of situations; however, we can see a numerical counterexample in \myapp{counter}. We leave it as an open problem to understand the convergence of the iterate-averaged, constant step-size algorithm in the case of Gaussian noise \citep{BouLac85}.
 \item Assumption~\ref{assump:manifold} could be relaxed using a martingale concentration result showing the iterates $X_n$ are restricted to $\X$ with high probability similar to the work of \citet{shamir16b} and \citet{AllenLi2017-streampca}.
\end{itemize}
Note that we could also derive an analogous result to Theorem \ref{thm:oja_main} for the (averaged) Riemannian SGD algorithm in \eq{grad_desc} and \eq{ave_grad_desc}. However, we prefer to present the algorithm in \eq{robust_oja} since it is simpler and directly averages the (commonly used) randomized power method.

%!TEX root = main.tex
 \section{Experiments} \label{sec:experiments}

Here, we illustrate the practical utility of our results on a synthetic, streaming $k$-PCA problem using the SGD algorithm defined in \eq{robust_oja}. We take $k=10$ and $d=50$. The stream $H_n \in \mathbb{R}^d$ is normally-distributed with a covariance matrix $H$ with random eigenvectors, and eigenvalues decaying as $1/i^\alpha+\beta$, for $i = 1, \dots, k$, and $1/(i-1)^\alpha$, for $i = i+1,\dots, d$  where $\alpha,\beta\geq0$. All results are averaged over ten repetitions.

 \paragraph{Robustness to Conditioning.}  In \myfig{synthetic} we consider two covariance matrices with different conditioning and we compare the behavior of SGD and averaged SGD for different step sizes (constant (cst), proportional to $1/\sqrt{n}$ (-1/2) and  $1/n$ (-1)). When the covariance matrix is well-conditioned, with a large eigengap (left plot), we see that SGD converges at a rate which depends on the step size whereas averaged SGD converges at a $O(1/n)$ rate independently of the step-size choice.  For poorly conditioned problems (right plot), the convergence rate deteriorates to $1/\sqrt{n}$ for non-averaged SGD with step size $1/\sqrt{n}$, and averaged SGD with both constant and $1/\sqrt{n}$ step sizes. The $1/n$ step size performs poorly with and without averaging.
 
  \begin{figure}[!ht]
 \vspace{-1cm}
\centering
\begin{minipage}[c]{.5\linewidth}
\includegraphics[width=\linewidth]{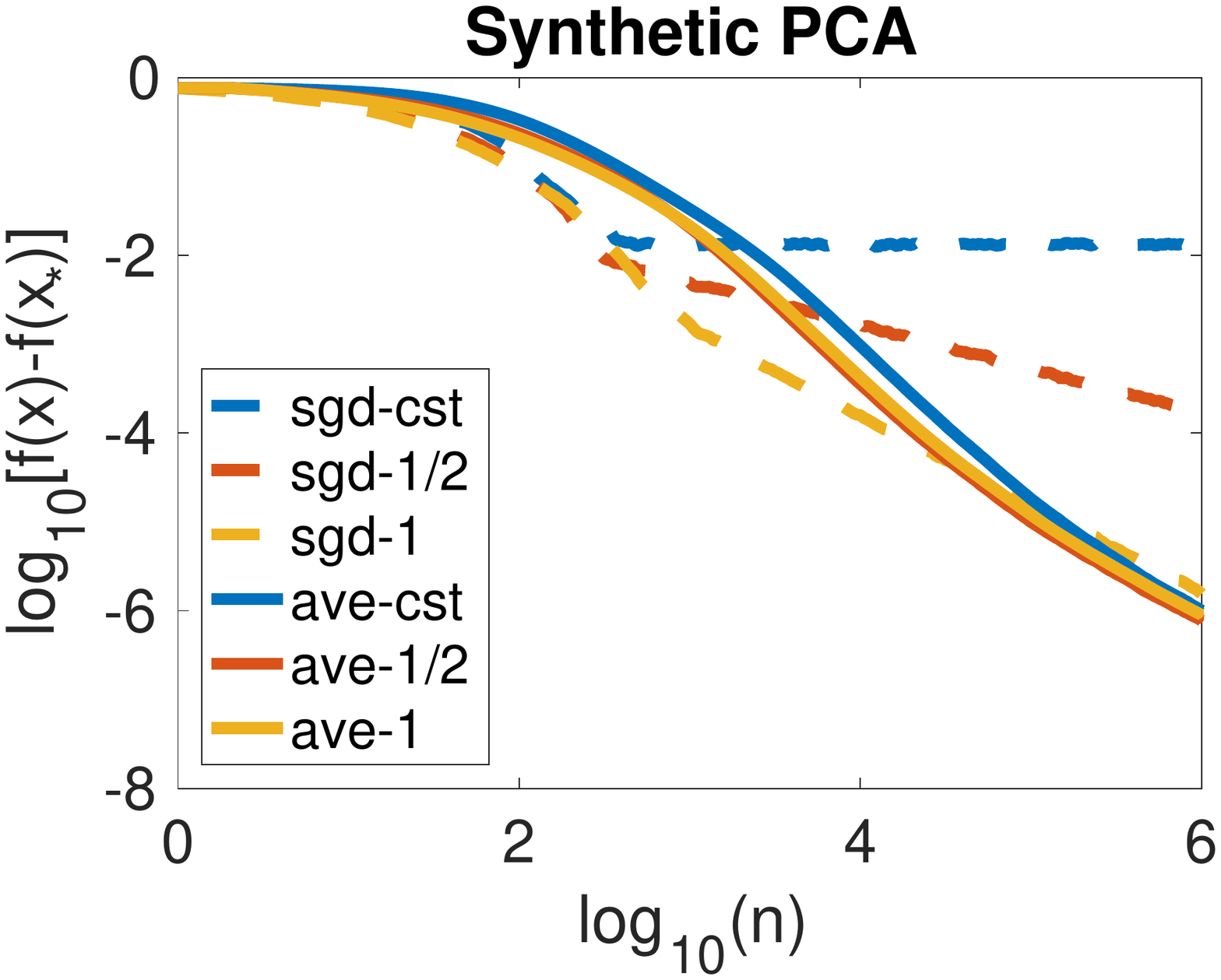}
   \end{minipage}
    \hspace*{-10pt}
   \begin{minipage}[c]{.5\linewidth}
\includegraphics[width=\linewidth]{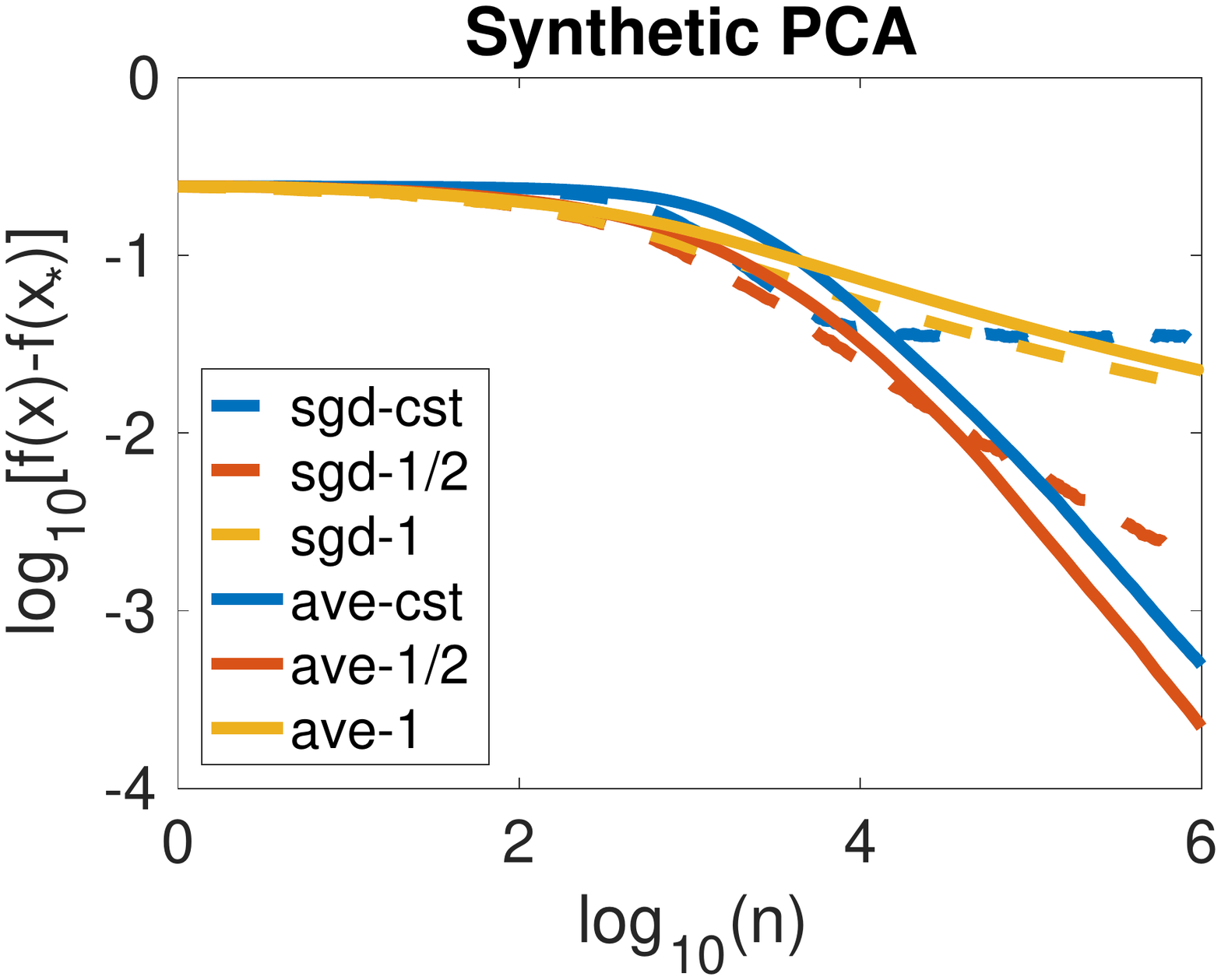}
   \end{minipage}
   \vspace{-2.0cm}
  \caption{Streaming PCA. Left: Well-conditioned problem. Right: Poorly-conditioned problem.}
     \label{fig:synthetic}
  \vspace{0cm}
\end{figure}

\paragraph{Robustness to Incorrect Step-Size.} In \myfig{syntheticrob} we consider a well-conditioned problem and compare the behavior of SGD and averaged SGD with step size proportional to $C/\sqrt n$ and $C/n$ to investigate the robustness to the choice of the constant $C$. For both algorithms we take three different constant prefactors $C/5$, $C$ and $5C$. For the step size proportional to $C/\sqrt n$ (left plot), both SGD and averaged SGD are robust to the choice of $C$. For SGD, the iterates eventually converge at a  $1/\sqrt{n}$ rate, with a constant offset proportional to $C$. However, averaged SGD enjoys the fast rate $1/n$ for all choices of $C$. For the step size proportional to $C/n$ (right plot), if $C$ is too small, the rate of convergence is extremely slow for SGD and averaged SGD.

 \begin{figure}[!ht]
 \vspace{-1cm}
\centering
\begin{minipage}[c]{.5\linewidth}
\includegraphics[width=\linewidth]{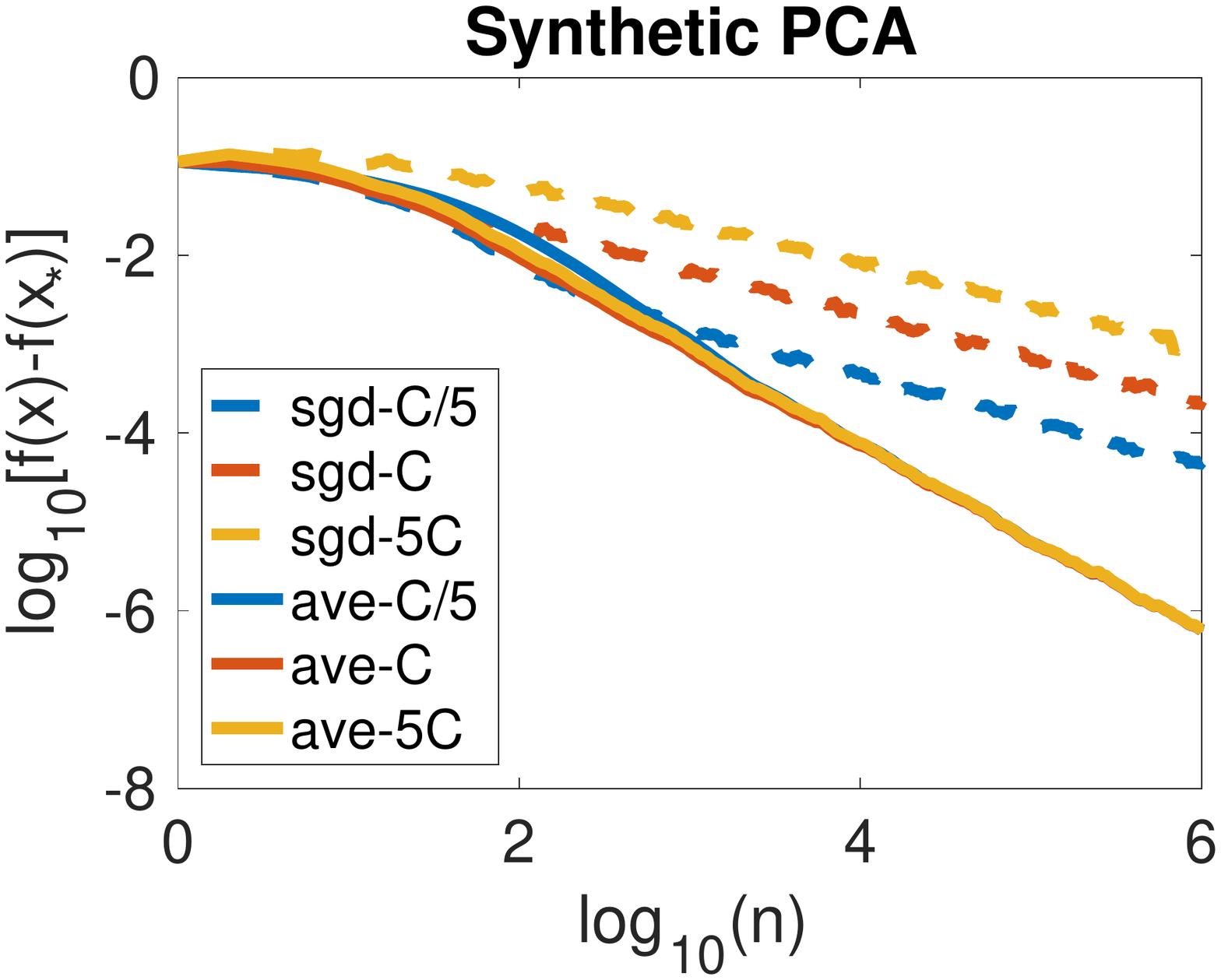}
   \end{minipage}
   \hspace*{-10pt}
   \begin{minipage}[c]{.5\linewidth}
\includegraphics[width=\linewidth]{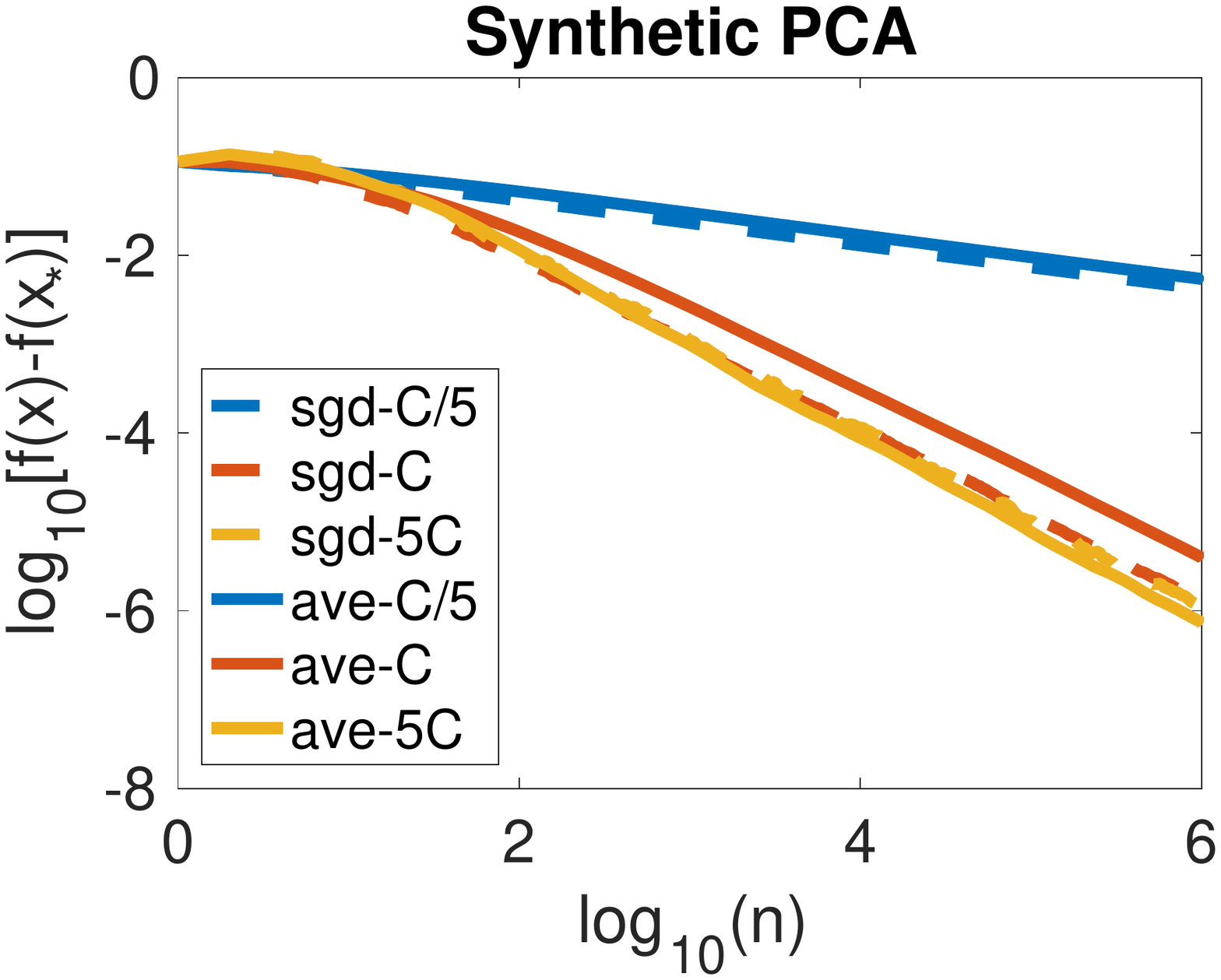}
   \end{minipage}
   \vspace{-2.0cm}
  \caption{Robustness to constant in step size. Left:
   step size proportional to $n^{\!-\!1/2}$. Right:
   step size proportional to $n^{\!-\!1}$.}
     \label{fig:syntheticrob}
\end{figure}
%!TEX root = main.tex
\section{Conclusions}
We have constructed and analyzed a geometric framework on Riemannian manifolds that generalizes the classical Polyak-Ruppert iterate-averaging scheme.  This framework is able to accelerate a sequence of slowly-converging iterates to an iterate-averaged sequence with a robust $O(\frac{1}{n})$ rate. We have also presented two applications, to the class of geodesically-strongly-convex optimization problems and to streaming $k$-PCA.
Note that our results only apply locally, requiring the iterates to be constrained to lie in a compact set $\X$. Considering a projected variant of our algorithm as in \citet{FlaBac17} is a promising direction for further research that may allow us to remove this restriction. Another interesting direction is to provide a global-convergence result for the iterate-averaged PCA algorithm presented here.

\subsection*{Acknowledgements}
The authors thank Nicolas Boumal and John Duchi for helpful discussions. Francis Bach acknowledges support from the European Research Council (grant SEQUOIA 724063), and Michael Jordan acknowledges support from the Mathematical Data Science program of the Office of Naval Research under grant number N00014-15-1-2670. 

%\newpage
\bibliographystyle{plainnat}
\bibliography{onlinepca}

\begin{thebibliography}{51}
\providecommand{\natexlab}[1]{#1}
\providecommand{\url}[1]{\texttt{#1}}
\expandafter\ifx\csname urlstyle\endcsname\relax
  \providecommand{\doi}[1]{doi: #1}\else
  \providecommand{\doi}{doi: \begingroup \urlstyle{rm}\Url}\fi

\bibitem[Absil and Malick(2012)]{AbsMal12}
P.-A. Absil and J.~Malick.
\newblock Projection-like retractions on matrix manifolds.
\newblock \emph{SIAM J. Optim.}, 22\penalty0 (1):\penalty0 135--158, 2012.

\bibitem[Absil et~al.(2004)Absil, Mahony, and Sepulchre]{absil2004riemannian}
P-A Absil, R.~Mahony, and R.~Sepulchre.
\newblock Riemannian geometry of {G}rassmann manifolds with a view on
  algorithmic computation.
\newblock \emph{Acta Applicandae Mathematicae}, 80\penalty0 (2):\penalty0
  199--220, 2004.

\bibitem[Absil et~al.(2007)Absil, Baker, and Gallivan]{Absil2007}
P.-A. Absil, C.G. Baker, and K.A. Gallivan.
\newblock Trust-region methods on {R}iemannian manifolds.
\newblock \emph{Foundations of Computational Mathematics}, 7\penalty0
  (3):\penalty0 303--330, Jul 2007.

\bibitem[Absil et~al.(2009)Absil, Mahony, and Sepulchre]{absil2009optimization}
P-A Absil, R.~Mahony, and R.~Sepulchre.
\newblock \emph{Optimization Algorithms on Matrix Manifolds}.
\newblock Princeton University Press, 2009.

\bibitem[Afsari(2011)]{Afs11}
B.~Afsari.
\newblock Riemannian {$L^p$} center of mass: existence, uniqueness, and
  convexity.
\newblock \emph{Proc. Amer. Math. Soc.}, 139\penalty0 (2):\penalty0 655--673,
  2011.

\bibitem[{Allen-Zhu} and Li(2017)]{AllenLi2017-streampca}
Z.~{Allen-Zhu} and Y.~Li.
\newblock First efficient convergence for streaming k-{PCA}: a global,
  gap-free, and near-optimal rate.
\newblock In \emph{Proceedings of the 58th Symposium on Foundations of Computer
  Science}, FOCS~'17, 2017.

\bibitem[Aswani et~al.(2011)Aswani, Bickel, and Tomlin]{AswBicTom11}
A.~Aswani, P.~Bickel, and C.~Tomlin.
\newblock Regression on manifolds: estimation of the exterior derivative.
\newblock \emph{Ann. Statist.}, 39\penalty0 (1):\penalty0 48--81, 2011.

\bibitem[Bach and Moulines(2011)]{moulines2011non}
F.~Bach and E.~Moulines.
\newblock Non-asymptotic analysis of stochastic approximation algorithms for
  machine learning.
\newblock In \emph{Advances in Neural Information Processing Systems}, pages
  451--459, 2011.

\bibitem[Benveniste et~al.(1990)Benveniste, Priouret, and
  M{\'e}tivier]{BenPriMet90}
A.~Benveniste, P.~Priouret, and M.~M{\'e}tivier.
\newblock \emph{Adaptive Algorithms and Stochastic Approximations}.
\newblock Springer, 1990.

\bibitem[Bini and Iannazzo(2013)]{bini2013computing}
D.~A Bini and B.~Iannazzo.
\newblock Computing the {K}archer mean of symmetric positive definite matrices.
\newblock \emph{Linear Algebra and its Applications}, 438\penalty0
  (4):\penalty0 1700--1710, 2013.

\bibitem[Bonnabel(2013)]{bonnabel2013stochastic}
S.~Bonnabel.
\newblock Stochastic gradient descent on {R}iemannian manifolds.
\newblock \emph{IEEE Transactions on Automatic Control}, 58\penalty0
  (9):\penalty0 2217--2229, 2013.

\bibitem[Bottou(1998)]{bottou-98x}
L.~Bottou.
\newblock Online algorithms and stochastic approximations.
\newblock In \emph{Online Learning and Neural Networks}. Cambridge University
  Press, Cambridge, UK, 1998.

\bibitem[Bougerol and Lacroix(1985)]{BouLac85}
P.~Bougerol and J.~Lacroix.
\newblock \emph{Products of Random Matrices with Applications to
  {S}chr\"odinger Operators}, volume~8 of \emph{Progress in Probability and
  Statistics}.
\newblock Birkh\"auser, 1985.

\bibitem[Boumal(2013)]{Bou13}
N.~Boumal.
\newblock On intrinsic {C}ram\'er-{R}ao bounds for {R}iemannian submanifolds
  and quotient manifolds.
\newblock \emph{IEEE Trans. Signal Process.}, 61\penalty0 (7):\penalty0
  1809--1821, 2013.

\bibitem[Boumal and Absil(2011)]{BouAbs11}
N.~Boumal and P.-A. Absil.
\newblock {RTRMC}: A {R}iemannian trust-region method for low-rank matrix
  completion.
\newblock In \emph{Advances in Neural Information Processing Systems 24}, pages
  406--414. 2011.

\bibitem[Cai et~al.(2013)Cai, Ma, and Wu]{CaiMaWu13}
T.~T. Cai, Z.~Ma, and Y.~Wu.
\newblock Sparse {PCA}: optimal rates and adaptive estimation.
\newblock \emph{Ann. Statist.}, 41\penalty0 (6):\penalty0 3074--3110, 2013.

\bibitem[Do~Carmo(2016)]{do2016differential}
M.~P. Do~Carmo.
\newblock \emph{Differential Geometry of Curves and Surfaces}.
\newblock Courier Dover Publications, 2016.

\bibitem[Edelman et~al.(1998)Edelman, Arias, and Smith]{edelman1998geometry}
A.~Edelman, T.~A. Arias, and S.~T. Smith.
\newblock The geometry of algorithms with orthogonality constraints.
\newblock \emph{SIAM journal on Matrix Analysis and Applications}, 20\penalty0
  (2):\penalty0 303--353, 1998.

\bibitem[Fabian(1968)]{Fab68}
V.~Fabian.
\newblock On asymptotic normality in stochastic approximation.
\newblock \emph{Ann. Math. Statist}, 39:\penalty0 1327--1332, 1968.

\bibitem[Flammarion and Bach(2017)]{FlaBac17}
N.~Flammarion and F.~Bach.
\newblock Stochastic composite least-squares regression with convergence rate
  ${O}(1/n)$.
\newblock In \emph{Proceedings of the 2017 Conference on Learning Theory},
  volume~65 of \emph{Proceedings of Machine Learning Research}, pages 831--875.
  PMLR, 07--10 Jul 2017.

\bibitem[Horn and Johnson(1990)]{horn1990matrix}
R.~A. Horn and C.~R. Johnson.
\newblock \emph{Matrix Analysis}.
\newblock Cambridge University Press, 1990.

\bibitem[Hosseini and Sra(2015)]{hosseini2015matrix}
R.~Hosseini and S.~Sra.
\newblock Matrix manifold optimization for {G}aussian mixtures.
\newblock In \emph{Advances in Neural Information Processing Systems}, pages
  910--918, 2015.

\bibitem[Huang et~al.(2015)Huang, Gallivan, and Absil]{huang2015broyden}
W.~Huang, K.~A Gallivan, and P-A Absil.
\newblock A {B}royden class of quasi-{N}ewton methods for {R}iemannian
  optimization.
\newblock \emph{SIAM Journal on Optimization}, 25\penalty0 (3):\penalty0
  1660--1685, 2015.

\bibitem[Ishteva et~al.(2011)Ishteva, Absil, Van~Huffel, and
  De~Lathauwer]{IshAbsVanDeL11}
M.~Ishteva, P.-A. Absil, S.~Van~Huffel, and L.~De~Lathauwer.
\newblock Best low multilinear rank approximation of higher-order tensors,
  based on the {R}iemannian trust-region scheme.
\newblock \emph{SIAM J. Matrix Anal. Appl.}, 32\penalty0 (1):\penalty0
  115--135, 2011.

\bibitem[Jain et~al.(2016)Jain, Jin, Kakade, Netrapalli, and
  Sidford]{jain2016streaming}
P.~Jain, C.~Jin, S.~M. Kakade, P.~Netrapalli, and A.~Sidford.
\newblock Streaming {PCA}: matching matrix {B}ernstein and near-optimal finite
  sample guarantees for {O}ja's algorithm.
\newblock In \emph{Conference on Learning Theory}, pages 1147--1164, 2016.

\bibitem[Kushner and Yin(2003)]{KusYin03}
H.~Kushner and G~G. Yin.
\newblock \emph{{Stochastic {A}pproximation and Recursive Algorithms and
  Applications}}.
\newblock Springer, 2003.

\bibitem[Moakher(2002)]{moakher2002means}
M.~Moakher.
\newblock Means and averaging in the group of rotations.
\newblock \emph{SIAM Journal on Matrix Analysis and Applications}, 24\penalty0
  (1):\penalty0 1--16, 2002.

\bibitem[Nemirovski et~al.(2008)Nemirovski, Juditsky, Lan, and
  Shapiro]{NemJudLan08}
A.~Nemirovski, A.~Juditsky, G.~Lan, and A.~Shapiro.
\newblock Robust stochastic approximation approach to stochastic programming.
\newblock \emph{SIAM J. Optim.}, 19\penalty0 (4):\penalty0 1574--1609, 2008.

\bibitem[Nesterov and Vial(2008)]{NesVia08}
Y.~Nesterov and J.-P. Vial.
\newblock Confidence level solutions for stochastic programming.
\newblock \emph{Automatica J. IFAC}, 44\penalty0 (6):\penalty0 1559--1568,
  2008.

\bibitem[Nevelson and Hasminski(1973)]{NevHas73}
M.~B. Nevelson and R.~Z. Hasminski.
\newblock \emph{Stochastic Approximation and Recursive Estimation}.
\newblock American Mathematical Society, 1973.

\bibitem[Oja(1982)]{Oja82}
E.~Oja.
\newblock Simplified neuron model as a principal component analyzer.
\newblock \emph{Journal of Mathematical Biology}, 15\penalty0 (3):\penalty0
  267--273, Nov 1982.

\bibitem[Oja and Karhunen(1985)]{OjaKar85}
E.~Oja and J.~Karhunen.
\newblock On stochastic approximation of the eigenvectors and eigenvalues of
  the expectation of a random matrix.
\newblock \emph{J. Math. Anal. Appl.}, 106\penalty0 (1):\penalty0 69--84, 1985.

\bibitem[Polyak(1990)]{Pol90}
B.~T. Polyak.
\newblock {A new method of stochastic approximation type}.
\newblock \emph{Avtomatika i Telemekhanika}, 51\penalty0 (7):\penalty0 98--107,
  1990.

\bibitem[Polyak and Juditsky(1992)]{polyak1992acceleration}
B.~T. Polyak and A.~B. Juditsky.
\newblock Acceleration of stochastic approximation by averaging.
\newblock \emph{SIAM Journal on Control and Optimization}, 30\penalty0
  (4):\penalty0 838--855, 1992.

\bibitem[Rei{\ss} and Wahl(2016)]{reiss2016non}
M.~Rei{\ss} and M.~Wahl.
\newblock Non-asymptotic upper bounds for the reconstruction error of {PCA}.
\newblock \emph{arXiv preprint arXiv:1609.03779}, 2016.

\bibitem[Ring and Wirth(2012)]{ring2012optimization}
W.~Ring and B.~Wirth.
\newblock Optimization methods on {R}iemannian manifolds and their application
  to shape space.
\newblock \emph{SIAM Journal on Optimization}, 22\penalty0 (2):\penalty0
  596--627, 2012.

\bibitem[Robbins and Monro(1951)]{robbins1951stochastic}
H.~Robbins and S.~Monro.
\newblock A stochastic approximation method.
\newblock \emph{The Annals of Mathematical Statistics}, pages 400--407, 1951.

\bibitem[Ruppert(1988)]{ruppert1988efficient}
D.~Ruppert.
\newblock Efficient estimations from a slowly convergent robbins-monro process.
\newblock Technical report, Cornell University Operations Research and
  Industrial Engineering, 1988.

\bibitem[Sakai(1996)]{Sak96}
T.~Sakai.
\newblock \emph{Riemannian Geometry}, volume 149 of \emph{Translations of
  Mathematical Monographs}.
\newblock American Mathematical Society, 1996.

\bibitem[Sato et~al.(2017)Sato, Kasai, and Mishra]{sato2017riemannian}
H.~Sato, H.~Kasai, and B.~Mishra.
\newblock Riemannian stochastic variance reduced gradient.
\newblock \emph{arXiv preprint arXiv:1702.05594}, 2017.

\bibitem[Shalev-Shwartz et~al.(2009)Shalev-Shwartz, Shamir, Srebro, and
  Sridharan]{ShaShaSreSri09}
S.~Shalev-Shwartz, O.~Shamir, N.~Srebro, and K.~Sridharan.
\newblock {Stochastic convex optimization.}
\newblock In \emph{{Proceedings of the International Conference on Learning
  Theory (COLT)}}, 2009.

\bibitem[Shamir(2016{\natexlab{a}})]{shamir16b}
O.~Shamir.
\newblock Convergence of stochastic gradient descent for {PCA}.
\newblock In \emph{Proceedings of The 33rd International Conference on Machine
  Learning}, volume~48 of \emph{Proceedings of Machine Learning Research},
  pages 257--265. PMLR, 20--22 Jun 2016{\natexlab{a}}.

\bibitem[Shamir(2016{\natexlab{b}})]{shamir2016fast}
O.~Shamir.
\newblock Fast stochastic algorithms for {SVD} and {PCA}: convergence
  properties and convexity.
\newblock In \emph{International Conference on Machine Learning}, pages
  248--256, 2016{\natexlab{b}}.

\bibitem[Smith(2005)]{smi05}
S.~T. Smith.
\newblock Covariance, subspace, and intrinsic {C}ram\'er-{R}ao bounds.
\newblock \emph{IEEE Trans. Signal Process.}, 53\penalty0 (5):\penalty0
  1610--1630, 2005.

\bibitem[Sun et~al.(2017)Sun, Qu, and Wright]{SunQinWri17}
J.~Sun, Q.~Qu, and J.~Wright.
\newblock Complete dictionary recovery over the sphere {II}: recovery by
  {R}iemannian trust-region method.
\newblock \emph{IEEE Trans. Inform. Theory}, 63\penalty0 (2):\penalty0
  885--914, 2017.

\bibitem[Udriste(1994)]{udriste1994convex}
C.~Udriste.
\newblock \emph{Convex Functions and Optimization Methods on Riemannian
  Manifolds}, volume 297.
\newblock Springer Science \& Business Media, 1994.

\bibitem[Van~der Vaart(1998)]{van1998asymptotic}
Aad~W Van~der Vaart.
\newblock \emph{Asymptotic statistics}, volume~3.
\newblock Cambridge university press, 1998.

\bibitem[Waldmann(2012)]{waldmann2012geometric}
S.~Waldmann.
\newblock Geometric wave equations.
\newblock \emph{arXiv preprint arXiv:1208.4706}, 2012.

\bibitem[Yang(1995)]{Yan95}
B.~Yang.
\newblock Projection approximation subspace tracking.
\newblock \emph{Trans. Sig. Proc.}, 43\penalty0 (1):\penalty0 95--107, January
  1995.

\bibitem[Zhang and Sra(2016)]{zhang2016first}
H.~Zhang and S.~Sra.
\newblock First-order methods for geodesically convex optimization.
\newblock In \emph{Conference on Learning Theory}, pages 1617--1638, 2016.

\bibitem[Zhang et~al.(2016)Zhang, Reddi, and Sra]{zhang2016riemannian}
H.~Zhang, S.~J. Reddi, and S.~Sra.
\newblock Riemannian {SVRG}: fast stochastic optimization on {R}iemannian
  manifolds.
\newblock In \emph{Advances in Neural Information Processing Systems}, pages
  4592--4600, 2016.

\end{thebibliography}

\newpage
\onecolumn
\appendix
%!TEX root = main.tex
\section{Appendices}

In \myapp{main_proof} we provide the proof of Theorem \ref{thm:main}. In \myapp{app_pfsketch} we prove the relevant lemmas mirroring the proof sketch in \mysec{pfsketch}. In \myapp{appapp} we provide proofs of the results for the application discussed in \mysec{geostrong} about geodesically-strongly-convex optimization. \mysec{stream_pca_app} contains background and proofs of results discussed in \mysec{stream_pca} regarding streaming $k$-PCA. \mysec{counter} contains further experiments on synthetic PCA showing a counterexample to the convergence of averaged, constant step-size SGD mentioned in \mysec{experiments} in the main text.

Throughout this section we will denote a sequence of vectors $X_{n}$ to be $X_{n} = O(f_n)$, for scalar functions $f_n$,
if there exists a constant $C>0$, such that $\norm{X_{n+1}} \leq C f_n$ for all $n \geq 0$ almost surely.
\section{Proofs for \mysec{results}} \label{sec:main_proof}
Here we provide the proof of Theorem \ref{thm:main}. The first statement follows by combining Theorems \ref{thm:linear}, \ref{thm:asymp_ave}, Lemma \ref{lem:stream_avg_iters} and Slutsky's theorem. The second statement follows by using Theorems \ref{thm:linear}, \ref{thm:nonasymp_ave}, and Lemma \ref{lem:stream_avg_iters_4mom}.

\section{Proofs for \mysec{pfsketch}} \label{sec:app_pfsketch}
Here we detail the proofs results necessary to conclude our main result sketched in
\mysec{pfsketch}.

\subsection{Proofs in \mysec{pfsketch1}}
We begin with the proofs of the geometric lemmas detailed in \mysec{pfsketch1},
showing how to linearize the progress of the SGD iterates $x_n$ in the tangent space of $x_\star$ by considering the evolution of $\Delta_n = R_{x_\star}^{-1}(x_n)$. Note that since by Assumption~\ref{assump:manifold}, for all $n\geq0$, $x_n\in\X$, the vectors $\Delta_n$ all belong to the compact set $R_{x_\star}^{-1}(\X)$.

% Moreover considering iterates restricted to a compact $\X$, using Assumption~\ref{assump:manifold}, no global conditions on curvature are necessary for Theorem \ref{thm:main}, since as smooth function restricted to $\X$ they will be bounded.

In the course of our argument it will be useful to consider the function $F_{x, y}(\eta_x) = R_y^{-1} \circ R_x(\eta_x) : T_x \M \to T_x \M$
 (which crucially
is a function defined on a vector space) and further $D R_x(\eta_x) : T_x \M \to T_{R_x(\eta_x)} \M$, the linearization of the retraction map. The first recursion we will study is that of $\Delta_{n+1} = F_{x_n, x_\star}(-\gamma_{n+1} \nabla f_{n+1}(x_n))$:
\begin{lemma}\label{lem:tangent_rec}
Let Assumption \ref{assump:manifold} hold. If $\Delta_n = R_{x_\star}^{-1}(x_n)$ for a sequence of iterates evolving as in  \eq{grad_desc}, then there exists a constant $C_{\text{manifold}}>0$ depending on $\X$ such that,
  \[
  \D_{n+1} = \D_n - \gamma_{n+1} [\te{x_\star}{x_n}]^{-1} (\nabla f_{n+1}(x_n)) + \gamma_{n+1} g_n,
  \]  where $\norm{g_n} \leq \gamma_{n+1} C_{\text{manifold}} \norm{\nabla f_{n+1}(x_n)}^2$.
\end{lemma}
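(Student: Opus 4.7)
The plan is to work with the composite map $F_{x_n,x_\star}\colon T_{x_n}\M \to T_{x_\star}\M$ defined by $F_{x_n,x_\star}(\eta) = R_{x_\star}^{-1} \circ R_{x_n}(\eta)$. Because the input and output spaces are vector spaces, $F_{x_n,x_\star}$ is an ordinary smooth map between Euclidean spaces (wherever it is well-defined), and the standard Taylor expansion with integral remainder applies. Observe that $\Delta_{n+1} = R_{x_\star}^{-1}(x_{n+1}) = F_{x_n,x_\star}(-\gamma_{n+1}\nabla f_{n+1}(x_n))$ and $F_{x_n,x_\star}(0) = \Delta_n$, so Taylor expanding in the argument $\eta = -\gamma_{n+1}\nabla f_{n+1}(x_n)$ around $0$ yields exactly the form claimed, provided we identify the first derivative and control the remainder uniformly.

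The next step is to compute $DF_{x_n,x_\star}(0)$. By the chain rule, $DF_{x_n,x_\star}(0) = DR_{x_\star}^{-1}(x_n)\circ DR_{x_n}(0)$, and the first-order retraction condition $DR_{x_n}(0) = \mathrm{id}_{T_{x_n}\M}$ reduces this to $DF_{x_n,x_\star}(0) = DR_{x_\star}^{-1}(x_n)$. By the inverse function theorem (applicable since, by Assumption~\ref{assump:manifold}, $R_{x_\star}$ is a diffeomorphism on a neighbourhood containing $R_{x_\star}^{-1}(\X)$), this inverse differential is precisely $[DR_{x_\star}(R_{x_\star}^{-1}(x_n))]^{-1} = [\te{x_\star}{x_n}]^{-1}\colon T_{x_n}\M \to T_{x_\star}\M$, matching the definition of the vector-transport map.

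With the linear term identified, the remainder term $g_n$ will be of the form $g_n = \gamma_{n+1}^{-1} R(-\gamma_{n+1}\nabla f_{n+1}(x_n))$, where $R(\eta) = F_{x_n,x_\star}(\eta) - F_{x_n,x_\star}(0) - DF_{x_n,x_\star}(0)\eta$ satisfies $\|R(\eta)\| \leq \tfrac{1}{2}\sup_{t \in [0,1]}\|D^2 F_{x_n,x_\star}(t\eta)\|\cdot\|\eta\|^2$. The core technical point, and the one small obstacle, is to extract a single constant $C_{\text{manifold}}$ independent of $n$. For this I will use that $\X$ is compact and totally retractive (Assumption~\ref{assump:manifold}), so that the set of admissible base points $x_n \in \X$ and of tangent arguments $\eta$ with $R_{x_n}(\eta) \in \X$ lies in a compact set on which the smooth maps $R$ and $R^{-1}$ (and hence $F_{x_n,x_\star}$ and its Hessian) have uniformly bounded derivatives. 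This yields $\|R(\eta)\| \leq C_{\text{manifold}}\|\eta\|^2$ uniformly in $x_n \in \X$, and substituting $\eta = -\gamma_{n+1}\nabla f_{n+1}(x_n)$ delivers $\|g_n\| \leq C_{\text{manifold}}\gamma_{n+1}\|\nabla f_{n+1}(x_n)\|^2$. The only delicate issue is the joint continuity of $D^2F_{x_n,x_\star}$ in both arguments $x_n$ and $\eta$, which follows from standard smooth-manifold arguments once one works in a local chart on $\X$ and exploits compactness.
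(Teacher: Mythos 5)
Your proposal is correct and follows essentially the same route as the paper's proof: define $F_{x_n,x_\star} = R_{x_\star}^{-1}\circ R_{x_n}$ as a map between the vector spaces $T_{x_n}\M$ and $T_{x_\star}\M$, identify $DF_{x_n,x_\star}(0) = [\te{x_\star}{x_n}]^{-1}$ via the chain rule, the first-order retraction property, and the inverse function theorem, and then bound the Taylor remainder uniformly by appealing to smoothness of the retraction together with compactness and total retractivity of $\X$. The only cosmetic difference is that you spell out the integral-remainder form of Taylor's theorem and the joint-continuity-in-a-chart argument, while the paper simply invokes a uniform bound $C_{\text{manifold}}$ on the Hessian of $F_{x,y}$ over the relevant compact set.
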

\begin{proof}
  Using the chain rule for the differential of a mapping on a manifold and the first-order property of the retraction ($D R_x (0_x) = \idm_{T_x\M}$)
  we have that:
  \begin{multline*}
      D F_{x, y}(0_x) = D(R_y^{-1} \circ R_x)(0_x) = D R_{y}^{-1}(R_x(0_x)) \circ D R_x(0_x) \\= [D R_y(R_{y}^{-1}(R_x(0_x)))]^{-1} \circ \idm_{T_x \M} =    [DR_y(R_{y}^{-1}(x))]^{-1}=[\te{y}{x}]^{-1},
  \end{multline*}
  where the last line follows by the inverse function theorem on the manifold $\M$.
 Smoothness of the retraction  implies the Hessian of $F_{x, y}$ is uniformly bounded in norm on the compact set $F_{x, y}^{-1}(R_{x_\star}^{-1}(\X))$. We use $C_{\text{manifold}}$ to denote a bound on the operator norm of the Hessian of $F_{x, y}$ in this compact set. In the present situation,
  we have that $\Delta_{n+1} = F_{x_n, x_\star}(-\gamma_{n+1} \nabla f_{n+1}(x_n))$. Since $F_{x_n, x_\star}$ is a function defined on vector spaces the result follows using a Taylor expansion,
  $F_{x_n, x_\star}(0)=\Delta_n$, the previous statements regarding the differential of $F_{x_n, x_\star}$, and the uniform bounds on the second-order terms. In particular, the second-order term in the Taylor expansion is upper bounded as $\gamma_{n+1} C_{\text{manifold}} \norm{\nabla f_{n+1}(x_n)}^2$ so the bound on the error term $g_n$ follows.
\end{proof}
We now further develop this recursion by also considering an asymptotic expansion of the gradient term near the optima.
\begin{lemma}\label{lem:tangent_rec_2}
Let Assumptions \ref{assump:HessianLip}, \ref{assump:noiseunbiased},  and \ref{assump:noiseLip} hold.
If $\Delta_n = R_{x_{\star}}^{-1}(x_n)$ for a sequence of iterates evolving as in \eq{grad_desc},  then there exist sequences $\{ \tilde \xi_n \}_{n \geq 0}$ and $\{ \tilde e_n \}_{n \geq 0}$ such that
  \[
 \tp{x_n}{x_\star} \nabla f_{n+1}(x_n)=\Hess f(x_\star)\Delta_n + \nabla f_{n+1}(x_\star)+\tilde \xi_{n+1}+\tilde e_{n+1},
  \]
where $\E[\ \tilde \xi_{n+1}\vert\mathcal F_{n}]=0$, $\E[\Vert \tilde\xi_{n+1}\Vert^2 \vert\mathcal F_{n}]\leq 4 L \Vert \Delta_n\Vert^2$ and $\tilde e_{n+1}$ such that $\Vert\tilde e_{n+1}\Vert \leq \frac{M}{2}\Vert \Delta_n\Vert^2$.
\end{lemma}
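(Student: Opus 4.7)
The strategy is to split the parallel-transported noisy gradient into a deterministic true-gradient part plus a mean-zero stochastic fluctuation, and then to Taylor expand the deterministic part around $x_\star$. Using that $\nabla f(x_\star)=0$ (since $x_\star$ is a local minimum), I would define
\begin{align*}
\tilde{e}_{n+1} &:= \tp{x_n}{x_\star}\nabla f(x_n) - \Hess f(x_\star)\,\D_n,\\
\tilde{\xi}_{n+1} &:= \tp{x_n}{x_\star}\bigl[\nabla f_{n+1}(x_n) - \nabla f(x_n)\bigr] - \bigl[\nabla f_{n+1}(x_\star) - \nabla f(x_\star)\bigr],
\end{align*}
so that the decomposition claimed by the lemma holds by construction. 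The remaining work is to verify the two stated bounds.

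For the fluctuation term, Assumption~\ref{assump:noiseunbiased} gives $\E[\nabla f_{n+1}(y)\mid\F_n]=\nabla f(y)$ for every $\F_n$-measurable $y\in\X$. Evaluated at $y=x_n$ and $y=x_\star$, this makes both brackets in the definition of $\tilde{\xi}_{n+1}$ conditionally centered, so $\E[\tilde{\xi}_{n+1}\mid\F_n]=0$. For the second-moment bound I would regroup $\tilde{\xi}_{n+1}$ as the difference
\[
\bigl[\tp{x_n}{x_\star}\nabla f_{n+1}(x_n) - \nabla f_{n+1}(x_\star)\bigr] - \bigl[\tp{x_n}{x_\star}\nabla f(x_n) - \nabla f(x_\star)\bigr],
\]
noting that the second bracket is the conditional expectation of the first. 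The elementary inequality $\|a-b\|^2\le 2\|a\|^2+2\|b\|^2$, Assumption~\ref{assump:noiseLip} applied at $x=x_n$ (for which $R_{x_\star}^{-1}(x_n)=\D_n$), and Jensen's inequality for conditional expectation to transfer the same bound to the second bracket, together yield $\E[\|\tilde{\xi}_{n+1}\|^2\mid\F_n]\le 4L^2\|\D_n\|^2$.

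For the deterministic remainder, I would run a standard manifold Taylor argument along the geodesic $c(t)=\Exp_{x_\star}(t\D_n^g)$ connecting $x_\star$ to $x_n$, where $\D_n^g:=\Exp_{x_\star}^{-1}(x_n)$. Metric compatibility of the Levi-Civita connection with parallel transport, together with $\dot c(t)=\tp{x_\star}{c(t)}\D_n^g$ along a geodesic, yields
\[
\tp{x_n}{x_\star}\nabla f(x_n) - \Hess f(x_\star)\D_n^g = \int_0^1 \bigl[\tp{c(t)}{x_\star}\Hess f(c(t))\tp{x_\star}{c(t)} - \Hess f(x_\star)\bigr]\D_n^g\,dt.
\]
The integrand is bounded in operator norm by $M\|R_{x_\star}^{-1}(c(t))\|\le tM\|\D_n\|$ via Assumption~\ref{assump:HessianLip}, after using that the second-order-retraction property at $x_\star$ (Assumption~\ref{assump:manifold}) makes $\|R_{x_\star}^{-1}(c(t))\|$ agree with the geodesic distance $t\|\D_n^g\|$ up to higher-order corrections. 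Integrating over $t\in[0,1]$ and replacing $\D_n^g$ by $\D_n$ (which differ by an $O(\|\D_n\|^3)$ term, again by the second-order retraction) gives $\|\tilde{e}_{n+1}\|\le \tfrac{M}{2}\|\D_n\|^2$.

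The main obstacle will be the careful bookkeeping in the last step: one needs to shuttle between the exponential map (along which the Taylor integration is cleanest) and the retraction $R$ (in terms of which $\D_n$ and Assumption~\ref{assump:HessianLip} are naturally stated), and verify that the cubic corrections introduced by the second-order retraction condition get absorbed without inflating the constant $\tfrac{M}{2}$. Modulo these geometric manipulations, the decomposition of $\tp{x_n}{x_\star}\nabla f_{n+1}(x_n)$ and the stochastic bound on $\tilde{\xi}_{n+1}$ are essentially linear-algebraic.
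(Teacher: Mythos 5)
Your proof takes essentially the same approach as the paper: you use the same split of the transported noisy gradient into a deterministic Taylor remainder $\tilde e$ and a centered fluctuation $\tilde\xi$, the same Cauchy--Schwarz/Jensen regrouping for the $\tilde\xi$ bound (and you correctly obtain $4L^2\Vert\Delta_n\Vert^2$, fixing the paper's typo ``$4L$''), and the same integral form of the manifold Taylor theorem for $\tilde e$ that the paper invokes by citing \citet[Lemma~7.4.8]{absil2009optimization}. The exponential-vs-retraction bookkeeping you flag is a genuine imprecision that the paper's proof also glosses over; it introduces only $O(\Vert\Delta_n\Vert^3)$ slack in the $\tfrac{M}{2}\Vert\Delta_n\Vert^2$ bound, which is harmless since Theorem~\ref{thm:linear} already carries cubic error terms.
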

\begin{proof}
We begin with the decomposition:
\BEAS
\Hess \!\! f(x_\star)\Delta_n&\!\!\!=\!\!\!& \tp{x_n}{x_\star} \nabla f(x_n) - \nabla f(x_\star) +[\Hess \!\!f(x_\star)\Delta_n-\tp{x_n}{x_\star} \nabla f(x_n) - \nabla f(x_\star) ] \\
&\!\!\!=\!\!\!&  \tp{x_n}{x_\star}\!\! \nabla f_{n+1}(x_n) - \nabla f_{n+1}(x_\star) +[\Hess \!\!f(x_\star)\Delta_n-\tp{x_n}{x_\star} \!\!\nabla f(x_n) \!-\! \nabla f(x_\star) ] \\
&&+  [\tp{x_n}{x_\star} \nabla f(x_n) - \nabla f(x_\star) - \tp{x_n}{x_\star} \nabla f_{n+1}(x_n) + \nabla f_{n+1}(x_\star)].
\EEAS
Under Assumption \ref{assump:HessianLip}, using the manifold version of Taylor's theorem (see \citet{absil2009optimization} Lemma 7.4.8) we have for $\tilde e_{n+1}=   \Hess f(x_\star) \Delta_n -\tp{x_n}{x_{\star}} \nabla f(x_n)$, that
\[
\Vert \tilde e_{n+1}\Vert \leq \frac{M}{2}\Vert \D_n\Vert^2.
\]
Denoting $\tilde \xi_{n+1}=  [\tp{x_n}{x_\star} \nabla f(x_n) - \nabla f(x_\star) - \tp{x_n}{x_\star} \nabla f_{n+1}(x_n) + \nabla f_{n+1}(x_\star)]$, Assumption \ref{assump:noiseunbiased} directly implies that $\E[\ \tilde \xi_{n+1}\vert\mathcal F_{n}]=0$. Finally, using Assumption \ref{assump:noiseLip} and the elementary inequality $2 \E[A \cdot B | \F_n] \leq \E[A^2 | \F_n] + \E[B^2 | \F_n]$ for square-integrable random variables $A, B$ shows that,
\BEAS
\E[\Vert \tilde\xi_{n+1}\Vert^2 \vert\mathcal F_{n}]
&\leq& 2\Vert \tp{x_n}{x_\star} \nabla f(x_n) - \nabla f(x_\star) \Vert^2 + 2 \E \left[\Vert \tp{x_n}{x_\star} \nabla f_{n+1}(x_n) - \nabla f_{n+1}(x_\star) \Vert^2 | \F_n \right] \\
&\leq &4L^2 \Vert \D_n \Vert^2.
\EEAS
\end{proof}
The last important step to conclude a linear recursion in $\Delta_n$ is to argue that the operator composition
$ [\te{x_\star}{x_n}]^{-1}\tp{x_\star}{x_n} : T_{x_\star}\M \to  T_{x_\star}\M$, is in fact an
isometry (up to 2nd-order terms) since $x_n$ is close to $x_\star$. The following argument crucially uses the fact that $R_{x_\star}$ is a second-order retraction.
\begin{lemma} \label{lem:tangent_rec_3}
 Let Assumption \ref{assump:manifold}. Let $\Delta_n = R_{x_{\star}}^{-1}(x_n)$ for a sequence  $\{ x_n \}_{n \geq 0}$ evolving as in \eq{grad_desc}. Then there exists a trilinear operator $K(\cdot,\cdot,\cdot)$ such that
  \[
    [\te{x_\star}{x_n}]^{-1}\tp{x_\star}{x_n} = I - K(\Delta_n,\Delta_n, \cdot)   + O(\norm{\Delta_n}^3).
  \]
\end{lemma}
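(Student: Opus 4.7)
The plan is to work in a local coordinate chart around $x_\star$---most conveniently normal coordinates, in which the Christoffel symbols vanish at $x_\star$ and the Riemannian exponential becomes the identity map on a neighborhood of $0 \in T_{x_\star}\mathcal{M}$. Via the chart identify $T_{x_\star}\mathcal{M}$ with $\mathbb{R}^d$, write $\eta = \Delta_n$, and note that by Assumption~\ref{assump:manifold} the variable $\eta$ ranges over the compact set $R_{x_\star}^{-1}(\mathcal{X})$, so all higher-order remainders below are uniformly bounded. Since $R$ is a second-order retraction at $x_\star$, the curve $s\mapsto R_{x_\star}(s\eta)$ has vanishing covariant acceleration at $s = 0$, which in normal coordinates means its ordinary second derivative at $s=0$ also vanishes. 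Consequently the Taylor expansion of $R_{x_\star}$ in the chart has no quadratic term:
\[
R_{x_\star}(\eta) \;=\; \eta + Q_3(\eta,\eta,\eta) + O(\|\eta\|^4),
\]
for a symmetric trilinear form $Q_3$.

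From this expansion I would first control $[\te{x_\star}{x_n}]^{-1} = [DR_{x_\star}(\eta)]^{-1}$. Differentiating in a direction $w$ gives $DR_{x_\star}(\eta)[w] = w + 3\, Q_3(\eta,\eta,w) + O(\|\eta\|^3)$, and inverting this Neumann-style on the compact set yields $[DR_{x_\star}(\eta)]^{-1}[w] = w - 3\, Q_3(\eta,\eta,w) + O(\|\eta\|^3)$. The crucial point is that the second-order-retraction hypothesis kills a potential linear-in-$\eta$ term; without it we would have a $Q_2(\eta,w)$ correction that would prevent us from matching the claimed form.

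Next I would expand $\tp{x_\star}{x_n}$ along the curve $\gamma_\eta(s) = R_{x_\star}(s\eta)$. Let $X(s) \in T_{\gamma_\eta(s)}\mathcal{M}$ be the parallel transport of $v$ read in the chart; it solves the linear ODE $\dot X^k(s) = -\Gamma^k_{ij}(\gamma_\eta(s))\,\dot\gamma_\eta^i(s)\,X^j(s)$ with $X(0)=v$. In normal coordinates $\Gamma^k_{ij}(x_\star)=0$, so $\Gamma^k_{ij}(\gamma_\eta(s)) = s\,\eta^l\,\partial_l \Gamma^k_{ij}(x_\star) + O(\|\eta\|^2)$ and $\dot\gamma_\eta(s) = \eta + O(s^2\|\eta\|^3)$. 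Plugging in gives $\dot X^k(s) = -s\,\eta^l\eta^i v^j\,\partial_l\Gamma^k_{ij}(x_\star) + O(\|\eta\|^3)$; integrating from $0$ to $1$ produces
\[
\tp{x_\star}{x_n}(v) \;=\; v + K_1(\eta,\eta,v) + O(\|\eta\|^3)
\]
for an explicit trilinear operator $K_1$. Composing the two expansions and collecting terms yields
\[
[\te{x_\star}{x_n}]^{-1}\tp{x_\star}{x_n}(v) \;=\; v - \bigl(3\,Q_3(\eta,\eta,v) - K_1(\eta,\eta,v)\bigr) + O(\|\eta\|^3),
\]
so $K(\cdot,\cdot,\cdot) := 3\,Q_3 - K_1$ is the trilinear operator claimed in the lemma.

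The main obstacle is bookkeeping: one has to ensure that the linear-in-$\eta$ contributions from $DR_{x_\star}(\eta)^{-1}$ and from parallel transport each vanish, and this is precisely where the second-order retraction assumption is essential (for the first) and normal coordinates are essential (for the second). A subsidiary issue is uniform control of remainders; since $R$ is smooth and $\eta$ lies in the compact set $R_{x_\star}^{-1}(\mathcal{X})$, the $O(\|\eta\|^3)$ error terms in the Taylor expansion of $R_{x_\star}$, in the Neumann inversion of $DR_{x_\star}(\eta)$, and in the ODE-based expansion of $\tp{x_\star}{x_n}$ can all be absorbed into a single uniform remainder without issue.
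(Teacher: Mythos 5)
Your proof is correct, and it takes a genuinely different route from the paper's. The paper argues intrinsically: it introduces $G(v)=[\tp{x}{R_x(v)}]^{-1}\te{x}{R_x(v)}$ as a map into $L(T_x\M)$ and invokes \citet[Prop.~8.1.2]{absil2009optimization} to identify $\tfrac{d}{dt}G(tv)\vert_{t=0}$ with $\nabla_{\dot\gamma(0)}DR_x=\tfrac{D^2}{dt^2}R_x(tv)\vert_{t=0}$, which the second-order retraction hypothesis kills; it then defines $K$ via the second Taylor coefficient of $G$ and inverts. You instead work in a normal-coordinate chart at $x_\star$ and expand the two factors separately: a term-by-term Taylor expansion of $R_{x_\star}$ (with the quadratic term vanishing because the covariant and ordinary accelerations agree at the center of normal coordinates and the former vanishes by the second-order hypothesis), and an ODE expansion of $\tp{x_\star}{x_n}$ (with the linear-in-$\eta$ contribution vanishing because $\Gamma(x_\star)=0$ in normal coordinates), composing and absorbing cubics into the remainder via compactness. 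The paper's argument is shorter and coordinate-free but leans on a nontrivial transport-derivative identity; yours is more elementary, self-contained, and makes fully explicit which hypothesis is responsible for annihilating each potential linear-in-$\eta$ term, at the cost of more bookkeeping. Both hinge on exactly the same two structural facts, and both produce the claimed trilinear $K$; as a minor remark, your $K=3Q_3-K_1$ can be checked to reduce to the paper's $-\tfrac16 R_{x_\star}(v,\cdot)v$ when $R=\Exp$ (where $Q_3$ vanishes in normal coordinates), though you need not verify this for the lemma as stated.
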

As noted in the proof, when the exponential map is used as the retraction, the operator $K$ is precisely the Riemann curvature tensor $R_{x_\star}(\D_n, \cdot)\D_n$ (up to a constant prefactor).
\begin{proof}
  We derive a Taylor expansion for the operator composition $[\tp{x}{y}]^{-1}\te{x}{y}$ when $y$ is close to $x$. Consider the function $G(v)= [\tp{x}{R_x(v)}]^{-1}\te{x}{R_x(v)}: T_x\M\to L(T_x\M)$ where $L(T_x\M)$ denotes the set of linear maps on the vector space $T_x\M$. Now, recall that $\tp{x}{R_x(tv)}$ is precisely the parallel translation operator along the curve $\gamma(t)=R_y(tv)$. From Proposition 8.1.2 by \citet{absil2009optimization}, we have that
  \[
  \frac{d}{dt} G(tv)\vert_{t=0} =   \frac{d}{dt}  [\tp{x}{R_x(tv)}]^{-1}\te{x}{R_x(tv)}\vert_{t=0} = \nabla_{\dot \gamma(0)} D R_y,
  \]
  where $\nabla$ denotes the Levi-Civita connection (see also the proof of \citet[Lemma 7.4.7]{absil2009optimization} and \citet[Chapter 2, Exercise 2]{do2016differential}). Using the definition of the covariant derivative $\nabla_{v}$ along a vector $v$, and of the acceleration vector field $\dot{\gamma}$ \citep[Section 5.4]{absil2009optimization}
  we have that
  \[
   \nabla_{\dot \gamma(0)} D R_y= \frac{D}{dt} D R_y(\gamma(t))\vert_{t=0} = \frac{D^2}{dt^2} R_y(tv)\vert_{t=0}=0,
  \]
  since $R$ is a second-order retraction. Thus, $  \frac{d}{dt} G(tv)\vert_{t=0}=0$.

 We use $K$ to denote the symmetric trilinear map $d^2G(0)$, where $K(v,v,\cdot)=\frac{1}{2}\frac{d^2}{dt^2} G(tv)\vert_{t=0}$. Thus, since $G$ is smooth and the iterates are restricted to $\X$ by Assumption \ref{assump:manifold},
 a Taylor expansion gives, for $v\in R_{x_\star}^{-1}(\X)$,
 $
 G(v)=G(0)+K(v,v,\cdot)+O(\Vert v \Vert^3)
 $.
 For $x=x_\star$ and $v=\Delta_n$, this yields
 \[
 [\tp{x_\star}{x_n}]^{-1}\te{x_\star}{x_n}= I+K(\Delta_n,\Delta_n,\cdot)+O(\Vert \D_n \Vert^3).
 \]
 Lastly, as
  $  [\te{x_\star}{x_n}]^{-1}\tp{x_\star}{x_n} = \left([\tp{x_\star}{x_n}]^{-1} \te{x_\star}{x_n} \right)^{-1} = \left( I + K(\Delta_n,\D_n, \cdot) + O(\norm{\Delta_n}^3)) \right)^{-1} = I -  K(\Delta_n, \D_n,\cdot)  + O(\norm{\Delta_n}^3)$
  the conclusion follows.
In the special case the exponential map is used as retraction, \citet[][Theorem A.2.9]{waldmann2012geometric}  directly relates $K$ to the Riemann curvature tensor. They show $K(v,v,\cdot)=-\frac{1}{6}R_{x_\star}(v,\cdot)v$ for $v\in T_{x_\star}\M$. However the result by \citet{waldmann2012geometric} provides the Taylor expansion up to arbitary order in $\Vert v \Vert$.
\end{proof}
Assembling Lemmas  \ref{lem:tangent_rec}, \ref{lem:tangent_rec_2} and \ref{lem:tangent_rec_3} we obtain the desired linear recursion:
\begin{theorem}\label{thm:linear}
  Let Assumptions \ref{assump:manifold}, \ref{assump:HessianLip}, \ref{assump:noiseunbiased},  and \ref{assump:noiseLip} hold. If $\Delta_n = R_{x_{\star}}^{-1}(x_n)$ for a sequence of iterates evolving as in \eq{grad_desc}, then there exists a martingale-difference sequence $\{ \xi_{n} \}_{n \geq 0}$ satisfying  $\E[\xi_{n+1}\vert \F_{n}]=0$,
$ \E[\Vert \xi_{n+1}\Vert^2 \vert \F_{n}] = O(\norm{\D_n}^2) $, and an error sequence $\{ e_n \}_{n \geq 0}$ satisfying $\mathbb{E}[ \norm{ e_{n+1} } | \F_{n} ] \Vert = O(\norm{\Delta_n}^2 + \gamma_{n+1})$ and
$\mathbb{E}[ \norm{ e_{n+1} }^2 | \F_n ] \Vert = O(\norm{\Delta_n}^4 + \gamma_{n+1}^2)$ such that
  \[
    \D_{n+1} = \D_n - \gamma_{n+1} \Hess f(x_\star) \Delta_n  -\gamma_{n+1} \nabla f_{n+1}(x_\star) \\ -\gamma_{n+1}\xi_{n+1}  -\gamma_{n+1} e_{n+1}.
  \]
\end{theorem}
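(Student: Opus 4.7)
The plan is to assemble the three geometric lemmas above into one perturbed linear recursion, following the three bullet points of \mysec{pfsketch1}. Starting from the Taylor expansion in Lemma \ref{lem:tangent_rec},
\[
\D_{n+1} = \D_n - \gamma_{n+1}[\te{x_\star}{x_n}]^{-1}(\nabla f_{n+1}(x_n)) + \gamma_{n+1} g_n,
\]
I use the fact that $\tp{x_n}{x_\star}$ is an isometry---so $[\tp{x_n}{x_\star}]^{-1} = \tp{x_\star}{x_n}$---to insert $\tp{x_\star}{x_n}\circ\tp{x_n}{x_\star}$ inside the gradient, allowing me to replace $\tp{x_n}{x_\star}\nabla f_{n+1}(x_n)$ by its expansion from Lemma \ref{lem:tangent_rec_2}, namely $\Hess f(x_\star)\D_n + \nabla f_{n+1}(x_\star) + \tilde\xi_{n+1} + \tilde e_{n+1}$.

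I then apply Lemma \ref{lem:tangent_rec_3}, $[\te{x_\star}{x_n}]^{-1}\tp{x_\star}{x_n} = I - K(\D_n,\D_n,\cdot) + O(\Vert\D_n\Vert^3)$, and distribute over these four pieces. The identity part produces exactly the desired leading terms $\Hess f(x_\star)\D_n$, $\nabla f_{n+1}(x_\star)$, $\tilde\xi_{n+1}$, and $\tilde e_{n+1}$; setting $\xi_{n+1} := \tilde\xi_{n+1}$, the martingale-difference property and the bound $\E[\Vert\xi_{n+1}\Vert^2\vert\F_n]\le 4L^2\Vert\D_n\Vert^2$ are inherited verbatim from Lemma \ref{lem:tangent_rec_2}. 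All other contributions---the Lipschitz residue $\tilde e_{n+1}$, the four trilinear corrections $K(\D_n,\D_n,\cdot)$ applied to each piece, the cubic residues from Lemma \ref{lem:tangent_rec_3}, and the Taylor residue $g_n$ from Lemma \ref{lem:tangent_rec}---I collect into a single error sequence $e_{n+1}$.

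The remaining work is verifying the moment bounds on $e_{n+1}$. Here $\Vert\tilde e_{n+1}\Vert\le\tfrac{M}{2}\Vert\D_n\Vert^2$ is deterministic; the $g_n$ term gives $\E[\Vert g_n\Vert\vert\F_n]=O(\gamma_{n+1})$ and $\E[\Vert g_n\Vert^2\vert\F_n]=O(\gamma_{n+1}^2)$ via Jensen applied to the fourth-moment bound in Assumption~\ref{assump:noiseLip}; and each trilinear correction is $O(\Vert\D_n\Vert^2\cdot\Vert v\Vert)$ where $v$ is one of the four pieces, with $v = \nabla f_{n+1}(x_\star)$ and $v=\tilde\xi_{n+1}$ again controlled in conditional expectation by Assumption~\ref{assump:noiseLip} and Lemma~\ref{lem:tangent_rec_2}, respectively. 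Compactness of $\X$ from Assumption~\ref{assump:manifold} then absorbs higher powers $\Vert\D_n\Vert^j$ for $j\ge 3$ into $\Vert\D_n\Vert^2$ in the first-moment bound and for $j\ge 4$ into $\Vert\D_n\Vert^4$ in the second, producing the stated $O(\Vert\D_n\Vert^2+\gamma_{n+1})$ and $O(\Vert\D_n\Vert^4+\gamma_{n+1}^2)$ estimates.

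The main obstacle is conceptual rather than technical: the proof is essentially an assembly, but one must keep $\nabla f_{n+1}(x_\star)$ separate from $e_{n+1}$ since it will carry the covariance $\Sigma$ appearing in Theorem~\ref{thm:main}, and likewise keep the multiplicative Lipschitz noise $\tilde\xi_{n+1}$ as the distinguished ``$\xi_{n+1}$'' piece since only its $O(\Vert\D_n\Vert^2)$ conditional second moment is compatible with the subsequent averaged analysis. All other centered-or-not stochastic debris can then be safely dumped into $e_{n+1}$ once the moment bookkeeping above has been carried out.
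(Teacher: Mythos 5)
Your proof follows essentially the same route as the paper: assemble Lemmas \ref{lem:tangent_rec}, \ref{lem:tangent_rec_2}, and \ref{lem:tangent_rec_3} by inserting $\tp{x_\star}{x_n}\circ\tp{x_n}{x_\star}$, distribute $I-K(\D_n,\D_n,\cdot)+O(\norm{\D_n}^3)$, and verify the moment bounds. The one bookkeeping difference is that you set $\xi_{n+1}:=\tilde\xi_{n+1}$ and dump the mean-zero trilinear corrections $K(\D_n,\D_n,\nabla f_{n+1}(x_\star)+\tilde\xi_{n+1})$ into $e_{n+1}$, whereas the paper folds those into $\xi_{n+1}$ (note by linearity of $K$ in its last argument and $\F_n$-measurability of $\D_n$, these $K$-terms are themselves martingale differences). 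Your split is equally valid since, as your moment verification shows, those $K$-terms contribute only $O(\norm{\D_n}^2)$ to the conditional first moment and $O(\norm{\D_n}^4)$ to the conditional second moment of $e_{n+1}$, which is within the required budget; the paper's choice merely keeps all martingale structure visible in $\xi_{n+1}$, which makes no difference to the downstream application of Theorems \ref{thm:asymp_ave} and \ref{thm:nonasymp_ave}.
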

\begin{proof} Combining Lemmas \ref{lem:tangent_rec},  \ref{lem:tangent_rec_2} and \ref{lem:tangent_rec_3},
\BEAS
  \D_{n+1} &=& \D_n - \gamma_{n+1} [\te{x_\star}{x_n}]^{-1} (\nabla f_{n+1}(x_n)) + \gamma_{n+1} g_n \\
   &=& \D_n - \gamma_{n+1} [\tp{x_n}{x_\star}\te{x_\star}{x_n}]^{-1} \tp{x_n}{x_\star}(\nabla f_{n+1}(x_n)) + \gamma_{n+1} g_n \\
   &=& \D_n - \gamma_{n+1} [I -  K(\Delta_n,\D_n, \cdot) ] \circ (\Hess f(x_\star)\Delta_n + \nabla f_{n+1}(x_\star)+\tilde\xi_{n+1}+\tilde e_{n+1}) \\
   && + \gamma_{n+1} g_n + O( \gamma_{n+1} \Vert \D_{n}\Vert^3)\\
   &=& \D_n - \gamma_{n+1} \Hess f(x_\star)\Delta_n  -\gamma_{n+1} \nabla f_{n+1}(x_\star)\\
   &&-\gamma_{n+1}\tilde \xi_{n+1}  +{\gamma_{n+1}} K(\Delta_n,\Delta_n,\nabla f_{n+1}(x_\star)+\tilde \xi_{n+1} ) \\
      &&-\gamma_{n+1}\tilde e_{n+1} +{\gamma_{n+1}} K(\Delta_n,\Delta_n,\Hess f(x_\star) \Delta_n +\tilde e_{n+1}) \\
      && + \gamma_{n+1} g_n + O( \gamma_{n+1} \Vert \D_{n}\Vert^3).
 \EEAS
 Let $\xi_{n+1} = \tilde \xi_{n+1}  -{\gamma_{n+1}} K(\Delta_n,\Delta_n,\nabla f_{n+1}(x_\star)+\tilde \xi_{n+1} )$. By linearity of the map $K(\D_n,\D_n,\cdot)$,  $\E[\xi_{n+1}\vert \F_{n}]=0$. Moreover by smoothness of the retraction, the tensor $K$ is uniformly bounded in injective norm on the compact set $R_{x_\star}^{-1}(\X)$, so $\E[\Vert \xi_{n+1}\Vert^2 \vert \F_{n}] = O(\Vert \D_n\Vert^2)$.

 Let $e_{n+1}=\tilde{e}_{n+1} - K(\Delta_n,\Delta_n,\nabla^2 f(x_\star)+\tilde{e}_{n+1} ) - g_{n} +O( \Vert \D_{n}\Vert^3)$. Using Assumptions \ref{assump:manifold}, \ref{assump:noiseLip} and the almost sure upper bound on $\tilde {e}_{n+1}$ we have that this term satisfies
 \[
 \E[\Vert e_{n+1}\Vert ^2 | \F_n] = \O \left(\norm{\Delta_n}^4 + \gamma_{n+1}^2 \right).
 \]
\end{proof}
Note that sharper bounds may be obtained under higher-order assumptions on the moments of the noise.  This would provide a sharp constant of the leading asymptotic term of $O(\frac{1}{n})$, when the step-size $\gamma_n=\frac{1}{\sqrt{n}}$ is used.

%!TEX root = main.tex
\subsection{Proofs in \mysec{pfsketch2}} \label{sec:conv_rates}
Here we provide proofs, in the Euclidean setting, of both asymptotic and non-asymptotic Polyak-Ruppert-type averaging results. We apply these results to the tangent vectors $\Delta \in T_{x_\star}\M$ as described in \mysec{pfsketch1}.
\subsubsection{Asymptotic Convergence}
Throughout this section, we will consider a general linear recursion perturbed by a remainder term $e_n$ of the form:
\begin{align}
  \D_{n}=\D_{n-1} -\gamma_n A \D_{n-1}+ \gamma_n (\eps_n+\xi_{n}+e_{n}), \label{eq:rec_with_error1}
\end{align}
for which we will show an asymptotic convergence result under appropriate conditions.  

Note that we eventually apply these convergence results to iterates $\Delta_n \in T_{x_\star} \M$, which is a finite-dimensional vector space. In this setting, a probability measure can be defined on a vector space (with inner product) with a covariance operator implicitly depending on the inner product (via the dual map). 

We make the following assumptions on the structure of the recursion:
\begin{assumption} \label{assump:psd}
  $A$ is symmetric positive-definite matrix.
\end{assumption}
\begin{assumption} \label{assump:noise1}
    The noise process $\{ \eps_{n} \}$ is a martingale-difference process (with  $\E[\eps_n\vert\mathcal F_{n-1}]=0$  and  $\sup_n \E[\eps_n^2] < \infty$), for which    there exists $C>0$ such that $\E [\Vert \eps_n \Vert^4 | \mathcal{F}_{n-1}] \leq C$ for all $n \geq 0$ and a matrix $\Sigma \succ 0$ such that
    \[\E[\eps_n\eps_n^\top\vert\mathcal F_{n-1}]\overset{P}{\to} \Sigma.\]
\end{assumption}
\begin{assumption} \label{assump:noise2}
    The noise process $\{ \xi_{n} \}$ is a martingale-difference process (with  $\E[\xi_n\vert\mathcal F_{n-1}]=0$ and $\sup_n \E[\xi_n^2] < \infty$), and for sufficiently large $n \geq N$, there exists $K > 0$ such that
    \[ \E[\Vert \xi_n\Vert^2\vert\mathcal F_{n-1}] \leq K\gamma_n \text{ a.s. } \]
    with $\gamma_n \to 0$ as $n \to \infty$.
\end{assumption}
\begin{assumption} \label{assump:remainder}
  For $n \geq 0$
    \[
    \E[\Vert e_n \Vert] = O(\gamma_n).
    \]
\end{assumption}
\begin{assumption}\label{assump:step_size}
  $\gamma_n \to 0$, $\frac{\gamma_n-\gamma_{n-1}}{\gamma_n} = o(\gamma_n)$ and $\sum_{j=1}^{\infty} \frac{\gamma_j}{\sqrt{j}} < \infty$.
\end{assumption}
The first two conditions in Assumption \ref{assump:step_size} require that $\gamma_n$ decrease sufficiently slowly. For example $\gamma_n = \gamma t^{-\alpha}$ with $ \frac{1}{2} < \alpha < 1$ satisfy these two conditions
but the sequence $\gamma = \gamma t^{-1}$ does not.

We can now derive the asymptotic convergence rate,
\begin{theorem} \label{thm:asymp_ave}
  Let Assumptions \ref{assump:psd}, \ref{assump:noise1}, \ref{assump:noise2}, \ref{assump:remainder} and \ref{assump:step_size}
  hold for the perturbed linear recursion in Equation \eqref{eq:rec_with_error1}.
  Then,
\[
\sqrt n \bar{\Delta}_n  \overset{D}{\to} \mathcal N (0,  A^{-1}\Sigma A^{-1}).
\]
\end{theorem}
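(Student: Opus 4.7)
The plan is to invert the linear recursion so that $\sqrt n \,\bar{\D}_n$ is written as an explicit martingale sum plus several remainders, then apply a martingale central limit theorem to the leading term while showing that every other term is $o_P(1)$.

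First I would rewrite \eqref{eq:rec_with_error1} as
\[
A \D_{i-1} \;=\; \gamma_i^{-1}(\D_{i-1} - \D_i) + \eps_i + \xi_i + e_i,
\]
sum from $i=1$ to $n$, and divide by $n$. Up to boundary corrections of order $1/n$ (which arise from replacing $\tfrac{1}{n}\sum_{i=0}^{n-1}\D_i$ by $\bar{\D}_n$), this gives
\[
A\,\bar{\D}_n \;=\; \frac{T_n}{n} + \frac{1}{n}\sum_{i=1}^n \eps_i + \frac{1}{n}\sum_{i=1}^n \xi_i + \frac{1}{n}\sum_{i=1}^n e_i,
\qquad T_n := \sum_{i=1}^n \gamma_i^{-1}(\D_{i-1} - \D_i).
\]
Abel summation then produces $T_n = \gamma_1^{-1}\D_0 - \gamma_n^{-1}\D_n + \sum_{i=1}^{n-1}(\gamma_{i+1}^{-1} - \gamma_i^{-1})\D_i$, so multiplying by $\sqrt n\, A^{-1}$ decomposes $\sqrt n\,\bar{\D}_n$ into four summands that I analyze separately.

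Next I would handle each summand. For the martingale sum $n^{-1/2}\sum_i \eps_i$, Assumption~\ref{assump:noise1} gives $n^{-1}\sum_i\E[\eps_i\eps_i^\top\mid\F_{i-1}]\overset{P}{\to}\Sigma$ and the uniform $L^4$ bound supplies a Lindeberg condition, so the martingale CLT (Hall--Heyde) delivers $n^{-1/2}\sum_i \eps_i \overset{D}{\to}\mathcal N(0,\Sigma)$. For $n^{-1/2}\sum_i \xi_i$, martingale orthogonality and Assumption~\ref{assump:noise2} yield $\E\|n^{-1/2}\sum_i\xi_i\|^2 \le K n^{-1}\sum_i \gamma_i \to 0$ by Ces\`aro since $\gamma_n\to 0$. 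For $n^{-1/2}\sum_i e_i$, the summability $\sum_j \gamma_j/\sqrt j < \infty$ in Assumption~\ref{assump:step_size} combined with $\E\|e_i\|=O(\gamma_i)$, Markov, and Kronecker's lemma gives $o_P(1)$. For the telescoping term I would first establish the a priori stability bound $\E\|\D_n\|^2 = O(\gamma_n)$ by a Lyapunov/discrete-Gronwall argument on $\E\|\D_n\|^2$, using the fact that $I-\gamma_n A$ is eventually contractive (since $A\succ 0$) against a per-step noise input of order $\gamma_n^2$. Given this bound, the boundary term satisfies $\gamma_n^{-1}\D_n/\sqrt n = O_P\bigl(1/\sqrt{n\gamma_n}\bigr) \to 0$ because $n\gamma_n\to\infty$ under Assumption~\ref{assump:step_size}, the initial term $\gamma_1^{-1}\D_0/\sqrt n$ is trivially $o_P(1)$, and the Abel remainder is bounded by combining the step-size regularity $|\gamma_{i+1}^{-1}-\gamma_i^{-1}| = o(\gamma_i^{-1})$ with $\|\D_i\|=O_P(\sqrt{\gamma_i})$ and the step-size summability. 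Slutsky's theorem then concludes.

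The main obstacle is the Lyapunov-type stability estimate $\E\|\D_n\|^2 = O(\gamma_n)$ itself, because the $O(\gamma_n^2)$-per-step variance contributions from $\eps_n$, $\xi_n$, and $e_n$ must be balanced against the contraction of rate $\gamma_n$ to land on the sharp $O(\gamma_n)$ scaling, which requires a careful inductive argument that fully exploits the step-size conditions of Assumption~\ref{assump:step_size}. A secondary technical point is verifying the Lindeberg condition in the triangular-array martingale CLT, but this reduces to a routine computation from the uniform fourth-moment bound in Assumption~\ref{assump:noise1}.
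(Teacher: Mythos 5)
Your proposal takes a genuinely different route from the paper's: you invert the recursion directly and Abel-sum (the Moulines--Bach/Theorem~\ref{thm:nonasymp_ave} strategy), whereas the paper couples the perturbed recursion to the \emph{pure} linear recursion $\D^1_n$ driven by the same noise, invokes Polyak--Juditsky Theorem~1(a) for the clean part, and then shows $\sqrt{n}\,\delta_n = \sqrt{n}(\bar{\D}_n - \bar{\D}^1_n) \to 0$ a.s.\ by expanding the differenced recursion into a weighted sum $\frac{1}{\sqrt{n}}\sum_j (A^{-1}+w_j^n)e_j$ with deterministically, uniformly bounded weights $w_j^n$ (PJ Lemmas~1--2), and finishing with Tonelli--Fubini, $\sum_j \gamma_j/\sqrt j < \infty$, and Kronecker. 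Your decomposition of the martingale terms, the Ces\`aro argument for $\xi_i$, and the Kronecker argument for the $e_i$ sum are all fine and essentially parallel the paper's non-asymptotic proof.

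The genuine gap is the ``a priori stability bound'' $\E\|\D_n\|^2 = O(\gamma_n)$ that you flag as the main obstacle and propose to derive via a Lyapunov/Gronwall argument. That derivation cannot go through from the stated hypotheses: Assumption~\ref{assump:remainder} controls only the first moment $\E\|e_n\| = O(\gamma_n)$, so in the recursion for $\E\|\D_n\|^2$ the variance contribution $\gamma_n^2\E\|e_n\|^2$ (and even the cross term $\gamma_n\E[\langle \D_{n-1}, e_n\rangle]$ once you try to close it) has no available bound --- note this is exactly why the paper's non-asymptotic Theorem~\ref{thm:nonasymp_ave} imposes the strictly stronger Assumptions~\ref{assump:remainderna} ($\E\|e_n\|^2 \le M\gamma_n^2$) \emph{and} \ref{assump:slowrate2} ($\E\|\D_n\|^2 = O(\gamma_n)$ assumed outright). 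The paper's coupling approach sidesteps this: moment control is only needed for the pure linear recursion $\D^1_n$ where $e_n$ is absent, and the contribution of $e_n$ is captured entirely through the deterministically bounded weights $w_j^n$, which require only the first-moment summability $\sum_j \E\|e_j\|/\sqrt{j} < \infty$. To repair your argument under the hypotheses of Theorem~\ref{thm:asymp_ave} you would either need to add the second-moment assumption on $e_n$, or switch to the coupling decomposition so that $e_n$ never enters a second-moment recursion.
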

\begin{proof}
The argument mirrors the proof of Theorem 2 in \citet{polyak1992acceleration} so we only
sketch the primary points. Throughout we will use $C$ to denote an unimportant, global constant that may change line to line.

Consider the purely linear recursion of the form:
\begin{align}
  & \D^1_{n}=\D^1_{n-1} -\gamma_n A \D^1_{n-1}+ \gamma_n (\eps_n+\xi_{n}) \label{eq:linear_rec} \\
  & \bar{\D}^1_{n} = \frac{1}{n} \sum_{i=0}^{n-1} \D^1_{i}, \nonumber
\end{align}
which satisfies $\D^1_{0} = \D_{0}$, and approximates the perturbed recursion in Equation \eqref{eq:rec_with_error1},
\begin{align}
  & \D_{n}=\D_{n-1} -\gamma_n A \D_{n-1}+ \gamma_n (\eps_n+\xi_{n}) + \gamma_n e_{n} \label{eq:linear_rec_perturb} \\
  & \bar{\D}_{n} = \frac{1}{n} \sum_{i=0}^{n-1} \D_{i}. \nonumber
\end{align}
Now, note that we can show that
$\lim_{K \to \infty} \lim \sup_{n} \E \left[\Vert \eps_n \Vert^2 \mathbb{I}[\Vert \eps_n \Vert > K]|\F_{n-1} \right] \overset{p}{\to} 0$, using our (conditional) 4th-moment bound and the (conditional) Cauchy-Schwarz/Markov inequalities, so the relevant assumption in \citet{polyak1992acceleration} is satisfied. Then as the argument in Part 3 of the proof of Theorem 2 in
\citet{polyak1992acceleration} shows, under Assumptions \ref{assump:psd}, \ref{assump:noise1}, \ref{assump:noise2}
the conditions of Proposition (a) of Theorem 1 in \citet{polyak1992acceleration} also hold. This implies the linear process
satisfies:
\[
\sqrt n \bar{\D}^1_n  \overset{D}{\to} \mathcal N (0,  A^{-1}\Sigma A^{-1}). \notag
\]
We now argue that the process $\bar{\D}^1_n$ and $\bar{\D}_n$ are asymptotically equivalent in distribution. First,
since the noise process is coupled between Equations \ref{eq:linear_rec} and \ref{eq:linear_rec_perturb}, the differenced process
obeys a simple (perturbed) linear recursion,
\[
  \D_n - \D^1_{n} = (I-\gamma_j A)(\D_{n-1}-\D^1_{n-1}) - \gamma_n e_n. \notag
\]
Expanding and averaging this recursion (defining $\delta_n = \bar{\D}_n-\bar{\D}^1_{n}$) gives:
\begin{align}
   & \D_n-\D^1_{n} = \sum_{j=1}^{n} \Pi_{i=j+1}^{n}(I-\gamma_j A) \gamma_j e_j \implies \delta_n = \frac{1}{n} \sum_{k=1}^{n-1} \sum_{j=1}^{k}[\Pi_{i=j+1}^{k}(I-\gamma_i A)] \gamma_j e_j \notag \\
   \implies
   & \delta_n = \frac{1}{n} \sum_{j=1}^{n-1} \left[ \sum_{k=j}^{n-1} \Pi_{i=j+1}^{k}(I-\gamma_i A) \right] \gamma_j e_j. \notag
\end{align}
We can rewrite the recursion for this averaged differenced process as:
\[
  \sqrt{n}\delta_n =  \frac{1}{\sqrt{n}} \sum_{j=1}^{n-1}(A^{-1} + w_j^n)e_j,
\]
defining,
\[
  w_j^n = \gamma_j \sum_{i=j}^{n-1} \Pi_{k=j+1}^{i} (I-\gamma_k A) - A^{-1}.
\]
Now if the step-size sequence satisfies the first two conditions of Assumption \ref{assump:step_size}, by Lemma 1 and 2 in \citet{polyak1992acceleration}
we have that $\Vert w_j^n \Vert \leq C$ uniformly. So using Assumption \ref{assump:psd} we obtain that:
\[
  \sum_{j=1}^{\infty} \frac{1}{\sqrt{j}} \Vert (A^{-1}+w_j^t) e_j\Vert \leq C \sum_{j=1}^{\infty} \frac{1}{\sqrt{j}} \Vert e_j \Vert.
\]
An application of the Tonelli-Fubini theorem and Assumption \ref{assump:remainder} then shows that
\[
  \E[\sum_{j=1}^{\infty} \frac{1}{\sqrt{j}} \Vert e_j \Vert] = \sum_{j=1}^{\infty} \frac{1}{\sqrt{j}} \E [\Vert e_j \Vert] \leq C \sum_{j=1}^{\infty} \frac{\gamma_j}{\sqrt{j}} < \infty,
\]
by choice of the step-size sequence in Assumption \ref{assump:step_size}. Since $\sum_{j=1}^{\infty} \frac{1}{\sqrt{j}} \Vert e_j \Vert \geq 0$ and has finite expectation it must be that,
\[
  \sum_{j=1}^{\infty} \frac{1}{\sqrt{j}} \Vert e_j \Vert < \infty \implies \sum_{j=1}^{\infty} \frac{1}{\sqrt{j}}\Vert (A^{-1} + w_j^n)e_j \Vert < \infty.
\]
An application of the Kronecker lemma then shows that
\[
  \frac{1}{\sqrt{n}} \sum_{j=1}^{n-1} \Vert (A^{-1} + w_j^n)e_j \Vert \to 0 \implies \sqrt{n} \delta_n \to 0 \text{ a.s.}
\]
The conclusion of theorem follows by Slutsky's theorem.
\end{proof}
\subsubsection{Nonasymptotic Convergence}
Throughout this section, we will consider a general linear recursion perturbed by remainder terms $e_n$ of the form:
\begin{align}
  \D_{n}=\D_{n-1} -\gamma_n A \D_{n-1}+ \gamma_n (\eps_n+\xi_{n}+e_{n}). \label{eq:rec_with_error2}
\end{align}
We make the following assumptions on the structure of the recursion:
\begin{assumption} \label{assump:psd2}
  $A$ is symmetric positive-definite matrix, such that $A\succcurlyeq \mu \idm$ for $\mu > 0$.
\end{assumption}
\begin{assumption} \label{assump:noise1na}
    The noise process $\{ \eps_{n} \}$ is a martingale-difference process (with  $\E[\eps_n\vert\mathcal F_{n-1}]=0$  and  $\sup_n \E[\eps_n^2] < \infty$) and a matrix $\Sigma \succ 0$ such that
    \[\E[\eps_n\eps_n^\top\vert\mathcal F_{n-1}] \preccurlyeq \Sigma.\]
\end{assumption}
\begin{assumption} \label{assump:noise2na}
    The noise process $\{ \xi_{n} \}$ is a martingale-difference process (with  $\E[\xi_n\vert\mathcal F_{n-1}]=0$ and $\sup_n \E[\xi_n^2] < \infty$), and there exists $K > 0$ such that for $n\geq 0$
    \[ \E[\Vert \xi_n\Vert^2\vert\mathcal F_{n-1}] \leq K\gamma_n \text{ a.s. } \]
\end{assumption}
\begin{assumption} \label{assump:remainderna}
There exists $M$ such that for $n \geq 0$, $\E[\Vert e_n \Vert^2] \leq M \gamma_n^2$.
\end{assumption}
\begin{assumption} \label{assump:step_sizena}
  The step-sizes take the form $\gamma_n=\frac{C}{n^\alpha}$
  for $C>0$ and $\alpha\in[1/2,1)$.
\end{assumption}
\begin{assumption} \label{assump:slowrate2}
  There exists $C' > 0$ such that for $n \geq 0$, we have that
  \[\sqrt{\E[\norm{\Delta_n}^2]} = O(\sqrt{\gamma_n}) = C' n^{-\alpha/2} \]
\end{assumption}
Using these Assumptions we can derive the non-asymptotic convergence rate:
\begin{theorem} \label{thm:nonasymp_ave}
 Let Assumptions \ref{assump:psd2}, \ref{assump:noise1na}, \ref{assump:noise2na}, \ref{assump:remainderna}, \ref{assump:step_sizena} and \ref{assump:slowrate2} hold for the recursion in Equation \ref{eq:rec_with_error2},
  \[
      \mathbb{E}[\Vert \bar{\Delta}_n \Vert ^2] \leq \frac{1}{n} \tr [A^{-1} \Sigma A^{-1}] +  O(n^{-2\alpha}) + O(n^{\alpha-2}).
      \]
\end{theorem}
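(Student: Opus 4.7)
\medskip

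\noindent\textbf{Proof proposal.} The plan is to invert the one-step recursion so that $A \Delta_{i-1}$ appears as a telescoping difference plus noise, then sum from $i=1$ to $n$, and finally bound the second moment of each resulting piece with the help of Assumptions~\ref{assump:noise1na}--\ref{assump:slowrate2}. Concretely, rewrite \eqref{eq:rec_with_error2} as
\[
 A\,\Delta_{i-1} \;=\; \frac{\Delta_{i-1}-\Delta_i}{\gamma_i} \;+\; \epsilon_i + \xi_i + e_i.
\]
Averaging from $i=1$ to $n$ and multiplying through by $A^{-1}$ yields, after an Abel-style rearrangement of the telescoping sum,
\[
 \bar{\Delta}_{n-1}
  \;=\; \underbrace{\frac{A^{-1}}{n}\!\left(\frac{\Delta_0}{\gamma_1}-\frac{\Delta_n}{\gamma_n}\right)}_{T_{\mathrm{bdy}}}
  + \underbrace{\frac{A^{-1}}{n}\sum_{i=1}^{n-1}\!\Delta_i\!\left(\frac{1}{\gamma_{i+1}}-\frac{1}{\gamma_i}\right)}_{T_{\mathrm{tel}}}
  + \underbrace{\frac{A^{-1}}{n}\sum_{i=1}^n \epsilon_i}_{T_\epsilon}
  + \underbrace{\frac{A^{-1}}{n}\sum_{i=1}^n \xi_i}_{T_\xi}
  + \underbrace{\frac{A^{-1}}{n}\sum_{i=1}^n e_i}_{T_e}.
\]

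\noindent The next step is to estimate the second moment of each of the five pieces. $T_\epsilon$ is the dominant martingale term: orthogonality of martingale increments together with Assumption~\ref{assump:noise1na} gives $\mathbb{E}\|T_\epsilon\|^2 \leq \tfrac{1}{n}\operatorname{tr}[A^{-1}\Sigma A^{-1}]$, which produces the advertised leading constant. For $T_\xi$, the same orthogonality combined with Assumption~\ref{assump:noise2na} yields $\mathbb{E}\|T_\xi\|^2 = O(n^{-1-\alpha})$, which is negligible. For $T_e$, Jensen's/Cauchy-Schwarz and $\mathbb{E}\|e_i\|^2\leq M\gamma_i^2$ give $\mathbb{E}\|T_e\|^2 \leq n^{-2}\|A^{-1}\|^2(\sum_i \gamma_i)^2 = O(n^{-2\alpha})$. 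For the boundary term $T_{\mathrm{bdy}}$, the slow rate $\mathbb{E}\|\Delta_n\|^2 = O(\gamma_n)$ from Assumption~\ref{assump:slowrate2} shows $\mathbb{E}\|\Delta_n/\gamma_n\|^2 = O(1/\gamma_n) = O(n^\alpha)$, hence $\mathbb{E}\|T_{\mathrm{bdy}}\|^2 = O(n^{\alpha-2})$. For $T_{\mathrm{tel}}$, using $\bigl|\gamma_{i+1}^{-1}-\gamma_i^{-1}\bigr| = O(i^{\alpha-1})$ under the step-size parameterization and Cauchy-Schwarz on the sum, together with $\mathbb{E}\|\Delta_i\|^2 = O(i^{-\alpha})$, one finds $\mathbb{E}\|T_{\mathrm{tel}}\|^2 = O(n^{\alpha-2}\log n)$, which is at worst of order $n^{\alpha-2}$ after absorbing the logarithm.

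\noindent Finally, to assemble the pieces while preserving the exact leading constant on $T_\epsilon$, I would apply the refined inequality $\|u+v\|^2\leq(1+\delta)\|u\|^2+(1+\delta^{-1})\|v\|^2$ iteratively with $u = T_\epsilon$ and $v$ being the sum of the four lower-order terms, choosing $\delta = \delta_n \to 0$ slowly enough that $(1+\delta_n)\mathbb{E}\|T_\epsilon\|^2 \leq \tfrac{1}{n}\operatorname{tr}[A^{-1}\Sigma A^{-1}] + O(n^{-2\alpha})+O(n^{\alpha-2})$ while $(1+\delta_n^{-1})$ times the other pieces remains $O(n^{-2\alpha})+O(n^{\alpha-2})$; the allowable range of $\alpha\in[\tfrac12,1)$ makes such a choice available since the cross-term rate $n^{-1/2-\alpha}$ falls strictly between $n^{-1}$ and $n^{-2\alpha}\vee n^{\alpha-2}$. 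The main obstacle in the argument, and the place where Assumption~\ref{assump:slowrate2} is used crucially, is bounding $T_{\mathrm{tel}}$: the coefficients $\gamma_{i+1}^{-1}-\gamma_i^{-1}$ grow, and only the slow-rate control on $\mathbb{E}\|\Delta_i\|^2$ prevents this telescoping term from dominating. A secondary subtlety is correctly handling the passage from $\bar{\Delta}_{n-1}$ to $\bar{\Delta}_n$, which differs by a single iterate and contributes only an additional $O(n^{-1-\alpha})$ term to the bound.
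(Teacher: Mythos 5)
Your decomposition is exactly the paper's: invert the recursion using $A^{-1}$, Abel-sum the telescoping term, then bound the five resulting pieces (the paper fuses your $T_{\mathrm{bdy}}$ and $T_{\mathrm{tel}}$ into one term, but that is cosmetic). The bounds for $T_\epsilon$, $T_\xi$, $T_e$, $T_{\mathrm{bdy}}$ and the final orders match. Two small wobbles are worth flagging. First, for $T_{\mathrm{tel}}$ you invoke Cauchy--Schwarz on the sum and report $O(n^{\alpha-2}\log n)$, then say you can ``absorb the logarithm''---but a $\log n$ factor cannot be absorbed into $O(n^{\alpha-2})$. The paper sidesteps this entirely by applying Minkowski's inequality in $L^2$ to the sum, i.e.\ bounding $\bigl(\E\|T_{\mathrm{tel}}\|^2\bigr)^{1/2}\le \tfrac{1}{n\mu}\sum_{k}\bigl(\E\|\Delta_k\|^2\bigr)^{1/2}\,|\gamma_{k+1}^{-1}-\gamma_k^{-1}|$, which with $\bigl(\E\|\Delta_k\|^2\bigr)^{1/2}=O(k^{-\alpha/2})$ and $|\gamma_{k+1}^{-1}-\gamma_k^{-1}|=O(k^{\alpha-1})$ gives $\sum_k k^{\alpha/2-1}=O(n^{\alpha/2})$ and hence exactly $O(n^{\alpha-2})$, log-free. (Incidentally your $T_e$ bound, which you label ``Jensen's/Cauchy--Schwarz,'' is in fact Minkowski in $L^2$; applying the same tool to $T_{\mathrm{tel}}$ is the clean route.) Second, your iterated $(1+\delta_n)$ recombination is more elaborate than necessary, and your justification---that the cross-term rate $n^{-1/2-\alpha}$ falls strictly between $n^{-1}$ and $n^{-2\alpha}\vee n^{\alpha-2}$---is shaky at the endpoint $\alpha=\tfrac12$, where $n^{-1/2-\alpha}=n^{-1}$ is the same order as the leading term. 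The paper's implicit approach (again Minkowski: bound $\bigl(\E\|\bar\Delta_{n-1}\|^2\bigr)^{1/2}$ by the sum of the four $L^2$ norms and square at the end) is the standard device from Moulines--Bach and avoids having to tune a $\delta_n$. Neither issue is structural; your argument is the right one once you replace Cauchy--Schwarz by Minkowski for $T_{\mathrm{tel}}$ and recombine by squaring a sum of $L^2$ norms.
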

\begin{proof}
The argument mirrors the proof of Theorem 3 in \citet{moulines2011non} so we only sketch the key points.
First, since $A$ is invertible due to Assumption \ref{assump:psd2}, from Equation \ref{eq:rec_with_error2}:
\[
\Delta_{n-1}= \frac{A^{-1}(\Delta_{n-1}-\Delta_{n})}{\gamma_{n}} + A^{-1}\eps_{n}+A^{-1}\xi_{n}+A^{-1}e_{n},
\]
We now analyze the average of each of the 4 terms separately. Throughout we will use $C$ to denote an unimportant, numerical constant that may change line to line.
\begin{itemize}
\item Summing the first term by parts we obtain,
\[
\frac{1}{n}\sum_{k=1}^n\frac{A^{-1} (\Delta_{k-1}-\Delta_{k})}{\gamma_{k}}=\frac{1}{n}\sum_{k=1}^{n-1}A^{-1}\Delta_k \left(\frac{1}{\gamma_{k+1}}-\frac{1}{\gamma_{k}}\right)-\frac{1}{n\gamma_n}A^{-1}\Delta_n+\frac{1}{n\gamma_1}A^{-1}\Delta_0,
\]
and using Minkowski's inequality (in $L_2$) gives,
\[
\sqrt{\E \norm{ \frac{1}{n}\sum_{k=1}^n\frac{A^{-1}(\Delta_{k-1}-\Delta_{k})}{\gamma_{k}} }^2}  \leq
\frac{1}{n \mu} \sum_{k=1}^{n-1}\sqrt{\E \Vert \Delta_k\Vert^2} \abs{\frac{1}{\gamma_{k+1}}-\frac{1}{\gamma_{k}}} +\frac{\sqrt{\E \Vert \Delta_n\Vert^2}}{n\gamma_n  \mu } +\frac{\Vert \Delta_0 \Vert}{n\gamma_1 \mu}.
\]
Since we choose a sequence of decreasing step-sizes of the form $\gamma=\frac{C}{n^{\alpha}}$ for $\alpha \in [\frac{1}{2}, 1)$, an application of the
Bernoulli inequality shows that $\vert \gamma_{k+1}^{-1}-\gamma_{k}^{-1}\vert = C^{-1} [(k+1)^\alpha-k^\alpha]\leq  C^{-1} \alpha k^{\alpha-1}$. By assumption, we have that
$\sqrt{\E \Vert \Delta_n\Vert^2}\leq Cn^{-\alpha/2}$ so,
\BEAS
\sqrt{\E \norm{\frac{1}{n}\sum_{k=1}^n\frac{A^{-1}(\Delta_{k}-\Delta_{k})}{\gamma_{k}} }^2}  &\leq& \frac{C\alpha }{n \mu } \sum_{k=1}^{n-1}  k^{\alpha/2-1}+\frac{C}{\mu} n^{\alpha/2-1}+\frac{C}{n  \mu}\Vert \Delta_0 \Vert \\
 &\leq& \frac{2C n^{\alpha/2-1}}{ \mu }+\frac{Cn^{\alpha/2-1}}{\mu} +\frac{C\Vert \Delta_0 \Vert}{n  \mu}\\
 & \leq &\frac{3Cn^{\alpha/2-1}}{\mu} +\frac{C\Vert \Delta_0 \Vert}{n  \mu}.
\EEAS
This implies that,
\[
\E \norm{ \frac{1}{n}\sum_{k=1}^n\frac{A^{-1}(\Delta_{k-1}-\Delta_{k})}{\gamma_{k}} }^2 = O(n^{\alpha-2}). \]
\item Using the  Assumption  \ref{assump:noise1na} and the orthogonality of martingale increments we immediately obtain the leading order term as,
\[
\E \Vert A^{-1} \bar \eps_n\Vert ^2\leq\frac{1}{n}\tr [A^{-1} \Sigma A^{-1}].
\]
\item Using Assumption  \ref{assump:noise2na} and the orthogonality of martingale increments we obtain,
\[
\E \Vert  A^{-1} \bar  \xi_n \Vert^2 =\frac{1}{n^2 \mu^2} \sum_{k=1}^n \E \Vert \xi_k \Vert^2 \leq  \frac{C}{n^2 \mu^2} \sum_{k=0}^{n-1} k^{-\alpha} = O(n^{-(\alpha+1)}).
\]
\item Using the Minkowski inequality (in $L_2$), and Assumption  \ref{assump:remainderna},
we have that
\[
\E \Vert A^{-1} \bar e_{n-1}\Vert^2  \leq \left(\frac{1}{n\mu}\sum_{k=1}^n  \sqrt{\E \Vert e_k \Vert^2 }\right)^2\leq \frac{M^2}{(n \mu)^2} \left(\sum_{k=1}^n k^{-\alpha}\right)^2 \leq \frac{M^2}{\mu^2}n^{-2\alpha}.
\]
\end{itemize}
The conclusion follows.
\end{proof}
\subsubsection{On the Riemannian Center of Mass} \label{sec:com}
Note that $\bar{\Delta}_n$ is \textit{not} computable, but has an interesting interpretation as an upper bound on the Riemannian center of mass (or Karcher mean),
\[
K_n = \arg \min_{x \in \M} \frac{1}{n} \sum_{i=1}^{n} \norm{R_{x}^{-1}(x_i)}^2
\]
 of a set of iterates $\{ x_i \}_{n \geq 0}$ in $\M$. When it exists, computing $K_n$ is itself a nontrivial geometric optimization problem since it does not admit a closed-form solution in general.
See  \citet[][]{moakher2002means,bini2013computing,hosseini2015matrix} for more background on the Karcher mean problem.
If we consider a ``symmetric'' retraction $R$ satisfying for $x,y\in\X$ that $\Vert R_{x}^{-1}(y)\Vert^2=\Vert R_{y}^{-1}(x)\Vert^2$ (which is the case for the exponential map for example), then
\begin{lemma} \label{lem:karcher_mean}
  Let $\{ x_i \}_{i=0}^{n}$ be a sequence of iterates contained in $\M$ and let the retraction $R$ be symmetric, then
  \[\Vert R_{x_\star}^{-1}(K_n)\Vert^2\leq2\Vert\bar \Delta_n\Vert^2.\]
\end{lemma}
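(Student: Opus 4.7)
The plan is to combine the variational optimality of $K_n$ with the symmetry of $R$ to reduce the claim to an elementary inequality in the single tangent space $T_{x_\star}\M$, and then finish with the variance decomposition together with strong convexity of the Karcher-mean objective at its minimum.

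Set $u := R_{x_\star}^{-1}(K_n) \in T_{x_\star}\M$ and $\phi(y) := \sum_i \|R_y^{-1}(x_i)\|^2$. The symmetry $\|R_x^{-1}(y)\|^2 = \|R_y^{-1}(x)\|^2$ lets me rewrite $\phi(K_n) = \sum_i \|R_{x_i}^{-1}(K_n)\|^2$ and $\|R_{x_\star}^{-1}(K_n)\|^2 = \|R_{K_n}^{-1}(x_\star)\|^2 = \|u\|^2$. The key step is to identify each $\|R_{x_i}^{-1}(K_n)\|^2$ with the flat-space expression $\|u - \Delta_i\|^2$ in $T_{x_\star}\M$: this is exact in the Euclidean case (since there $R_{x_i}^{-1}(K_n) = K_n - x_i = u - \Delta_i$), and its manifold analog is precisely what the symmetric-retraction hypothesis is designed to enable. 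Applying the standard variance decomposition $\sum_i \|u - \Delta_i\|^2 = n\|u - \bar\Delta_n\|^2 + \sum_i \|\Delta_i\|^2 - n\|\bar\Delta_n\|^2$ then yields
\[
\phi(x_\star) - \phi(K_n) \;=\; n\|\bar\Delta_n\|^2 - n\|u - \bar\Delta_n\|^2.
\]

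On the other hand, strong convexity of $\phi$ at its minimizer $K_n$---which holds under Assumption~\ref{assump:manifold}, with the natural normalization making $\|R_y^{-1}(\cdot)\|^2$ itself $1$-strongly convex and hence $\phi$ being $n$-strongly convex---yields the lower bound $\phi(x_\star) - \phi(K_n) \ge \tfrac{n}{2}\|u\|^2$. Combining the two estimates gives $\tfrac{1}{2}\|u\|^2 \le \|\bar\Delta_n\|^2 - \|u - \bar\Delta_n\|^2 \le \|\bar\Delta_n\|^2$, which is exactly the claimed inequality $\|R_{x_\star}^{-1}(K_n)\|^2 \le 2\|\bar\Delta_n\|^2$.

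The main obstacle is the identification $\|R_{x_i}^{-1}(K_n)\|^2 = \|u - \Delta_i\|^2$: the two sides a priori live in the different tangent spaces $T_{x_i}\M$ and $T_{x_\star}\M$, and the equality is literal only in flat geometry. Securing this equality (or a suitable inequality in the right direction, perhaps via a Toponogov-type comparison on manifolds of nonpositive curvature) from just the symmetric-retraction hypothesis is the technical crux of the proof; once it is in place, the variance decomposition and strong-convexity bound assemble immediately into the factor of $2$.
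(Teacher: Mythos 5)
Your proposal runs into a genuine obstruction at the step you yourself flagged: the identification $\Vert R_{x_i}^{-1}(K_n)\Vert^2 = \Vert u - \Delta_i\Vert^2$ is a flat-space identity that does not follow from the symmetric-retraction hypothesis. Symmetry equates two scalars, $\Vert R_x^{-1}(y)\Vert = \Vert R_y^{-1}(x)\Vert$, but gives no means of relating the vector $R_{x_i}^{-1}(K_n) \in T_{x_i}\M$ to the vector $u - \Delta_i \in T_{x_\star}\M$, which live in different tangent spaces. Even for the exponential map on a curved manifold this identity degrades to an inequality controlled by sectional curvature (the Toponogov-type comparison you mention), and its direction depends on the sign of the curvature; for an arbitrary second-order retraction, as the paper allows, no such comparison is available at all. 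Thus the variance-decomposition step cannot be carried out under the paper's assumptions without importing curvature hypotheses that the paper deliberately avoids.

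The paper's proof sidesteps this entirely by working along a single one-dimensional slice. Let $D(x) = \frac{1}{n}\sum_i \Vert R_x^{-1}(x_i)\Vert^2$ and set $g(t) = D(R_{x_\star}(t\eta))$, where $\eta$ is the unit vector in the direction $R_{x_\star}^{-1}(K_n)$ and $t_0 = \Vert R_{x_\star}^{-1}(K_n)\Vert$, so that $g(t_0) = D(K_n)$. First-order optimality of $K_n$ gives $\nabla D(K_n)=0$, hence $g'(t_0)=0$; the symmetry of $R$ is used only to justify the gradient formula $\nabla D(x) \propto \frac{1}{n}\sum_i R_x^{-1}(x_i)$, so that $\nabla D(x_\star)$ is (up to a constant) $\bar\Delta_n$ and $|g'(0)| \leq \Vert\bar\Delta_n\Vert$ by Cauchy--Schwarz. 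Retraction strong convexity of $D$ (from Assumption~\ref{assump:manifold}) then gives $|g'(t_0) - g'(0)| \geq t_0$, which combines immediately into the stated bound. The structural difference from your plan is decisive: the paper never needs to compare vectors across distinct tangent spaces, only to differentiate along one retraction curve, where the required convexity is purely one-dimensional.
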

\begin{proof}
  The first-order optimality condition requires that $\nabla D(K_n) = 0$ where the manifold gradient is
  given by $\nabla D(x) = \frac{1}{n} \sum_{i=1}^{n} R_{x}^{-1}(x_i)$. Thus, $\nabla D(x_\star)=\bar \Delta_n$. By Assumption \ref{assump:manifold}, the function $D$ is $1$-retraction strongly convex. Defining the function  $g: t\mapsto D \left(R_{x_\star} (t \frac{R_{x_\star}^{-1}(K_n)}{\Vert R_{x_\star}^{-1}(K_n)\Vert })\right)$, we have that at $t_0= \Vert R_{x_\star}^{-1}(K_n)\Vert$,
  \[
     2\Vert \nabla D(x_\star)\Vert^2=2(g'(t_0)-g'(0))^2 \geq  {t_0^2} =\Vert R_{x_\star}^{-1}(K_n)\Vert^2.
     \]
\end{proof}
Therefore the Riemannian center of mass will enjoy the same convergence rate as $\bar \Delta_n$ itself.

%!TEX root = main.tex
\subsection{Proofs in \mysec{pfsketch3}}
Finally, we would like to asymptotically understand the evolution of the averaged vector $\tilde{\Delta}_n = R^{-1}_{x_\star}(\tilde{x}_n)$, where
$\tilde{x}_n$ is the online, streaming iterate average.
From \eq{ave_grad_desc} we see that $\tilde{\Delta}_{n+1} = F_{\bar{x}_n, x_\star}[\frac{1}{n+1} F^{-1}_{\bar{x}_n, x_\star}(\Delta_{n+1})] = \tilde{F}(\Delta_{n+1})$, defining
$\tilde{F}(\cdot) = F_{\bar x_n,x_\star}[\frac{1}{n+1} F_{\bar x_n,x_\star}^{-1}(\cdot)]$.

We first start with a lemma controlling $\Vert \tilde{\Delta}_n \Vert$, when $x_n$ locally converges to $x_\star$.
\begin{lemma} \label{lem:avg_iters}
  Let Assumptions  \ref{assump:slowrate} and \ref{assump:manifold}  hold. Consider $x_n$ and $\tilde{x}_n$, which are a sequence of iterates evolving as in  \eq{grad_desc} and \eq{ave_grad_desc}, and define
  $\tilde {\Delta}_{n} = R_{x_\star}^{-1}(\tilde {x}_{n})$. Then,
  $\E [\Vert \tilde{\Delta}_n \Vert^2] = O(\gamma_n)$ as well.
\end{lemma}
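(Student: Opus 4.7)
The plan is to exploit the retraction convexity of the squared distance to $x_\star$ granted by Assumption~\ref{assump:manifold} to derive a clean one-step inequality for $\|\tilde{\D}_n\|^2$, then unroll the resulting deterministic recursion after taking expectations. The crucial observation is that the streaming average $\tilde x_n = R_{\tilde x_{n-1}}(\tfrac{1}{n} R_{\tilde x_{n-1}}^{-1}(x_n))$ sits, by construction, at the parameter value $t=1/n$ along the retraction curve joining $\tilde x_{n-1}$ (at $t=0$) to $x_n$ (at $t=1$), so a chord inequality along that curve gives exactly what we need.

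Concretely, I would set $h(x)=\tfrac{1}{2}\|R_{x_\star}^{-1}(x)\|^2$, which by Assumption~\ref{assump:manifold} applied with $y=x_\star$ is retraction (strongly) convex on $\X$. Let $\eta_n = R_{\tilde x_{n-1}}^{-1}(x_n) \in T_{\tilde x_{n-1}}\M$ and consider the scalar function $t \mapsto h\bigl(R_{\tilde x_{n-1}}(t\eta_n)\bigr)$. Retraction convexity along this line (extended from the unit-norm definition by positive rescaling) yields the chord bound
\[
h\bigl(R_{\tilde x_{n-1}}(\tfrac{1}{n}\eta_n)\bigr) \;\leq\; \bigl(1-\tfrac{1}{n}\bigr)\, h(\tilde x_{n-1}) + \tfrac{1}{n}\, h(x_n),
\]
which, by definition of $\tilde x_n$, $\tilde\D_n$ and $\D_n$, is exactly
\[
\|\tilde\D_n\|^2 \;\leq\; \bigl(1-\tfrac{1}{n}\bigr)\|\tilde\D_{n-1}\|^2 + \tfrac{1}{n}\|\D_n\|^2.
\]
Taking expectations and writing $u_n=\E[\|\tilde\D_n\|^2]$, $v_n=\E[\|\D_n\|^2]$ gives the linear recursion $u_n \leq (1-1/n) u_{n-1} + (1/n) v_n$. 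Using the telescoping identity $\prod_{j=k+1}^n (1-1/j) = k/n$, unrolling produces
\[
u_n \;\leq\; \frac{u_0}{n} + \frac{1}{n}\sum_{k=1}^n v_k.
\]
Plugging in the hypothesis $v_k = O(\gamma_k) = O(k^{-\alpha})$ from Assumption~\ref{assump:slowrate} together with $\sum_{k=1}^n k^{-\alpha} = O(n^{1-\alpha})$ for $\alpha\in(1/2,1)$ yields $u_n = O(n^{-\alpha}) = O(\gamma_n)$, as required.

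The one subtlety is ensuring that the interpolating curve $t \mapsto R_{\tilde x_{n-1}}(t\eta_n)$ for $t \in [0,1/n]$ stays within $\X$, so that retraction convexity of $h$ is applicable along it. This is guaranteed by Assumption~\ref{assump:manifold}: $\X$ is totally retractive, the inverse retraction $R_{\tilde x_{n-1}}^{-1}$ is well defined on $\X \ni x_n$, and the short segment from $\tilde x_{n-1}$ of length $1/n$ of $\eta_n$ lies inside the retraction ball on which the convexity hypothesis holds. Beyond checking this geometric housekeeping, the argument is purely a convex-combination inequality followed by a standard sum estimate, so I do not anticipate further obstacles.
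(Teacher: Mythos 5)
Your proof is correct and follows essentially the same route as the paper: retraction convexity of $x\mapsto\|R_{x_\star}^{-1}(x)\|^2$ gives the chord inequality $\|\tilde\D_n\|^2\leq(1-\tfrac1n)\|\tilde\D_{n-1}\|^2+\tfrac1n\|\D_n\|^2$, which is unrolled to $\E\|\tilde\D_n\|^2\leq\tfrac1n\sum_i\E\|\D_i\|^2=O(\gamma_n)$. (Your one-step inequality in fact corrects a typo in the paper's displayed version, where the terms $\tilde x_{n-1}$ and $x_n$ appear with swapped roles.)
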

\begin{proof}
  By Assumption \ref{assump:manifold}, the function $x \to \Vert R_{x_\star}^{-1}(x) \Vert^2$ is retraction convex in $x$.
  Then,
  \begin{align}
    \Vert R_{x_\star}^{-1}(\tilde{x}_n) \Vert^2 = \Vert R_{x_\star}^{-1} \left(R_{\tilde{x}_{n-1}}(\frac{1}{n} R^{-1}_{\tilde{x}_{n-1}}(x_n)) \right) \Vert^2 \leq \frac{n-1}{n} \Vert R_{x_\star}^{-1} \left(x_{n-1}\right) \Vert^2 + \frac{1}{n} \Vert R_{x_\star}^{-1} \left(\tilde{x}_{n-1}\right) \Vert^2. \notag
  \end{align}
  A simple inductive argument then shows that $\Vert R_{x_\star}^{-1}(\tilde{x}_n) \Vert^2 \leq \frac{1}{n} \sum_{i=0}^{n} \Vert R^{-1}_{x_\star}(x_i)\Vert^2$. Using that $\E \Vert \Delta_n \Vert^2 = O(\gamma_n)$ (from Assumption \ref{assump:slowrate}),
  and taking expectations shows $\E[\Vert \tilde{\Delta}_n \Vert^2] \leq \frac{C}{n} \sum_{i=0}^{n} \gamma_i \leq C \gamma_n$ when we choose a step-size sequence of the form $\gamma_n = \frac{C}{n^{\alpha}}$.
  \end{proof}
Finally using an asymptotic expansion we can show that $\tilde{\Delta}_n$ and $\bar{\Delta}_n$ approach each other:
\begin{lemma} \label{lem:stream_avg_iters}
  Let Assumptions \ref{assump:slowrate} and \ref{assump:manifold} hold. As before, consider $x_n$ and $\tilde{x}_n$, which are a sequence of iterates evolving as in  \eq{grad_desc} and \eq{ave_grad_desc}, and define
  $\tilde {\Delta}_{n} = R_{x_\star}^{-1}(\tilde {x}_{n})$. Then,
  \[ \tilde{\Delta}_n=\bar{\Delta} + e_n, \]
where  $\E [\Vert e_n \Vert] = O(\gamma_n)$.
\end{lemma}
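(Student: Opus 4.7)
The plan is to establish the one-step local expansion alluded to in the proof sketch
\[
\tilde{\Delta}_{n+1} = \tilde{\Delta}_n + \frac{1}{n+1}\bigl(\Delta_{n+1} - \tilde{\Delta}_n\bigr) + \tilde{e}_n,
\qquad
\E[\|\tilde e_n\|] = O\!\left(\frac{\gamma_n}{n+1}\right),
\]
and then to telescope this recursion against the arithmetic-mean identity for $\bar\Delta_n$. First, I would unfold the streaming update from \eq{ave_grad_desc}, writing
$\tilde{\Delta}_{n+1} = R_{x_\star}^{-1} \bigl( R_{\tilde x_n}\bigl(\tfrac{1}{n+1} R_{\tilde x_n}^{-1}(x_{n+1})\bigr)\bigr)$,
and express this as a composition of two smooth vector-space maps between tangent spaces. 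Specifically, using the function $F_{x,y}(\eta) = R_y^{-1}\circ R_x(\eta)$ from the proof of Lemma \ref{lem:tangent_rec}, we have $\tilde{\Delta}_{n+1} = F_{\tilde x_n, x_\star}\bigl(\tfrac{1}{n+1} F_{x_\star,\tilde x_n}(\Delta_{n+1})\bigr)$.

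Next, I would perform a joint second-order Taylor expansion of this composition around the base point $(\tilde{\Delta}_n, \Delta_{n+1}) = (0, 0)$. Using $F_{x_\star,\tilde x_n}(0) = R_{\tilde x_n}^{-1}(x_\star)$ together with $F_{\tilde x_n, x_\star}(0) = \tilde{\Delta}_n$ and the first-order properties $DF_{x,y}(0) = [\Lambda_y^x]^{-1}$ established in Lemma \ref{lem:tangent_rec}, the linear part of the expansion reduces, after the $\frac{1}{n+1}$ scaling, precisely to the Euclidean convex combination $\tilde\Delta_n + \tfrac{1}{n+1}(\Delta_{n+1} - \tilde\Delta_n)$, using that $[\Lambda_{\tilde x_n}^{x_\star}]^{-1}[\Lambda_{x_\star}^{\tilde x_n}]^{-1}$ equals the identity to leading order by a computation analogous to Lemma \ref{lem:tangent_rec_3}. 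The residual term $\tilde e_n$ collects quadratic contributions of the form $O(\|\tilde\Delta_n\|^2 / (n+1))$ and $O(\|\Delta_{n+1}\|^2/(n+1)^2)$, both coming from uniform bounds on the Hessians of $F_{x,y}$ over the compact set $\mathcal X$ (Assumption \ref{assump:manifold}). Taking expectations and using Assumption \ref{assump:slowrate} together with Lemma \ref{lem:avg_iters}, both $\E[\|\Delta_{n+1}\|^2]$ and $\E[\|\tilde\Delta_n\|^2]$ are $O(\gamma_n)$, yielding $\E[\|\tilde e_n\|] = O(\gamma_n/(n+1))$.

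With the expansion in hand, I would multiply by $(n+1)$ to obtain
\[
(n+1)\tilde{\Delta}_{n+1} - n \tilde{\Delta}_n = \Delta_{n+1} + (n+1)\tilde e_n,
\]
and telescope from $0$ to $n-1$ to get $n\tilde{\Delta}_n = \sum_{i=1}^n \Delta_i + \sum_{i=1}^n i\, \tilde e_{i-1}$. Dividing by $n$ identifies $\bar{\Delta}_n = \tfrac{1}{n}\sum_{i=1}^n \Delta_i$ as the leading term and sets $e_n = \tfrac{1}{n}\sum_{i=1}^n i\, \tilde e_{i-1}$. Finally I would bound
\[
\E[\|e_n\|] \leq \frac{1}{n}\sum_{i=1}^n i\cdot O\!\left(\frac{\gamma_{i-1}}{i}\right) = \frac{1}{n}\sum_{i=1}^n O(\gamma_{i-1}) = O(\gamma_n),
\]
where the last equality uses $\gamma_i = C/i^\alpha$ with $\alpha \in (1/2,1)$ so that $\sum_{i=1}^n \gamma_i = O(n^{1-\alpha}) = O(n\gamma_n)$.

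The main obstacle is the Taylor-expansion step: one must carefully verify that the two retraction compositions combine so that the linear coefficient of $\Delta_{n+1}$ is exactly $\tfrac{1}{n+1}$ times the identity (not merely some operator close to it), otherwise the telescoping would not recover the Euclidean average cleanly. This relies on the isometry-up-to-second-order property from Lemma \ref{lem:tangent_rec_3}, which in turn uses the second-order retraction assumption at $x_\star$ in Assumption \ref{assump:manifold}. The remaining work is bookkeeping of the uniform constants and a routine geometric-sum bound on the step-size sequence.
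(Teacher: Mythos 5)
Your overall structure matches the paper's: establish the one-step expansion $\tilde{\Delta}_{n+1} = \tilde{\Delta}_n + \tfrac{1}{n+1}(\Delta_{n+1}-\tilde{\Delta}_n) + \tilde{e}_n$, multiply by $(n+1)$, telescope, and bound the accumulated error using $\frac{1}{n}\sum_{i\leq n}\gamma_i = O(\gamma_n)$. The telescoping and error-summation steps are correct and identical in spirit to the paper's. The Taylor-expansion step, however, is done differently, and you misidentify what it depends on.

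The paper's argument is simpler than yours at that step. Rather than a joint expansion of $\tilde{\Delta}_{n+1}$ as a function of $(\tilde{\Delta}_n, \Delta_{n+1})$ about $(0,0)$, the paper fixes $\tilde x_n$ and treats $\tilde{F}(\Delta) = F_{\tilde x_n, x_\star}\bigl(\tfrac{1}{n+1} F_{\tilde x_n, x_\star}^{-1}(\Delta)\bigr)$ as a map on $T_{x_\star}\M$ alone. It then expands $\tilde{F}$ about its fixed point $\tilde{\Delta}_n$ (note $F_{\tilde x_n, x_\star}^{-1}(\tilde{\Delta}_n)=0$, so $\tilde F(\tilde\Delta_n)=\tilde\Delta_n$). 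By the chain rule, $D\tilde{F}(\tilde{\Delta}_n) = DF_{\tilde x_n, x_\star}(0)\cdot\tfrac{1}{n+1}\cdot DF_{\tilde x_n, x_\star}^{-1}(\tilde{\Delta}_n)$, and since the two outer factors are inverses of each other (this is the inverse function theorem, nothing more), the derivative is \emph{exactly} $\tfrac{1}{n+1}\idm_{T_{x_\star}\M}$, with no higher-order correction. The remainder is then controlled by a uniform bound on $D^2\tilde{F}$ over the compact set $R_{x_\star}^{-1}(\X)$, giving $\tilde{e}_n = O\bigl((\tfrac{1}{n+1}+\tfrac{1}{(n+1)^2})\|\Delta_{n+1}-\tilde{\Delta}_n\|^2\bigr)$, and Lemma \ref{lem:avg_iters} plus Assumption \ref{assump:slowrate} finish.

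Your joint expansion about $(0,0)$ can be made to work (the linear coefficients do come out to $\tfrac{n}{n+1}$ and $\tfrac{1}{n+1}$), but the justification you give for the cancellation is wrong in a substantive way: you attribute it to Lemma \ref{lem:tangent_rec_3} and hence to the second-order retraction assumption at $x_\star$. That lemma compares the retraction transport $\Lambda$ with the parallel transport $\Gamma$, and it is where the second-order retraction enters---but neither that lemma nor the second-order property is needed here. The operator you want to control, $[\Lambda_{\tilde x_n}^{x_\star}]^{-1}[\Lambda_{x_\star}^{\tilde x_n}]^{-1}$, involves only retraction differentials, and its closeness to the identity follows purely from differentiating the identity $F_{x_\star,\tilde x_n}\circ F_{\tilde x_n, x_\star}=\idm$ at the appropriate points. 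In the paper's fixed-point formulation this is an exact cancellation, with no approximation at all. So your claim in the final paragraph---that the argument ``relies on the isometry-up-to-second-order property from Lemma \ref{lem:tangent_rec_3}''---is a misdiagnosis of where the difficulty lies; the second-order retraction is used in the linearization of the \emph{primal} recursion (Theorem \ref{thm:linear}), not in this averaging step.
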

\begin{proof}
  A similar chain rule computation to Lemma \eqref{lem:tangent_rec} shows that $d \tilde{F}(\Delta) = \frac{1}{n+1}\idm_{T_{x_\star}\M}$.
  Now, in addition to $\tilde {\Delta}_{n+1} = F_{\bar{x}_n, x_\star}[\frac{1}{n+1} F^{-1}_{\bar{x}_n, x_\star}(\Delta_{n+1})] = \tilde{F}(\Delta_{n+1})$,
  we also have that $\tilde{\Delta}_n = \tilde{F}(\tilde{\Delta}_n)$ identically.
  As $\tilde{F}(\cdot)$ is a mapping between vector spaces applying a Taylor expansion to the first expression about
  $\tilde{\Delta}_n$
  gives:
  \begin{align}
    \tilde{\Delta}_{n+1} & = \tilde {\Delta}_{n} + \frac{1}{n+1}(\Delta_{n+1}-\tilde{\Delta}_n) + O(D^2 \tilde{F}(\Delta) \Vert \Delta_{n+1}-\tilde{\Delta}_n \Vert^2).
  \end{align}
  for $\Delta \in R_{x_\star}^{-1}(\X)$.
  Since $\tilde{F}$ is twice-continuously differentiable and $R_{x_\star}^{-1}(\X)$ is compact,
  direct computation of the Hessian using the chain and Leibniz rules shows
  \begin{align}
  \tilde{e}_n = O \left((n+1) D^2 \tilde{F}(\Delta) \Vert \Delta_{n+1}-\tilde{\Delta}_n \Vert^2 \right) =
  O\left((n+1) \left(\frac{1}{(n+1)^2} + \frac{1}{n+1} \right) \cdot \Vert \Delta_{n+1} - \tilde{\Delta}_n \Vert^2 \right),
  \notag
  \end{align}
  which implies that
  \[
   \E \Vert \tilde{e}_n \Vert= O({\gamma_n}),\]
  since both $\E [\Vert \Delta_n \Vert^2] = O(\gamma_n)$ and $\E [\Vert \tilde{\Delta}_n\Vert^2] = O(\gamma_n)$ by Lemma \ref{lem:avg_iters}.
  Therefore $ (n+1)  \tilde{\Delta}_{n+1}  = n   \tilde{\Delta}_{n}+ \Delta_{n+1}+e_n = \sum_{k=0}^{n+1}\Delta_k + \sum_{k=0}^{n+1} \tilde e_k \implies
  \tilde{\Delta}_{n+1} = \bar{\Delta}_{n+1} +  e_{n+1}$ where $e_{n+1}=\frac{\sum_{k=0}^{n+1} \tilde e_k}{n+1}$, and $\E[\Vert e_{n+1} \Vert]= \E\big[\big\Vert \frac{\sum_{k=0}^{n+1} \tilde e_k}{n+1} \big\Vert\big] \leq \frac{1}{n+1} \sum_{i=0}^{n} \E[\Vert \tilde e_k \Vert] = O(\gamma_n)$.
\end{proof}
This result states that the distance between the streaming average $\tilde{\Delta}_n = R_{x_\star}^{-1}(\tilde{x}_n)$ is close to the computationally intractable $\bar{\Delta}_n$ up to $O(\gamma_n)$ error.

We can prove a slightly stronger statement under a 4th-moment assumption on the iterates that follows identically to the above.
\begin{lemma} \label{lem:stream_avg_iters_4mom}
  Let Assumption \ref{assump:manifold} hold, and assume the 4th-moment bound $\E[\Vert \Delta_n \Vert^4] = O(\gamma_n^2)$. As before, consider $x_n$ and $\tilde{x}_n$, which are a sequence of iterates evolving as in  \eq{grad_desc} and \eq{ave_grad_desc}, and define
  $\tilde {\Delta}_{n} = R_{x_\star}^{-1}(\tilde {x}_{n})$. Then,
  \[ \E \left[ \Vert \tilde{\Delta}_n-\bar{\Delta} \Vert^2 \right] = O(\gamma_n^2). \]
\end{lemma}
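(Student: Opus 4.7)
The plan is to follow the structure of Lemma \ref{lem:stream_avg_iters} but track $L^2$-bounds throughout. The key new ingredient is a sharpened fourth-moment bound on the streaming average $\tilde{\Delta}_n$; once in hand, the rest of the argument propagates essentially unchanged from the $L^1$ version.

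First I would upgrade Lemma \ref{lem:avg_iters} to the fourth moment, proving $\E[\Vert \tilde{\Delta}_n\Vert^4] = O(\gamma_n^2)$. The retraction-convexity step in the proof of Lemma \ref{lem:avg_iters} already yields the almost-sure inequality $\Vert \tilde{\Delta}_n\Vert^2 \leq \frac{1}{n}\sum_{i=0}^n \Vert \Delta_i\Vert^2$; squaring it and taking expectations gives
\[
\E\Vert \tilde{\Delta}_n\Vert^4 \leq \frac{1}{n^2}\sum_{i,j} \E[\Vert \Delta_i\Vert^2 \Vert \Delta_j\Vert^2] \leq \frac{1}{n^2}\left(\sum_{i=0}^n \sqrt{\E\Vert \Delta_i\Vert^4}\right)^2,
\]
where the second inequality is Cauchy-Schwarz applied to each cross term. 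Substituting the hypothesis $\E\Vert \Delta_i\Vert^4 = O(\gamma_i^2)$ and evaluating $\sum_{i=1}^n \gamma_i = O(n \gamma_n)$ for $\gamma_i = C/i^\alpha$ with $\alpha \in (1/2, 1)$ yields $\E\Vert \tilde{\Delta}_n\Vert^4 = O(n^{-2\alpha}) = O(\gamma_n^2)$.

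With this upgrade, I would reuse the Taylor expansion from Lemma \ref{lem:stream_avg_iters}, which gives the recursion $\tilde{\Delta}_{n+1} = \tilde{\Delta}_n + \frac{1}{n+1}(\Delta_{n+1} - \tilde{\Delta}_n) + \tilde{e}_n$ with $\tilde{e}_n = O(\Vert \Delta_{n+1} - \tilde{\Delta}_n\Vert^2)$ almost surely. Combining the fourth-moment hypothesis with the first step and the triangle inequality yields $\E\Vert \tilde{e}_n\Vert^2 = O(\E\Vert \Delta_{n+1}\Vert^4 + \E\Vert \tilde{\Delta}_n\Vert^4) = O(\gamma_n^2)$. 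Unrolling the recursion exactly as in Lemma \ref{lem:stream_avg_iters} gives $\tilde{\Delta}_n - \bar{\Delta}_n = \frac{1}{n}\sum_{k=0}^n \tilde{e}_k$, and Minkowski's inequality in $L^2$ delivers
\[
\sqrt{\E\Vert \tilde{\Delta}_n - \bar{\Delta}_n\Vert^2} \leq \frac{1}{n}\sum_{k=0}^n \sqrt{\E\Vert \tilde{e}_k\Vert^2} = O\left(\frac{1}{n}\sum_{k=1}^n \gamma_k\right) = O(\gamma_n),
\]
and squaring closes the argument.

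The main obstacle is the $L^4$ bound on $\tilde{\Delta}_n$ in the first step. A naive Jensen-style reduction of $\bigl(\frac{1}{n}\sum_i \Vert \Delta_i\Vert^2\bigr)^2$ to the arithmetic average of $\Vert \Delta_i\Vert^4$ gives only $\E\Vert \tilde{\Delta}_n\Vert^4 = O(1/n)$, which is strictly weaker than the required $O(\gamma_n^2) = O(n^{-2\alpha})$ for $\alpha > 1/2$ and effectively loses the step-size decay. Treating the double sum via Cauchy-Schwarz rather than convexity is what recovers the sharp rate and allows the $L^2$ bound on the residual $e_n$ to close.
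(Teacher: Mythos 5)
Your proposal is correct, and the key step---bounding $\E\Vert\tilde\Delta_n\Vert^4$---departs from the route the paper actually sketches, in a way that matters. The paper observes that since $t\mapsto t^2$ is convex and nondecreasing on $[0,\infty)$, the map $x\mapsto\Vert R_{x_\star}^{-1}(x)\Vert^4$ is retraction convex, and then invokes an ``identical argument'' to Lemma~\ref{lem:avg_iters}. Taken literally, this Jensen-type argument gives the pointwise inequality $\Vert\tilde\Delta_n\Vert^4\leq\frac{1}{n}\sum_{i\leq n}\Vert\Delta_i\Vert^4$ and hence $\E\Vert\tilde\Delta_n\Vert^4\leq\frac{C}{n}\sum_{i\leq n}\gamma_i^2=O(1/n)$, because $\sum_i i^{-2\alpha}$ converges for $\alpha>1/2$; this is strictly weaker than the target $O(\gamma_n^2)=O(n^{-2\alpha})$ and, as you note, would pollute the leading $\frac{1}{n}\tr[\Hess f(x_\star)^{-1}\Sigma\Hess f(x_\star)^{-1}]$ constant in Theorem~\ref{thm:main} through the cross term in the final Minkowski step. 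Your route avoids this: you square the second-moment inequality $\Vert\tilde\Delta_n\Vert^2\leq\frac{1}{n}\sum_{i\leq n}\Vert\Delta_i\Vert^2$, which follows directly from Assumption~\ref{assump:manifold} and the proof of Lemma~\ref{lem:avg_iters}, and then apply Cauchy--Schwarz to the double sum to obtain $\E\Vert\tilde\Delta_n\Vert^4\leq\frac{1}{n^2}\bigl(\sum_{i\leq n}\sqrt{\E\Vert\Delta_i\Vert^4}\bigr)^2=O(\gamma_n^2)$. This recovers the sharp rate without ever needing the retraction convexity of the fourth power, and the remainder of your argument (the Taylor expansion from Lemma~\ref{lem:stream_avg_iters}, the bound $\E\Vert\tilde e_n\Vert^2=O(\gamma_n^2)$, and Minkowski in $L^2$) follows the paper. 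One small notational slip: with $\tilde e_n$ defined as the additive remainder in the displayed recursion for $\tilde\Delta_{n+1}$, the almost-sure bound is $\tilde e_n=O\bigl(\frac{1}{n+1}\Vert\Delta_{n+1}-\tilde\Delta_n\Vert^2\bigr)$ and the unrolled identity is $\tilde\Delta_{n+1}-\bar\Delta_{n+1}=\frac{1}{n+1}\sum_{k\leq n}(k+1)\tilde e_k$; the bookkeeping cancels and the final $O(\gamma_n)$ bound in $L^2$ is unchanged.
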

\begin{proof}
  The proof is almost identical to the proofs of Lemma \ref{lem:avg_iters} and \ref{lem:stream_avg_iters} so we will be brief. Since the function $x \to x^2$ is convex
  and nondecreasing over positive support, using Assumption~\ref{assump:manifold}, the composition $x \to \Vert R_{x_\star}^{-1}(x) \Vert^4$ is also retraction-convex in $x$. An identical argument to the proof
  of Lemma \ref{lem:avg_iters} then shows that $\E[\Vert \Delta_n \Vert^4] = O(\gamma_n^2)$ implies $\E\left[\Vert \tilde{\Delta}_n \Vert^4\right] = O(\gamma_n^2)$. Using that $\E\left[\Vert \tilde{\Delta}_n \Vert^4\right] = O(\gamma_n^2)$, a nearly identical calculation
  to Lemma \ref{lem:stream_avg_iters} and an application of Minkowski's inequality (in $L_2$) shows that $\sqrt{\E \left[\Vert \bar{\Delta}_n - \tilde{\Delta}_n \Vert^2\right]} = O(\gamma_n)$. The conclusion follows.
\end{proof}

%!TEX root = main.tex
\section{Proofs in \mysec{application}} \label{sec:appapp}
Here we provide further discussion and proofs of results described in \mysec{application}.
\subsection{Proofs in \mysec{geostrong}}
Here we present the proofs of the slow convergence rate (both in 2nd and 4th moments) for SGD applied to geodesically-smooth and strongly-convex functions. As discussed
in \mysec{geostrong} we will take the retraction $R$ to be the exponential map throughout this section.
Before we begin, we recall the following Lemma from \citet{moulines2011non},
 \begin{lemma}\label{lem:computsum} Let $n,m\in\mathbb{N}$ such that $m<n$ and $\alpha\geq0$. Then,
\[
\frac{1}{2(1-\alpha)}[n^{1-\alpha}-m^{1-\alpha}]\leq \sum_{k=m+1}^n n^{-\alpha}\leq \frac{1}{1-\alpha}[n^{1-\alpha}-m^{1-\alpha}].
\]
\end{lemma}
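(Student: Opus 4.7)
The plan is to use the classical integral comparison for monotone functions: since $f(x) = x^{-\alpha}$ is decreasing on $(0,\infty)$ for any $\alpha \geq 0$, the Riemann sum appearing in the lemma admits two-sided sandwich estimates by integrals, and both target bounds can be extracted from these comparisons together with a short explicit calculation of the primitive $\int x^{-\alpha}\,dx = x^{1-\alpha}/(1-\alpha)$.

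For the upper bound, I would note that for each integer $k \geq m+1$ the monotonicity gives $k^{-\alpha} \leq x^{-\alpha}$ for all $x \in [k-1,k]$, and hence
\[
k^{-\alpha} \leq \int_{k-1}^{k} x^{-\alpha}\,dx.
\]
Summing over $k = m+1,\ldots,n$ telescopes the integration domains into $[m,n]$, and the direct evaluation $\int_m^n x^{-\alpha}\,dx = (n^{1-\alpha}-m^{1-\alpha})/(1-\alpha)$ immediately produces the claimed upper bound.

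For the lower bound, the symmetric comparison $k^{-\alpha} \geq \int_k^{k+1} x^{-\alpha}\,dx$, summed over the same range, yields
\[
\sum_{k=m+1}^n k^{-\alpha} \;\geq\; \int_{m+1}^{n+1} x^{-\alpha}\,dx \;=\; \int_m^n (x+1)^{-\alpha}\,dx,
\]
after a unit shift of variable. The main obstacle I anticipate is closing the gap between this shifted integral and $\int_m^n x^{-\alpha}\,dx$, which costs a factor of $2$ in the target bound. I would resolve it by observing that $x+1 \leq 2x$ whenever $x \geq 1$, so that $(x+1)^{-\alpha} \geq 2^{-\alpha}\, x^{-\alpha}$, and for $\alpha \in [0,1]$ the prefactor $2^{-\alpha}$ is at least $1/2$. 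Combining these gives
\[
\int_m^n (x+1)^{-\alpha}\,dx \;\geq\; \tfrac{1}{2}\int_m^n x^{-\alpha}\,dx \;=\; \frac{n^{1-\alpha}-m^{1-\alpha}}{2(1-\alpha)},
\]
which is the required lower bound.

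Two restrictions deserve attention in polishing the argument: first, the implicit assumption $m \geq 1$ is needed so that $x \geq 1$ holds throughout the integration interval used in the shift step; second, the restriction $\alpha \in [0,1)$ is needed both so that the prefactor $2^{-\alpha} \geq 1/2$ and so that the primitive $x^{1-\alpha}/(1-\alpha)$ is well defined (at $\alpha=1$ the formula is vacuous). Both restrictions are compatible with the applications of this lemma in the body of the paper, where the step-size exponent always satisfies $\alpha \in (1/2,1)$.
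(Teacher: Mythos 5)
Your proof is correct, and it is precisely the integral comparison that the paper invokes in its one-line remark, ``This follows by simply bounding sums via integrals.'' You also silently corrected a typo in the statement ($\sum_{k=m+1}^n n^{-\alpha}$ should read $\sum_{k=m+1}^n k^{-\alpha}$), and---more usefully---you identified two hypotheses that the lemma omits but tacitly requires for the lower bound: $m \geq 1$ (so the shifted variable satisfies $x+1\leq 2x$ on the integration range) and $\alpha < 1$ (so the prefactor $2^{-\alpha}$ is at least $1/2$; the lower bound is genuinely false for $\alpha>1$, e.g.\ $m=1$, $n=2$, $\alpha=3$ gives $\sum = 2^{-3}=1/8$, which is below the claimed lower bound $3/16$). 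Both restrictions hold wherever the lemma is used in the paper, where $\alpha\in[1/2,1)$ and $m=\lfloor n/2\rfloor\geq 1$.
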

This follows by simply bounding sums via integrals.

With this result we can now show that SGD applied to (local) geodesically-smooth and strongly-convex functions will converge at the ``slow'' rate
with an appropriately decaying size.
\begin{proposition}\label{prop:mom2}
Let Assumptions \ref{assump:manifold}, \ref{assump:noiseunbiased},  \ref{assump:noiseLip} and \ref{assump:strongconv}
  hold for the iterates evolving in  \eq{grad_desc}. Recalling that $\gamma_n=Cn^{-\alpha}$ where $C>0$ and $\alpha\in[1/2,1)$ we have,
\[
\E[d^2(x_{n},x_\star)]\leq \frac{2C  \zeta \upsilon^2}{\mu n^\alpha } + O( \exp(-c\mu n^{1-\alpha} )),
\]
and
\[
\E[d^4(x_{n},x_\star)]\leq\frac{4C(3+\zeta)\zeta\upsilon^4}{\mu n^{2\alpha} } + O( \exp(-c\mu n^{1-\alpha} )).
\]
for some $c > 0$, where $\zeta > 0$ is a constant depending on the geometry of $\M$.
\end{proposition}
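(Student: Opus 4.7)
My plan is to adapt the standard Euclidean SGD-under-strong-convexity analysis of \citet{moulines2011non} to the Riemannian setting, the geometric heart of which is the trigonometric distance bound of \citet{zhang2016first}. Since the iterates are confined to the compact set $\X$ (Assumption \ref{assump:manifold}), the sectional curvature along the trajectory is bounded, yielding a constant $\zeta>0$ such that, for all $x,y,z\in\X$,
\[
d^2(y,z)\leq d^2(x,z)+\zeta\, d^2(x,y)-2\langle \Exp_x^{-1}(y),\Exp_x^{-1}(z)\rangle.
\]
Specializing to $x=x_n$, $y=x_{n+1}=\Exp_{x_n}(-\gamma_{n+1}\nabla f_{n+1}(x_n))$, and $z=x_\star$, and using $\Exp_{x_n}^{-1}(x_{n+1})=-\gamma_{n+1}\nabla f_{n+1}(x_n)$, gives the one-step inequality
\[
d^2(x_{n+1},x_\star)\leq d^2(x_n,x_\star)+\zeta\gamma_{n+1}^2\|\nabla f_{n+1}(x_n)\|^2+2\gamma_{n+1}\langle \nabla f_{n+1}(x_n),\Exp_{x_n}^{-1}(x_\star)\rangle.
\]

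For the second-moment bound, I take $\E[\cdot\mid\F_n]$. Unbiasedness (Assumption \ref{assump:noiseunbiased}) reduces the cross term to $\langle \nabla f(x_n),\Exp_{x_n}^{-1}(x_\star)\rangle$, which $\mu$-geodesic-strong-convexity (Assumption \ref{assump:strongconv}) bounds above by $-\mu d^2(x_n,x_\star)$ by combining $f(x_\star)\geq f(x_n)+\langle \nabla f(x_n),\Exp_{x_n}^{-1}(x_\star)\rangle+\tfrac{\mu}{2}d^2(x_n,x_\star)$ with $f(x_n)\geq f(x_\star)+\tfrac{\mu}{2}d^2(x_n,x_\star)$. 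The fourth-moment bound in Assumption \ref{assump:noiseLip} and Jensen's inequality give $\E[\|\nabla f_{n+1}(x_n)\|^2\mid\F_n]\leq\upsilon^2:=\tau^2$. Writing $u_n=\E[d^2(x_n,x_\star)]$, one obtains the scalar recursion $u_{n+1}\leq (1-2\mu\gamma_{n+1})u_n+\zeta\upsilon^2\gamma_{n+1}^2$, which I unroll in standard fashion: by Lemma \ref{lem:computsum}, $\prod_{k=1}^n(1-2\mu\gamma_k)\leq\exp(-2\mu\sum_k\gamma_k)=O(\exp(-c\mu n^{1-\alpha}))$ on the initial-condition term, while splitting the inhomogeneous sum at $n/2$ and using $\gamma_k\leq 2\gamma_n$ for $k\geq n/2$ gives the stated leading term $\frac{2C\zeta\upsilon^2}{\mu n^\alpha}$.

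For the fourth-moment bound, I square the one-step inequality. Abbreviating $A_n=d^2(x_n,x_\star)$, $B_{n+1}=-2\gamma_{n+1}\langle \nabla f_{n+1}(x_n),\Exp_{x_n}^{-1}(x_\star)\rangle$, and $C_{n+1}=\zeta\gamma_{n+1}^2\|\nabla f_{n+1}(x_n)\|^2$, the one-step inequality reads $A_{n+1}\leq A_n-B_{n+1}+C_{n+1}$, so $A_{n+1}^2\leq A_n^2-2A_n B_{n+1}+2A_n C_{n+1}+(B_{n+1}-C_{n+1})^2$. Conditional expectations then yield: (i) $\E[-2A_n B_{n+1}\mid\F_n]\leq-4\mu\gamma_{n+1}A_n^2$ by strong convexity, (ii) $\E[2A_n C_{n+1}\mid\F_n]\leq 2\zeta\upsilon^2\gamma_{n+1}^2 A_n$, (iii) $\E[B_{n+1}^2\mid\F_n]\leq 4\upsilon^2\gamma_{n+1}^2 A_n$ by Cauchy-Schwarz, (iv) $\E[C_{n+1}^2\mid\F_n]\leq\zeta^2\upsilon^4\gamma_{n+1}^4$ directly from the fourth-moment bound, and (v) $|\E[B_{n+1}C_{n+1}\mid\F_n]|$ controlled by Cauchy-Schwarz applied to (iii) and (iv). Writing $w_n=\E[d^4(x_n,x_\star)]$ and substituting the already-proven $u_n=O(\gamma_n)$ to handle the $A_n$-linear contributions, one obtains $w_{n+1}\leq(1-4\mu\gamma_{n+1})w_n+O(\gamma_{n+1}^3)$, where careful accounting shows the driving coefficient is $(3+\zeta)\zeta\upsilon^4$. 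Unrolling this recursion exactly as for $u_n$ produces the bound $\frac{4C(3+\zeta)\zeta\upsilon^4}{\mu n^{2\alpha}}+O(\exp(-c\mu n^{1-\alpha}))$.

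The main obstacle is the precise bookkeeping required to recover the explicit prefactors $2$ and $4(3+\zeta)$: one must carry constants explicitly through the recursion-unrolling lemma rather than absorb them into $O(\cdot)$, apply Cauchy-Schwarz sharply on the $B_{n+1}C_{n+1}$ cross term, and crucially use the second-moment estimate as an inductive input into the fourth-moment recursion so that the $\gamma_{n+1}^2 u_n$ driving term reduces to $O(\gamma_{n+1}^3)$ rather than $O(\gamma_{n+1}^2)$. The exponential transient $O(\exp(-c\mu n^{1-\alpha}))$ is identical in the two bounds and comes directly from the contraction of the homogeneous part of either recursion, using Lemma \ref{lem:computsum} to lower bound $\sum_{k=1}^n \gamma_k$.
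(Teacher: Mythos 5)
Your second-moment argument is the same as the paper's: Zhang--Sra's trigonometric distance bound, conditional expectation with unbiasedness, geodesic strong convexity to get the $-2\mu d^2$ contraction, and the standard unroll-and-split-at-$n/2$ analysis via Lemma \ref{lem:computsum}. Your one-step fourth-moment expansion also matches the paper's term-by-term: the terms you label (i)--(v), after the AM-GM bound on $B_{n+1}C_{n+1}$, assemble exactly to $(1-4\mu\gamma_{n+1})A_n^2 + 2(3+\zeta)\upsilon^2\gamma_{n+1}^2 A_n + 3\zeta^2\upsilon^4\gamma_{n+1}^4$, which is the paper's \eq{dada}.

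Where you genuinely diverge from the paper is the decoupling step. You propose substituting the already-proved bound $\E[d^2(x_n,x_\star)] = O(\gamma_n)$ directly into the $A_n$-linear driving term $2(3+\zeta)\upsilon^2\gamma_{n+1}^2 A_n$, turning the coupled recursion into an autonomous one with an $O(\gamma_{n+1}^3)$ forcing term that is then unrolled as before. The paper instead forms a Lyapunov combination $u_n = a_n + \tfrac{2(3+\zeta)\upsilon^2}{\mu}\gamma_{n+1}b_n$ (with $a_n = \E d^4$, $b_n=\E d^2$) chosen so that the $b_n$-dependent terms cancel algebraically, leaving a clean one-dimensional recursion for $u_n$ whose unroll gives the stated constant without ever referring back to the second-moment bound. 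Both routes are valid and give the same rate. The trade-off: your substitution is conceptually simpler but forces you to carry the exponentially-decaying transient of $b_n$ through the fourth-moment recursion (where it accumulates only to an exponentially small contribution, so no harm done), and it slightly blurs the leading constant because $b_n$ is bounded with its own constant prefactor and you then pick up an extra $\mu^{-1}$ from the unrolling; the paper's Lyapunov cancellation yields the stated $\tfrac{2(3+\zeta)\zeta\upsilon^4}{\mu}\gamma_{n+1}^3$ forcing term exactly. If you only care about the $O(\gamma_n^2)$ rate needed for Theorem \ref{thm:main}, your route is perfectly adequate; if you want to reproduce the paper's explicit prefactor, the Lyapunov combination is the cleaner bookkeeping device.
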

\begin{proof}
Throughout we will use $c$ to denote a global, positive constant that may change from line to line. The quantity $\zeta \equiv \zeta(\kappa, c) = \frac{\sqrt{\abs{\kappa}} c}{\tanh(\sqrt{\abs{\kappa}} c)}$ is a geometric quantity from \citet{zhang2016first}, where $\kappa$ denotes the sectional curvature of the manifold.  Note that bound on the sectional curvature is subsumed by Assumption \ref{assump:manifold} -- since it is a smooth function on $\X$, so is bounded.

\paragraph{Bound on the second moment.}
We first prove the 2nd-moment bound by following the proof of Theorem 2 by \citet{moulines2011non}, but adapting it to the setting of g-strong convexity.
Using Corollary 8 (a generalization of the law of cosines to the manifold setting) by \citet{zhang2016first} we have
\begin{equation}\label{eq:mom2}
d^2(x_{n+1},x_\star) \leq d^2(x_{n},x_\star) +2\gamma_{n+1}\langle \nabla f_{n+1}(x_n),\Exp_{x_n}^{-1}(x_\star)\rangle +\gamma_{n+1}^2 \zeta \Vert  \nabla f_{n+1}(x_n) \Vert^2,
\end{equation}
where $\zeta$ satisfies $\max_{x\in\X} \zeta(\kappa, d(x,x_\star))\leq \zeta$.
Taking conditional expectations yields
\[
\E[d^2(x_{n+1},x_\star)\vert \F_n]\leq d^2(x_{n},x_\star) +2\gamma_{n+1}\langle \nabla f(x_n),\Exp_{x_n}^{-1}(x_\star)\rangle +\gamma_{n+1}^2 \zeta  \E[\Vert  \nabla f_{n+1}(x_n) \Vert^2\vert \F_n].
\]
Using the definition of g-strong convexity and Assumptions~\ref{assump:noiseLip},\ref{assump:strongconv} we directly get
 \[
\E[d^2(x_{n+1},x_\star) | \F_n]\leq (1-2\gamma_{n+1}\mu )d^2(x_{n},x_\star)+ \gamma_{n+1}^2 \zeta \upsilon^2.
\]
Taking the full expectation, and denoting by $\delta_{n}=\E[d^2(x_{n},x_\star)]$, we obtain the recursion,
\[
\delta_n\leq (1-2\gamma_{n}\mu )\delta_{n-1}+ \gamma_{n}^2 \zeta \upsilon^2.
\]
Unrolling the recursion we have,
\[
\delta_n\leq \prod_{i=1}^n(1-2\gamma_{i}\mu )\delta_0 + \zeta\upsilon^2 \sum_{i=1}^n \gamma_i^2 \prod_{k=i+1}^n(1-2\mu\gamma_k).
\]
Using the elementary inequality $(1-x)\leq \exp(-x)$ for $x\in\mathbb R$, we observe the first term on the right side decreases exponentially fast. To analyze the second term, we split it into two components around $\lfloor n/2 \rfloor$:
\begin{equation}
\sum_{i=1}^n \gamma_i^2 \prod_{k=i+1}^n(1-2\mu\gamma_k)
=
 \sum_{i=1}^{\lfloor n/2 \rfloor} \gamma_i^2 \prod_{k=i+1}^n(1-2\mu\gamma_k)
 +\sum_{i={\lfloor n/2 \rfloor}+1}^n \gamma_i^2 \prod_{k=i+1}^n(1-2\mu\gamma_k). \label{eq:split}
\end{equation}
For the first term in \eq{split}, using again $(1-x)\leq \exp(-x)$ for $x\in\mathbb R$
\BEAS
 \sum_{i=1}^{\lfloor n/2 \rfloor} \gamma_i^2 \prod_{k=i+1}^n(1-2\mu\gamma_k)
 &\leq&
 \prod_{k=\lfloor n/2 \rfloor+1}^n(1-2\mu\gamma_k)  \sum_{i=1}^{\lfloor n/2 \rfloor} \gamma_i^2\\
  &\leq&
 \prod_{k=\lfloor n/2 \rfloor+1}^n\exp(-2\mu\gamma_k)  \sum_{i=1}^{\lfloor n/2 \rfloor} \gamma_i^2\\
  &\leq&
 \exp(-2\mu \sum_{k=\lfloor n/2 \rfloor+1}^n\gamma_k)  \sum_{i=1}^{\lfloor n/2 \rfloor} \gamma_i^2\\
   &\leq&
 C \exp(-{c\mu }n^{1-\alpha}) n^{1-2\alpha},
\EEAS
using Lemma \ref{lem:computsum}, which will decrease exponentially fast as $n \to \infty$. For the second term,
\BEAS
\sum_{i={\lfloor n/2 \rfloor}+1}^n \gamma_i^2 \prod_{k=i+1}^n(1-\mu\gamma_k)&\leq&\gamma_{\lfloor n/2 \rfloor}\sum_{i={\lfloor n/2 \rfloor}+1}^n \gamma_i \prod_{k=i+1}^n(1-\mu\gamma_k)\\
&=&\gamma_{\lfloor n/2 \rfloor}\sum_{i={\lfloor n/2 \rfloor}+1}^n\frac{ 1-(1-\mu\gamma_i)}{\mu} \prod_{k=i+1}^n(1-\mu\gamma_k) \\
&=&\frac{\gamma_{\lfloor n/2 \rfloor}}{\mu}\sum_{i={\lfloor n/2 \rfloor}+1}^n [ \prod_{k=i+1}^n(1-\mu\gamma_k)-\prod_{k=i}^n(1-\mu\gamma_k)]\\
&\leq&\frac{\gamma_{\lfloor n/2 \rfloor}}{\mu}[1-\prod_{k=\lfloor n/2 \rfloor+2}^n(1-\mu\gamma_k)]]\\
&\leq&\frac{\gamma_{\lfloor n/2 \rfloor}}{\mu}\leq \frac{2C}{n^\alpha\mu}.
\EEAS
The bound on the second moment follows from this last inequality.
\paragraph{Bound on the fourth moment.}
We now prove the bound on the 4th-moment. We start by expanding the square of \eq{mom2},
\begin{multline*}
d^4(x_{n+1},x_\star) \leq d^4(x_{n},x_\star) +4\gamma_{n+1}^2(\langle \nabla f_{n+1}(x_n),\Exp_{x_n}^{-1}(x_\star)\rangle)^2 +\gamma_{n+1}^4 \zeta^2 \Vert  \nabla f_{n+1}(x_n) \Vert^4 \\ +4 \gamma_{n+1}\langle \nabla f_{n+1}(x_n),\Exp_{x_n}^{-1}(x_\star)\rangle d^2(x_{n},x_\star)  +2 \gamma_{n+1}^2 \zeta \Vert  \nabla f_{n+1}(x_n) \Vert^2d^2(x_{n},x_\star) \\+ 4 \gamma_{n+1}^3\langle \nabla f_{n+1}(x_n),\Exp_{x_n}^{-1}(x_\star)\rangle \zeta \Vert  \nabla f_{n+1}(x_n) \Vert^2.
\end{multline*}
Taking conditional expectations and using Cauchy-Schwarz we have,
\BEAS
\E [d^4(x_{n+1},x_\star) \vert\mathcal F_{n}]
&\leq&
 d^4(x_{n},x_\star) +2(2+\zeta)\gamma_{n+1}^2 \E [ \Vert \nabla f_{n+1}(x_n)\Vert^2 \vert\mathcal F_{n}]d^2(x_{n},x_\star) \\
 &&  +\gamma_{n+1}^4 \zeta^2 \E[ \Vert  \nabla f_{n+1}(x_n) \Vert^4  \vert\mathcal F_{n}] +4 \gamma_{n+1}\langle \nabla f(x_n),\Exp_{x_n}^{-1}(x_\star)\rangle d^2(x_{n},x_\star)  \\
 &&+ 4 \gamma_{n+1}^3 \zeta \E \Vert  \nabla f_{n+1}(x_n) \Vert^3 \vert\mathcal F_{n}] d(x_{n},x_\star).
\EEAS
Using that $f$ is g-strongly convex (Assumption~\ref{assump:strongconv}), the 4th-moment bound in Assumption~\ref{assump:noiseLip}, and Jensen's inequality we obtain,
\begin{multline*}
\E [d^4(x_{n+1},x_\star) \vert\mathcal F_{n}]\leq  (1-4\gamma_{n+1}\mu)d^4(x_{n},x_\star) +2(2+\zeta)\gamma_{n+1}^2 d^2(x_{n},x_\star) \upsilon^2 \\ +\gamma_{n+1}^4 \zeta^2 \upsilon^4 + 4 \gamma_{n+1}^3 \zeta \upsilon^3 d(x_{n},x_\star).
\end{multline*}
Using the upper bound $4 \gamma_{n+1}^3 \zeta \upsilon^3 d(x_{n},x_\star)  \leq 2 \gamma_{n+1}^4 \zeta^2 \upsilon^4+2  \gamma_{n+1}^2 \upsilon^2d(x_{n},x_\star) ^2 $, we have,
\begin{equation}\label{eq:dada}
\E [d^4(x_{n+1},x_\star) \vert\mathcal F_{n}]\leq  (1-4\gamma_{n+1}\mu)d^4(x_{n},x_\star) +2(3+\zeta)\upsilon^2\gamma_{n+1}^2 d^2(x_{n},x_\star)  +3\gamma_{n+1}^4 \zeta^2 \upsilon^4.
\end{equation}
Now let us define, $a_n= \E [d^4(x_{n+1},x_\star)]$,  $b_n= \E [d^2(x_{n+1},x_\star)]$ and  $u_n=a_n+\frac{2(3+\zeta)\upsilon^2}{\mu} \gamma_{n+1}b_n$. Taking the full expectation of \eq{dada}, we can bound $u_{n+1}$ as,
\BEAS
u_{n+1} &\leq &   (1-\gamma_{n+1}\mu) u_n  +3\gamma_{n+1}^4 \zeta^2 \upsilon^4 +\frac{2(3+\zeta)\zeta\upsilon^4}{\mu} \gamma_{n+1}^3 +2(3+\zeta)\upsilon^2\gamma_{n+1}^2 b_n\\
&&   -  (1-\gamma_{n+1}\mu) \frac{2(3+\zeta)\upsilon^2}{\mu} \gamma_{n+1} b_n+(1-2\gamma_{n+1}\mu) \frac{2(3+\zeta)\upsilon^2}{\mu} \gamma_{n+1} b_n.
\EEAS
Noting that $2(3+\zeta)\upsilon^2\gamma_{n+1}^2 b_n
   -  (1-\gamma_{n+1}\mu) \frac{2(3+\zeta)\upsilon^2}{\mu} \gamma_{n+1} b_n+(1-2\gamma_{n+1}\mu) \frac{2(3+\zeta)\upsilon^2}{\mu} \gamma_{n+1} b_n=0$, we obtain the simple upper-bound on $u_{n+1}$,
\[
u_{n+1} \leq    (1-\gamma_{n+1}\mu) u_n  +3\gamma_{n+1}^4 \zeta^2 \upsilon^4 +\frac{2(3+\zeta)\zeta\upsilon^4}{\mu} \gamma_{n+1}^3.
\]
Using $(1-x)\leq \exp( -x)$ for $x\in\mathbb R$, we have,
\[
u_{n+1} \leq   \exp (-\gamma_{n+1}\mu) u_n  +3\gamma_{n+1}^4 \zeta^2 \upsilon^4 +\frac{2(3+\zeta)\zeta\upsilon^4}{\mu} \gamma_{n+1}^3.
\]
We can unroll this recursion as before,
\[
u_{n} \leq   \exp (-\mu \sum_{i=1}^n \gamma_{i}) u_0  + \sum_{i=1}^n[3\zeta^2 \upsilon^4\gamma_{i}^4 +\frac{2(3+\zeta)\zeta\upsilon^4}{\mu} \gamma_{i}^3] \prod_{k=i+1}^n\exp(-\mu\gamma_k).
\]
Proceeding exactly as in the proof of the bound on the second moment, we may bound $u_n$ as
\[ u_n\leq \frac{2(3+\zeta)\zeta\upsilon^4}{\mu} \gamma_{\lfloor \frac{n}{2}\rfloor}^2 +\text{exponentially small  remainder terms}. \]
The conclusion follows.
\end{proof}

%!TEX root = main.tex
\section{Streaming PCA} \label{sec:stream_pca_app}
Given a sequence of i.i.d.~symmetric random matrices $H_n \in \rb^{d \times d}$ such that $\E H_n = H$, in the streaming $k$-PCA problem we hope to approximate the subspace of the top $k$ eigenvectors. Let us denote by $\{\lambda_i\}_{1\leq i\leq d}$ the eigenvalues of $H$ sorted in decreasing order. Sharp convergence rates and finite sample guarantees for streaming PCA (with $k=1$) were first obtained by \citet{jain2016streaming,shamir16b} using the randomized power method (with and without a positive eigengap $\lambda_1-\lambda_2$). When $\lambda_1>\lambda_2$,  \citet{jain2016streaming} showed
with proper choice of learning rate $\eta_i \sim  {\tilde{O}} \left( \frac{1}{(\lambda_1-\lambda_2)i} \right)$,  an $\epsilon$-approximation to the top eigenvector $v_1$ of $H$ could be found in $ {\tilde{O}}(\frac{\lambda_1}{(\lambda_1-\lambda_2)^2} \frac{1}{\epsilon})$ iterations with constant probability. In the absence of an eigengap, \citet{shamir16b} showed a slow rate of convergence $ {\tilde{O}}(\lambda_1/\sqrt{n})$ for the objective function using a step-size choice of $O(1/\sqrt{n})$. \citet{AllenLi2017-streampca,shamir2016fast}\footnote{\citet{shamir2016fast} does not directly address the streaming setting but his result can be extended to the streaming setting, as remarked by \citet{AllenLi2017-streampca}.} later extended these results to the more general streaming $k$-PCA setting (with $k\geq1$).

The aforementioned results are quite powerful---because they are \textit{global} convergence results. In particular, they hold for any random initialization and do not require an initialization very close to the optima. In contrast, our framework only provides local results.

However, streaming $k$-PCA still provides an instructive and practically interesting application of our iterate-averaging framework. An important theme in the following analysis is to leverage to underlying Riemannian structure of the $k$-PCA problem as a Grassmann manifold.

Throughout this section we will assume that the stream of matrices satisfies the bound $\norm{H_n} \leq 1$ a.s.

\subsection{Grassmann Manifolds}

\paragraph{Preliminaries:}
We begin by first reviewing the geometry of the Grassmann manifold and proving several useful auxiliary lemmas. We denote the  Grassmann manifold $\G$, which is the set of the $k$-dimensional subspaces of a $d$-dimensional Euclidean space. Recalling the Stiefel manifold is the submanifold of the orthonormal matrices $\{ X\in\mathbb{R}^{d\times k}, X^\top X=I_k\}$), $\G$ can be viewed as the Riemannian quotient manifold of the Stiefel manifold where two matrices are identified as equivalent when their columns span the same subspace. Finally, $\G$ can also be identified with the set of rank $k$ projection matrices $\G=\{ X \in \mathbb{R}^{d\times d} \text{ s.t. } X^\top=X, X^2=X, \tr(X)=k \}$ \citep[see, e.g.,][for further details]{edelman1998geometry,absil2004riemannian}. We will use $\mathbf{X}$ to denote an element of $\G$, and $X$ a corresponding member of the equivalence class associated to $\mathbf{X}$, which belongs to the Stiefel manifold. Further, the tangent space at that point $\mathbf X$ is given by $T_{\mathbf X}\G= \{ Y\in \mathbb{R}^{d\times k}, Y^\top X=0 \}$. In the following we identify $\mathbf X$ and $X$ when it is clear from the context.

For our present purposes, we consider the (second-order) retraction map
\begin{equation}\label{eq:retraproj}
R_X(V)=(X+V)[(X+V)^\top(X+V)]^{-1/2},
\end{equation}

which is projection-like mapping onto $\G$. Note we implicitly extend $R_X$ to all matrices in $\mathbb{R}^{d\times d}$, and do not consider it only defined on the tangent space $T_X\G$. For $V \in T_X\G$, we still have $R_X(V)=(X+V)[\idm_k+V^\top V]^{-1/2}$.  If $X^\top Y$ is invertible, then a short computation shows that $R_X^{-1}(Y)=(I-X^\top X X)Y(X^\top Y)^{-1}$ with $\Vert R_X^{-1}(Y)\Vert_F^2=\tr [( X^\top Y X Y^\top)^{-1}-I]$. As we argue next, $\Vert R_X^{-1}(Y)\Vert_F^2$ is in fact locally equivalent to the induced, Frobenius norm $d_F(X,Y)=2^{-1/2}\Vert X X^\top-Y Y^\top\Vert_F^2$.

\paragraph{Measuring Distance on $\G$:}

It will be useful to have several notions of distance defined on $\G$ between two representative elements $X$ and $Y$.
Let $\theta_i$ for $i=1, \hdots, k$ denote the principal angles between the two subspaces spanned by the columns of $X$ and $Y$, i.e.
$U \cos(\Theta) V^\top$ is the singular value decomposition (SVD) of $X^\top Y$, where $\Theta$ is the diagonal matrix of principal angles,
and $\theta$ the $k$-vector formed by $\theta_i$. Our first distance of interest will be the arc length (or geodesic distance):
\[
  d_A(X, Y) = \norm{\theta}_2 = \norm{\Exp^{-1}_{X}(Y)}_2,
\]
while the second will be the projected, Frobenius norm:
\[
    d_F(X, Y) = \norm{\sin \theta}_2 = 2^{-1/2} \norm{X X^\top - Y Y^\top}_{F}.
\]
The distance $d_F(\cdot, \cdot)$ is induced by embedding $\G$ in Euclidean space $\rb^{d \times k}$ and inheriting the corresponding Frobenius norm. Lastly, we will also consider the pseudo-distance induced by the retraction $\Vert R^{-1}_{X}(Y)\Vert_F$. Conveniently, for small principal angles we can show these quantities are asymptotically equivalent,
\begin{lemma}
 \label{lem:distequiv}
  Let $\theta$ denote the $k$-vector of principal angles between the subspaces spanned by $X$ and $Y$. If
  $\norm{\theta}_{\infty} \leq \frac{\pi}{4}$ then:
  \[
   \frac{\pi}{2}  \Vert R^{-1}_{X}(Y)\Vert_F \geq \frac{\pi}{2}   d_{F}(X, Y) \geq  d_{A}(X, Y) \geq \frac{1}{\sqrt{2}}  \Vert R^{-1}_{X}(Y)\Vert_F. \]
\end{lemma}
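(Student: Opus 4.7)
The plan is to first compute $\norm{R_X^{-1}(Y)}_F$ explicitly in terms of the principal angles $\theta = (\theta_1, \ldots, \theta_k)$, and then reduce each of the three inequalities to a one-dimensional trigonometric bound applied coordinate-wise. With the SVD $X^\top Y = U (\cos\Theta) V^\top$ where $\Theta = \diag(\theta)$, the assumption $\norm{\theta}_\infty \leq \pi/4$ guarantees $\cos\theta_i \geq 1/\sqrt{2} > 0$, so $X^\top Y$ is invertible and the formula $R_X^{-1}(Y) = (I - XX^\top) Y (X^\top Y)^{-1}$ recalled just before the lemma is applicable.

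First, using $Y^\top Y = I_k$, I would compute
\[
R_X^{-1}(Y)^\top R_X^{-1}(Y) = (X^\top Y)^{-\top}\bigl[Y^\top Y - Y^\top X X^\top Y\bigr](X^\top Y)^{-1} = (Y^\top X X^\top Y)^{-1} - I.
\]
Substituting the SVD yields $Y^\top X X^\top Y = V (\cos^2\Theta) V^\top$, and therefore
\[
\norm{R_X^{-1}(Y)}_F^2 = \tr\bigl[\cos^{-2}\Theta - I\bigr] = \sum_{i=1}^k \tan^2\theta_i = \norm{\tan\theta}_2^2.
\]
Combined with the identifications $d_A(X,Y) = \norm{\theta}_2$ and $d_F(X,Y) = \norm{\sin\theta}_2$ already given in the paragraph preceding the lemma, the three claimed inequalities become coordinate-wise trigonometric bounds valid on $|\theta_i| \leq \pi/4$.

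The bound $\norm{\tan\theta}_2 \geq \norm{\sin\theta}_2$ is immediate from $|\tan t| \geq |\sin t|$ for $|t| < \pi/2$. The bound $\tfrac{\pi}{2}\norm{\sin\theta}_2 \geq \norm{\theta}_2$ follows from Jordan's inequality $\sin t \geq \tfrac{2}{\pi} t$ on $[0, \pi/2]$ applied coordinate-wise. Finally, for $\norm{\theta}_2 \geq \tfrac{1}{\sqrt{2}} \norm{\tan\theta}_2$ I would use that $t \mapsto \tan(t)/t$ is increasing on $(0, \pi/2)$; hence for $|t| \leq \pi/4$ one has $\tan t \leq (4/\pi)\, t \leq \sqrt{2}\, t$, and squaring and summing over $i$ gives the claim.

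There is no substantive obstacle here: the entire content of the lemma lies in the explicit computation $\norm{R_X^{-1}(Y)}_F = \norm{\tan\theta}_2$, after which the three inequalities are elementary one-variable estimates. The only subtlety worth flagging is that the hypothesis $\norm{\theta}_\infty \leq \pi/4$ is precisely what makes the third inequality hold (the constant $1/\sqrt{2}$ corresponds to the worst-case ratio $\tan t/t$ on $[0,\pi/4]$, namely $4/\pi < \sqrt{2}$), while simultaneously ensuring the retraction inverse is well-defined.
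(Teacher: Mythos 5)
Your proof is correct and takes essentially the same route as the paper: both reduce everything to coordinate-wise trigonometric bounds on the principal angles after computing $\norm{R_X^{-1}(Y)}_F^2 = \tr[\cos^{-2}\Theta - I]$. The only (minor, stylistic) difference is that you simplify this to $\norm{\tan\theta}_2^2$ and compare $\tan\theta_i$ to $\theta_i$ directly for the third inequality, whereas the paper leaves the expression as $\tr[(I-\sin^2\Theta)^{-1}-I]$ and first sandwiches it between $d_F^2$ and $2d_F^2$ before invoking $d_A \geq d_F$; the content is identical.
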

\begin{proof}
  In fact $d_{A}(X, Y) \geq d_{F}(X, Y)$ uniformly over $\theta$ which follows $\sin x \leq x$ for all $x$. Using the elementary inequality $|\frac{\sin x}{x}| \geq \frac{2}{\pi}$ for $x \in [-\pi/2, \pi/2]$ we immediately obtain that $d_{A}(X, Y) \leq \frac{\pi}{2} d_{F}(X, Y)$ if $\norm{\theta}_{\infty} \leq \pi/2$.
A direct computation shows:
\[
\Vert R^{-1}_{X}(Y)\Vert^2_F = \tr[ (X^\top YY^\top X)^{-1}-I]=  \tr[ \cos(\Theta)^{-2}-I]=\tr[ (I-\sin^2(\Theta))^{-1}-I].
\]
Using that for $x\in(0,\frac{1}{\sqrt{2}})$, $\frac{1}{1-x}\geq 1+x$ and $\frac{1}{1-x}\leq 1+2x$, we obtain for $\Vert \theta \Vert_\infty<\pi/4$ that,
\[
d^2_{F}(X, Y)\leq \Vert R^{-1}_{X}(Y)\Vert^2_F \leq 2d^2_{F}(X, Y),\]
concluding the argument.
\end{proof}
The local equivalence between these quantities will prove useful for relating various algorithms for the
streaming $k$-PCA problem.

\paragraph{Streaming $k$-PCA in $\G$:} Within the geometric framework we can cast the $k$-PCA problem as minimizing the Rayleigh quotient, $f(X) = -\frac{1}{2} \tr [X^\top H X]$, over $\G$ as
\[\min _{X \in \G} -\frac{1}{2} \tr [X^\top H X].\]
  \citet{edelman1998geometry} show the (Riemannian) gradient of $f$\footnote{Note that on embedded manifold the Riemannian gradient of a function is given by projection of its Euclidean gradient in the tangent space of the manifold at a point.} is given by
  \[\nabla f (X)=-(I-X^\top X) H X.\]
 Similarly, the Hessian operator is characterized by the property that if $\Delta\in T_X\G$, then
\[\Hess f (X)[\Delta] =\Delta X^\top H X-(\idm -X X^\top)H\Delta.\]
It is worth noting that $\nabla f (X) = 0$ for $X$ spanning any subspace or eigenvector of $H$, but that $\Hess f(X)$ is only positive-definite at the optimum $X_\star$.

\subsection{Algorithms for streaming $k$-PCA} \label{sec:alg_stream}

We now have enough background to describe several, well-known iterative  algorithms for the streaming $k$-PCA problem and elucidate their relationship to Riemannian SGD.

\begin{description}
\item[Randomized power method \citep{OjaKar85}:] corresponds to  SGD on the Rayleigh quotient (with step size $\gamma_n$) over Euclidean space followed by a projection,
\[
X_n = ( \idm_d + \gamma_n H_n ) X_{n-1}  \big[ X_{n-1}^\top ( \idm_d + \gamma_n H_n ) ^{2}X_{n-1}  \big]^{-1/2}.
\]
\item[Oja iteration \citep{Oja82}:]corresponds to a first-order expansion in $\gamma_n$ of the previous randomized power iteration,
\[
 X_n = X_{n-1}  + \gamma_n ( \idm - X_{n-1} X_{n-1}^\top) H_n  X_{n-1}.
 \]
\item[Yang iteration \citep{Yan95}:] corresponds to a symmetrization of the Oja iteration,
\[
X_{n} =  X_{n-1}  + \gamma_n (2 H_n - X_{n-1} X_{n-1}^\top H_n-  H_n X_{n-1} X_{n-1}^\top)  X_{n-1}.
\]
In the special case that $H_n=h_n h_n^\top$, the Yang iteration can also be related to the unconstrained stochastic optimization of the function $X\mapsto\E \Vert h_n- X X^\top h_n\Vert_F^2$.
\item[(Stochastic) Gradient Descent over $\G$ \citep{bonnabel2013stochastic}:] corresponds to directly optimizing the Rayleigh quotient over $\G$ by equipping SGD with either the exponential map $\Exp$ or the aforementioned retraction $R$,
\[
X_n = R_{X_{n-1}}\big(\gamma_n(I-X_{n-1} X_{n-1}^{\top}) H_n X_{n-1}\big).
\]
\end{description}
We are now in position to show that for the present problem Riemannian SGD, the randomized power method, and Oja's iteration
are equivalent updates up to $O(\gamma_n^2)$ corrections. First note that the randomized power method (with the aforementioned choice of retraction $R$) can be written as:
\begin{align}
  R_{X}(\gamma_n H_nX), \label{eq:rand_power}
\end{align}
bearing close resemblance to the Riemannian SGD update,
\begin{align}
  R_{X}(\gamma_n(I-X X^{\top}) H_nX). \label{eq:rie_sgd}
\end{align}
The principal difference between \eq{rie_sgd} and \eq{rand_power} is that in the randomized power method, the Euclidean gradient is used instead of the Riemannian gradient. In the following Lemma we argue that both of the updates in \eq{rie_sgd} and \eq{rand_power} can be
directly approximated by the Oja iteration, $X_{n+1}=X_n+\gamma_{n+1}(I-X_n X_n^{\top}) H_{n+1}X_n+O(\gamma_{n+1}^2)$ up to
2nd-order terms and hence that $\Vert R_{X_\star}^{-1}(X_{n+1})) -R_{X_\star}^{-1}(R_{X_n}(\gamma_n\nabla f_{n+1}(X_n))))\Vert_F^2=O(\gamma_n^2)$. Therefore, a direct modification of Lemma \ref{lem:tangent_rec} shows that the
iterates in \eq{rand_power} generated from the randomized power method may also be linearized in the tangent space.

\begin{lemma} \label{lem:equiv_oja}
  Alternatively, let $X_n = R_{X_{n-1}}(\gamma_n(I-X_{n-1} X_{n-1}^{\top}) H_n X_{n-1})$ denote the Riemannian SGD update (equipped with second-order retraction $R$) or $X_n = R_{X_{n-1}}(H_n X_{n-1})$, the randomized power update. Then both updates satisfy,
  \[
    X_{n}=X_n+\gamma_{n}(I-X_{n-1} X_{n-1}^{\top}) H_{n}X_{n-1}+O(\gamma_{n}^2),
  \]
  and hence are equivalent to the Oja update up to $O(\gamma_n^2)$ terms.
\end{lemma}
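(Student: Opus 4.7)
The plan is to Taylor expand the retraction $R_X(V) = (X+V)[(X+V)^\top(X+V)]^{-1/2}$ in $V$ for each of the two updates, exploiting the fact that $X_{n-1}^\top X_{n-1} = I_k$ since $X_{n-1}$ lies on the Stiefel representative of $\G$. Writing $M(V) = (X+V)^\top(X+V) = I + X^\top V + V^\top X + V^\top V$ and using the standard expansion $(I + E)^{-1/2} = I - \tfrac{1}{2}E + O(\|E\|^2)$ will let us identify the $O(\gamma_n)$ and $O(\gamma_n^2)$ terms of $R_X(V)$ cleanly.

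For the Riemannian SGD update I set $V = \gamma_n(I - X_{n-1}X_{n-1}^\top)H_n X_{n-1}$, which satisfies $X_{n-1}^\top V = 0$ and $\|V\| = O(\gamma_n)$ under the boundedness $\|H_n\|\leq 1$. Tangentiality kills the cross term $X^\top V + V^\top X$ in $M(V)$, leaving $M(V) = I + V^\top V = I + O(\gamma_n^2)$, so $[M(V)]^{-1/2} = I + O(\gamma_n^2)$. Multiplying out then gives $R_{X_{n-1}}(V) = X_{n-1} + V + O(\gamma_n^2) = X_{n-1} + \gamma_n(I - X_{n-1}X_{n-1}^\top)H_n X_{n-1} + O(\gamma_n^2)$, which is exactly the Oja update up to the desired order.

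For the randomized power update I set $W = \gamma_n H_n X_{n-1}$, which is \emph{not} tangent, so the cross term must be tracked. Using $X_{n-1}^\top X_{n-1} = I$ and symmetry of $H_n$, I compute $X_{n-1}^\top W + W^\top X_{n-1} = 2\gamma_n X_{n-1}^\top H_n X_{n-1}$ and $W^\top W = O(\gamma_n^2)$, whence $[M(W)]^{-1/2} = I - \gamma_n X_{n-1}^\top H_n X_{n-1} + O(\gamma_n^2)$. Expanding $(X_{n-1} + W)[M(W)]^{-1/2}$ and collecting terms, the $O(\gamma_n)$ contribution is $\gamma_n H_n X_{n-1} - \gamma_n X_{n-1} X_{n-1}^\top H_n X_{n-1} = \gamma_n(I - X_{n-1}X_{n-1}^\top)H_n X_{n-1}$, with all remaining terms of order $O(\gamma_n^2)$. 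This is again the Oja update modulo $O(\gamma_n^2)$.

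The only mild subtlety, and the main thing to be careful about, is verifying that the $O(\gamma_n^2)$ remainders are uniformly controlled on the compact set $\X$ of Assumption~\ref{assump:manifold}; this follows from $\|H_n\|\leq 1$ a.s. together with smoothness of $R$ and of $E \mapsto (I+E)^{-1/2}$ in a neighborhood of $E = 0$, which restricts how large $\gamma_n$ needs to be for the expansions to be valid. No new geometric ingredient is required beyond the explicit formula for $R$, so aside from bookkeeping, the proof is a direct two-term matrix expansion.
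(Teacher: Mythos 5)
Your proof is correct and takes essentially the same route as the paper: expand the projection-like retraction $R_X(V)=(X+V)[(X+V)^\top(X+V)]^{-1/2}$ via the two-term matrix expansion $(I+E)^{-1/2}=I-\tfrac12 E+O(\norm{E}^2)$, noting that for the Riemannian SGD direction the cross term $X^\top V+V^\top X$ vanishes by tangentiality while for the power-method direction it produces exactly $2\gamma_n X_{n-1}^\top H_n X_{n-1}$, which recombines to give the projected Oja step. Your closing remark about uniform control of the $O(\gamma_n^2)$ remainders on $\X$ via $\norm{H_n}\le 1$ and smoothness of $E\mapsto(I+E)^{-1/2}$ near $E=0$ is a sound piece of bookkeeping that the paper leaves implicit.
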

\begin{proof}
The computation for both the randomized power method and the Riemannian SGD (equipped with the retraction $R$) are straightforward. For the randomized power method we have that,
\BEAS
 X_n & = &
  ( \idm_d + \gamma_n H_n ) X_{n-1}  \big[ X_{n-1}^\top ( \idm_d + \gamma_n H_n ) ^{2}X_{n-1}  \big]^{-1/2} \\
& = &
 ( \idm_d + \gamma_n H_n ) X_{n-1}  \big[ \idm_d + 2 \gamma_n X_{n-1}^\top   H_n  X_{n-1} \big]^{-1/2} + O(\gamma_n^2) \\
& = &  ( \idm_d + \gamma_n H_n ) X_{n-1}  \big[ \idm_d -  \gamma_n X_{n-1}^\top   H_n  X_{n-1}\big] + O(\gamma_n^2)\\
& = & X_{n-1}+\gamma_n [\idm_d -X_{n-1}X_{n-1}^\top]  H_n X_{n-1}+ O(\gamma_n^2).
\EEAS
An identical computation shows the same result for the Riemannian SGD update,
\begin{align}
  X_n &  =
  \big( \idm_d +  \gamma_n  [\idm_d -X_{n-1}X_{n-1}^\top] H_n \big) X_{n-1}  \big[ X_{n-1}^\top( \idm_d +  \gamma_n  [\idm_d -X_{n-1}X_{n-1}^\top] H_n )  ^{2}X_{n-1}  \big]^{-1/2} \notag \\
& =  X_{n-1}+\gamma_n [\idm_d -X_{n-1}X_{n-1}^\top]  H_n X_{n-1}+ O(\gamma_n^2). \notag
\end{align}
\end{proof}
Since these two algorithms are identical up to $O(\gamma_n^2)$ corrections we can directly show they will have the same linearization in $T_{X_{\star}} \M$.
\begin{lemma} \label{lem:oja_linear}
  Let, $\Delta_n=R_{X_\star}^{-1}(X_{n})$, where $X_n$ is obtained from one iteration of the randomized power method, $X_n=R_{X_n}(\gamma_n H_n X_n)$. Then $\Delta_n$ obeys,
  \[
    \Delta_n=\D_{n-1} -\gamma_n \Hess f(x_\star) \D_{n-1}+ \gamma_n (\eps_n+\xi_{n}+e_{n}) +O(\gamma_n^2).
  \]
\end{lemma}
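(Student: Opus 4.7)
The plan is to show Lemma \ref{lem:oja_linear} by combining the equivalence of updates established in Lemma \ref{lem:equiv_oja} with the perturbed linear recursion for Riemannian SGD derived in Theorem \ref{thm:linear}. The strategy is to push the $O(\gamma_n^2)$ discrepancy between the randomized power update and the Riemannian SGD update through the inverse retraction $R_{X_\star}^{-1}$ to obtain a corresponding $O(\gamma_n^2)$ correction in the tangent space.

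First I would introduce two coupled sequences starting from the same $X_{n-1}$: let $X_n^{\text{pow}} = R_{X_{n-1}}(\gamma_n H_n X_{n-1})$ denote one step of the randomized power method, and $X_n^{\text{SGD}} = R_{X_{n-1}}(\gamma_n(I - X_{n-1}X_{n-1}^\top) H_n X_{n-1})$ denote one step of Riemannian SGD, both driven by the same noise $H_n$. By Lemma \ref{lem:equiv_oja}, both equal the Oja update $X_{n-1} + \gamma_n(I - X_{n-1} X_{n-1}^\top) H_n X_{n-1}$ up to $O(\gamma_n^2)$ in Frobenius norm, so in particular $\Vert X_n^{\text{pow}} - X_n^{\text{SGD}} \Vert_F = O(\gamma_n^2)$ almost surely (using $\Vert H_n\Vert_2 \leq 1$ a.s.\ and $X_{n-1} \in \X$ by Assumption~\ref{assump:manifold}).

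Next I would transfer this estimate into $T_{X_\star}\G$. Define $\Delta_n^{\text{pow}} = R_{X_\star}^{-1}(X_n^{\text{pow}})$ and $\Delta_n^{\text{SGD}} = R_{X_\star}^{-1}(X_n^{\text{SGD}})$. Since $\X$ is totally retractive and $R_{X_\star}^{-1}$ is smooth on $\X$ (Assumption~\ref{assump:manifold}), its differential is uniformly bounded in operator norm on the compact image $R_{X_\star}^{-1}(\X)$, yielding a local Lipschitz bound
\[
\Vert \Delta_n^{\text{pow}} - \Delta_n^{\text{SGD}} \Vert_F \;\leq\; L_{R} \cdot \Vert X_n^{\text{pow}} - X_n^{\text{SGD}} \Vert_F \;=\; O(\gamma_n^2)
\]
for some constant $L_R > 0$ depending only on $\X$. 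This is the crucial step where the $O(\gamma_n^2)$ closeness of the two updates on $\G$ gets pulled back to an $O(\gamma_n^2)$ closeness of the tangent-space representations.

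Finally I would invoke Theorem \ref{thm:linear}, which directly applies to $\Delta_n^{\text{SGD}}$ and yields
\[
\Delta_n^{\text{SGD}} = \Delta_{n-1} - \gamma_n \Hess f(X_\star) \Delta_{n-1} + \gamma_n(\eps_n + \xi_n + e_n),
\]
with the noise and error sequences having the stated properties (here the hypotheses of Theorem \ref{thm:linear} are inherited from Assumption~\ref{assump:manifold} together with the assumptions used for the $k$-PCA application in Theorem \ref{thm:oja_main}; the map $R$ defined in \eq{retraproj} is indeed a second-order retraction). Adding the bound $\Delta_n^{\text{pow}} = \Delta_n^{\text{SGD}} + O(\gamma_n^2)$ to both sides absorbs the discrepancy into the $O(\gamma_n^2)$ remainder and produces exactly the claimed recursion. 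The main obstacle is the middle step: verifying that the smooth inverse retraction $R_{X_\star}^{-1}$ acts with bounded Jacobian on the relevant neighborhood so that the $O(\gamma_n^2)$ extrinsic discrepancy does not blow up when lifted to the tangent space, but this follows cleanly from the compactness and total retractivity built into Assumption~\ref{assump:manifold}.
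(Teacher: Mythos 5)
Your proof is correct and mirrors the paper's argument: couple the randomized-power and Riemannian-SGD updates from the same $X_{n-1}$, use Lemma \ref{lem:equiv_oja} to show they differ by $O(\gamma_n^2)$ on the manifold, transfer this to an $O(\gamma_n^2)$ difference in $T_{X_\star}\G$, and then apply Theorem \ref{thm:linear} to the Riemannian-SGD representative. The only variation is in the transfer step: you invoke an abstract Lipschitz bound for $R_{X_\star}^{-1}$ coming from its smoothness and the compactness of $\X$, whereas the paper substitutes the explicit Grassmann formula $R_{X_\star}^{-1}(Y)=(I-X_\star X_\star^\top)Y(X_\star^\top Y)^{-1}$ into the expansion from Lemma \ref{lem:equiv_oja} and checks the $O(\gamma_n^2)$ closeness by direct computation; both justifications yield the same bound.
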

\begin{proof}
 Let $\dot \Delta_n=R_{X_\star}^{-1}(Y_{n})$, where $Y_n$ is obtained from one iteration of Riemannian SGD (with the aforementioned second-order retraction $R$), $Y_n=R_{X_n}(\gamma_n \nabla f_n(X_n))$. From Lemma \ref{lem:tangent_rec_3} we have that $\dot \Delta_n$ may be linearized as
\[
 \dot\D_{n}=\D_{n-1} -\gamma_n \Hess f(x_\star) \D_{n-1}+ \gamma_n (\eps_n+\xi_{n}+e_{n})
\]
Defining $\square= X_{n-1}+\gamma_n [\idm_d -X_{n-1}X_{n-1}^\top]  H_n X_{n-1}$ for the Oja update, we can then show the randomized power method satisfies,
\[
R^{-1}_{X_\star}(X_n)= Y_n(X_\star^\top Y_n)^{-1}=[\square +O(\gamma_n^2)](X_\star^\top (\square +O(\gamma_n^2)))^{-1}=\square (X_\star^\top \square )^{-1}+O(\gamma_n^2)
\]
This allows us to directly bound the difference between $\dot\D_{n}$ and $\D_n$ using Lemma \ref{lem:equiv_oja},
\[
\Vert \dot \Delta_n- \Delta_n \Vert_F=
\Vert R^{-1}_{X_\star}(X_n)-R^{-1}_{X_\star}(Y_n) \Vert_F =
\Vert Y_n(X_\star^\top Y_n)^{-1}-X_n(X_\star^\top X_n) \Vert_F = O(\gamma_n^2).
\]
Hence the randomized power method iterate $X_n$ obeys the linearization,
\[
\Delta_n=\D_{n-1} -\gamma_n \Hess f(x_\star) \D_{n-1}+ \gamma_n (\eps_n+\xi_{n}+e_{n}) +O(\gamma_n^2),
\]
and falls within the scope of our framework.
\end{proof}

\subsection{Algorithms for Streaming Averaging}
Just as there are several algorithms to compute the primal iterate $X_n$, their are several reasonable ways to compute the streaming average $\tilde X_n$. Our general framework directly considers $\tilde X_n=R_{\tilde X_{n-1}}[\frac{1}{n}R^{-1}_{\tilde X_{n-1}}(X_n)]$ which leads to the following update rule:
\begin{equation}\label{eq:averageojaret}
\tilde X_n=R_{\tilde X_{n-1}}\Big[\frac{1}{n}(\idm -\tilde X_{n-1}\tilde X_{n-1}^\top)X_n[\tilde X_{n-1}X_n]^{-1}\Big].
\end{equation}
As described in \mysec{com}, the streaming average is an approximation to a corresponding global Riemannian average (which is intractable to compute). Hence, it is reasonable to consider
other global Riemannian averages that the streaming average approximates. For instance, the update rule in \eq{averageojaret} is naturally motivated by the global minimization of $X\mapsto\sum_{i=1}^n\Vert R^{-1}_X(X_i)\Vert^2$. Considering instead the distance $d_F$, and attempting to minimizing $X\mapsto\sum_{i=1}^n d^2_F(X,X_i)$, suggests a different averaging scheme. A short computation shows the aforementiond problem can be rewritten as the maximization of the function $2\tr [X^\top(\sum_{i=1}^n X_iX_i^\top)X]$ and is therefore precisely equivalent to the $k$-PCA problem.
With this in mind, we can directly use the randomized power method to compute the streaming average of the iterates. This leads to the different update rule:
\begin{equation}\label{eq:averageoja}
\tilde X_n=R_{\tilde X_{n-1}}\Big[\frac{1}{n}X_nX_n^\top \tilde X_{n-1}\Big],
\end{equation}
which is exactly one step of randomized power method iteration to compute the first $k$ eigenvectors of the matrix $\frac{1}{n}\sum_{i=1}^nX_iX_i^\top$ with step size $\gamma_n=\frac{1}{n}$.

Using similar computations to those in Lemma \ref{lem:equiv_oja} and \ref{lem:oja_linear}, it can be shown that these iterates are equivalent to those in \eq{averageojaret} since they have a similar linearzation in $T_{X_\star}\G$.
\begin{lemma}\label{lem:average_oja}
 Let $\tilde X_n=R_{\tilde X_{n-1}}\Big[\frac{1}{n}X_nX_n^\top \tilde X_{n-1}\Big]$ and $\tilde \D_n=R_{X_\star}^{-1}(\tilde X_n)$. Then $\tilde X_n$ obeys
\[
\tilde \D_n=\tilde \D_{n-1}+\frac{1}{n}\big[\tilde \D_{n-1}+ \D_{n} \big]+ \frac{1}{n}O\big( \Vert \D_{n-1}\Vert^2+\Vert \tilde\D_{n}\Vert^2 + \frac{1}{n^2}\big).
\]
\end{lemma}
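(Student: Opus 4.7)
The plan is to reduce the claim to the canonical streaming average result in Lemma~\ref{lem:stream_avg_iters} by proving that the alternative update $\tilde X_n = R_{\tilde X_{n-1}}\bigl[\tfrac{1}{n} X_n X_n^\top \tilde X_{n-1}\bigr]$ differs from the ``natural'' update $\tilde X_n^{\mathrm{nat}} = R_{\tilde X_{n-1}}\bigl[\tfrac{1}{n} R_{\tilde X_{n-1}}^{-1}(X_n)\bigr]$ by a higher-order term after being pulled back to $T_{X_\star}\G$. For $\tilde X_n^{\mathrm{nat}}$, Lemma~\ref{lem:stream_avg_iters} already delivers an expansion of exactly the stated form, so once we bound $\tilde X_n - \tilde X_n^{\mathrm{nat}}$ and invoke the smoothness of $R^{-1}_{X_\star}$ on the compact set $\X$ (Assumption~\ref{assump:manifold}), we are done.

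To estimate the gap, I would work directly with the explicit Grassmann retraction. Writing $Y = \tilde X_{n-1}$, $X = X_n$, $M = Y^\top X$, and using that $X$ is Stiefel with $X^\top X = I_k$, the choice $V = \tfrac{1}{n} XX^\top Y$ yields $(Y+V)^\top(Y+V) = I_k + \bigl(\tfrac{2}{n}+\tfrac{1}{n^2}\bigr) M M^\top$. Expanding $[\cdot]^{-1/2}$ to first order in $1/n$ gives
\[
\tilde X_n = Y + \tfrac{1}{n}(I - YY^\top) X X^\top Y + O(1/n^2),
\]
the key point being that the ``vertical'' component $YY^\top XX^\top Y$ of $V$ is cancelled by the orthonormalization factor at this order. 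An analogous expansion of $\tilde X_n^{\mathrm{nat}}$ gives $\tilde X_n^{\mathrm{nat}} = Y + \tfrac{1}{n}(I - YY^\top) X (Y^\top X)^{-1} + O(1/n^2)$, so
\[
\tilde X_n - \tilde X_n^{\mathrm{nat}} = \tfrac{1}{n} (I - YY^\top) X \bigl[ X^\top Y - (Y^\top X)^{-1} \bigr] + O(1/n^2).
\]
From the second-order expansion $R_{X_\star}(v) = X_\star + v - \tfrac{1}{2} X_\star v^\top v + O(\Vert v\Vert^3)$ and $X_\star^\top \D_n = X_\star^\top \tilde \D_{n-1} = 0$ one gets $M = I_k + O(\Vert\D_n\Vert^2 + \Vert\tilde\D_{n-1}\Vert^2)$, so that $X^\top Y - (Y^\top X)^{-1} = M^\top - M^{-1} = M^{-1}(M^\top M - I_k)$ is of the same order. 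Consequently $\tilde X_n - \tilde X_n^{\mathrm{nat}} = \tfrac{1}{n} O(\Vert\D_n\Vert^2 + \Vert\tilde\D_{n-1}\Vert^2) + O(1/n^2)$, and composing with the locally Lipschitz $R^{-1}_{X_\star}$ together with the bound from Lemma~\ref{lem:stream_avg_iters} delivers the claimed expansion.

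The main obstacle I foresee is cleanly executing the retraction expansion: tracking that the non-tangent piece $YY^\top XX^\top Y$ of $V$ contributes only at order $1/n^2$ after orthonormalization, while being careful that $R^{-1}_{X_\star}$ picks out a specific Stiefel lift so that Grassmann-level equivalences do not automatically apply. A secondary concern is controlling the constants in the $O(\cdot)$ terms to depend only on the compact set $\X$, which follows from the uniform bounds on the relevant retraction derivatives guaranteed by Assumption~\ref{assump:manifold}.
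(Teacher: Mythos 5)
Your proof is correct, but it takes a genuinely different route from the paper's own argument. The paper proves the lemma directly: it first expands the composition $R_{X_\star}^{-1}\circ R_{\tilde X_{n-1}}$ (mirroring Lemma~\ref{lem:tangent_rec}) to write $\tilde\D_n$ as $\tilde\D_{n-1}$ plus a $1/n$-scaled projected term and an $O(1/n^2)$ remainder, then substitutes the first-order expansion $X=X_\star+\D+O(\Vert\D\Vert^2)$ to evaluate $[\idm-X_\star X_\star^\top]X_nX_n^\top\tilde X_{n-1}$ explicitly. You instead compare the Oja-style update against the canonical streaming update $\tilde X_n^{\mathrm{nat}}$ driven by $R_{\tilde X_{n-1}}^{-1}(X_n)$, show the two Stiefel representatives agree to order $\tfrac{1}{n}O(\Vert\D_n\Vert^2+\Vert\tilde\D_{n-1}\Vert^2)+O(1/n^2)$ via the clean identity $M^\top - M^{-1} = M^{-1}(M M^\top - I_k)$ with $M = \tilde X_{n-1}^\top X_n = I_k + O(\cdot)$, and then import the canonical expansion. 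Your comparative strategy factors the geometry into a single Frobenius-norm estimate and is arguably easier to verify term-by-term; the paper's direct approach is shorter once one trusts the chain-rule step. Two things worth flagging. First, Lemma~\ref{lem:stream_avg_iters}'s \emph{stated conclusion} only bounds $\E\Vert\tilde\D_n-\bar\D_n\Vert$; the one-step Taylor recursion you actually need, $\tilde\D_n = \tilde\D_{n-1}+\tfrac{1}{n}(\D_n-\tilde\D_{n-1})+O(\cdot)$, appears inside its proof, so you should cite that internal step rather than the lemma's conclusion. Second, both your derivation and a careful re-derivation of the paper's own step give the increment $\D_n-\tilde\D_{n-1}$ rather than $\tilde\D_{n-1}+\D_n$: the vertical component $Y Y^\top X X^\top Y$ of $V=\tfrac{1}{n}X X^\top Y$ produces the $-\tfrac{1}{n}\tilde\D_{n-1}$ after orthonormalization, which is consistent with a streaming average; the sign appearing in the lemma statement and in the paper's line $[\idm-X_\star X_\star^\top]X_nX_n^\top\tilde X_{n-1}=\D_n+\tilde\D_{n-1}+O(\cdot)$ appears to be a typo (a direct check gives $\D_n+O(\cdot)$ with that projector, and $\D_n-\tilde\D_{n-1}+O(\cdot)$ with $\idm-\tilde X_{n-1}\tilde X_{n-1}^\top$), so your version is the one that is actually correct.
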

\begin{proof}
Following a similar approach as in the proof of Lemma \ref{lem:tangent_rec} we can show that
\[
\tilde \D_n=\tilde \D_{n-1}+\frac{1}{n+1}[\idm-X_\star X_\star^\top]X_nX_n^\top \tilde X_{n-1}+O(\frac{1}{n^2}).
\]
Then a direct expansion shows that for all $\Delta = R^{-1}_{X_\star}(X)$,
\[
X=R_{X_\star}(\D)=(X_\star+\D)[\idm+\D^\top\D]^{-1/2}=X_\star+\D+O(\Vert\D\Vert^2),
\]
so we obtain (using $\Vert\D_n^\top \tilde\D_{n-1}\Vert=O(\Vert\D_n\Vert^2+\Vert \tilde \D_{n-1}\Vert^2)$) that
\[
[\idm-X_\star X_\star^\top]X_nX_n^\top \tilde X_{n-1}=\D_n+\tilde \D_{n-1}+O(\Vert\D_n\Vert^2+\Vert \tilde \D_{n-1}\Vert^2)).
\]
\end{proof}
This manner of iterate averaging is interesting, not only due to its simplicity, but due to its close connection to the primal, randomized power method. In fact, this averaging method allows us
to interpret the aforementioned, streaming, averaged PCA algorithm as a preconditioning method.

Consider the case where we hope to compute the principal $k$-eigenspace of a poorly conditioned matrix $H$. The aforementioned streaming, averaged PCA algorithm can be interpreted as the composition of two stages. First, running $n$ steps of the randomized power method with step size $\gamma_n\propto1/\sqrt{n}$, to produce a well-conditioned matrix $\frac{1}{n}\sum_{i=1}^nX_iX_i^\top$, and then using the randomized power method with step size $1/n$ to compute the average of the points $X_i$ (which is efficient since the eigengap of $\frac{1}{n}\sum_{i=1}^nX_iX_i^\top$ is large). The intuition for this first step is formalized in the following remark,
\begin{remark}
  Assume the sequence of iterates $\{ X \}_{i=0}^{n}$ satisfies $d_F(X_i, X_\star) =  {O}(\sqrt{\gamma})$ for all $i=1, \dots, n$ then the eigengap of the averaged matrix $\tilde{X} = \frac{1}{n} \sum_{i=1}^{n} X_i X_i^\top$ satisfies $\tilde \lambda_{k}-\tilde \lambda_{k+1} \geq 1-kO(\sqrt{\gamma})$ (where we denote by $\{\tilde \lambda_i\}_{1\leq i\leq d}$ the eigenvalues of $\tilde X$ sorted in decreasing order).
\end{remark}
\begin{proof}
  $X_i X_i^\top = X_\star X_\star^\top + \eta_i$ for $\eta_i$ satisfying $\norm{\eta_i}_{F} \leq O(\sqrt{\gamma})$ by the definition of the distance $d_{F}(\cdot, \cdot)$, so it follows that
  $\tilde{X}_n = X_\star X_\star^{\top} + \eta$ for $\eta$ satisfying $\norm{\eta}_{F} \leq O(\sqrt{\gamma})$ using the triangle inequality.
  Now using the Weyl inequalities \citep{horn1990matrix} we have that:
  \[
    \lambda_{k}(\tilde{X}_n - \eta + \eta) \geq \lambda_{k}(\tilde{X}_n - \eta) + \lambda_d(\eta),
  \]
  where we denote by $\lambda_k(M)$ the $k$-th largest eigenvalue of the matrix $M$. Moreover $\lambda_{k}(\tilde{X}_n - \eta)=1$ and $|\lambda_d(\eta)| \leq \norm{\eta}_F \leq O(\sqrt{\gamma})$ since the spectral norm is upper bounded by the Frobenius norm, so
  $\lambda_{k}(\tilde{X}_n) \geq 1-O(\sqrt{\gamma})$.
  Now, recall that each $X_i \in \G$, so $\Tr[\tilde{X}_n]= \frac{1}{n} \sum_{i=1}^{n} \Tr[X_i X_i^\top] = k$ due to the normalization constraint $X_i X_i^\top = \idm_{k}$.
 Thus we must have that $\tilde \lambda_{k+1} \leq k O(\sqrt{\gamma})$ which implies $\tilde \lambda_k(\tilde{X}_n)-\tilde\lambda_{k+1}(\tilde{X}_n) = 1-kO(\sqrt{\gamma})$.
\end{proof}
For these reasons we prefer to use \eq{averageoja} rather than \eq{averageojaret} in our experiments and our presentation. It is worth noting that since these both iterations are equivalent up to $O(\gamma_n)$ corrections, they will (a) have the same theoretical guarantees in our framework and (b) perform similarly in practice.

\subsection{Convergence Results}

We are now ready to apply Theorem \ref{thm:oja_main} to the streaming $k$-PCA problem, with only the tedious task of verifying our various technical assumptions. Since we only seek to derive a local convergence result, we will once again use Assumption \ref{assump:manifold} which stipulates the iterates $X_n$ are restricted to $\X$ and that the map $X\mapsto\norm{R_{X_\star}^{-1}(X)}_F^2$ is strongly-convex. Here we will take the set $\X=\{Y : d_F(X_\star, Y)\leq \delta \}$, for $\delta>0$.  As previously noted, if the retraction $R$ was taken as the exponential map, the map $X\mapsto \norm{\Exp_{X_\star}^{-1}(X)}_F^2$ would always be retraction strongly convex locally in a ball around $y$ whose radius depends on the curvature, as explained by \citet{Afs11}. However,
we can also verify, that even when $R$ is the aforementioned projection-like retraction it is locally retraction strongly convex,
\begin{remark} \label{rmk:strongcvxpca}
 Let $k=1$ and take $R$ as the second-order retraction defined in \eq{retraproj}. Then there exists a constant $\delta>0$ such that on the set
 $\X = \{Y : \abs{X_\star^\top Y} \geq 1-\delta \}$, $X\mapsto \norm{R_{X_\star}^{-1}(X)}_F^2$ is retraction strongly convex.
\end{remark}
\begin{proof}
 For notational convenience, let $\epsilon$ be defined so we consider $\mathcal{Y} = \{ Y : \abs{X_\star^\top Y} \geq 1-\epsilon \}$. It suffices to show that for $\norm{X_\star}_2=1$, all $Y$ such that $\norm{Y}_2=1$ and $\abs{X_\star^\top Y} \geq 1-\epsilon$, and all $V$ such that $\norm{V}_2=1$, the map $g(t) : t \mapsto \norm{R_{X_\star}^{-1}(R_{Y}(t V))}_F^2$ is convex in $t$. For $k=1$, using the previous formulas for the retraction and its inverse norm, we can explicitly compute the map as $g(t) = \frac{1+t^2}{(X_\star^\top(Y+tV))^2}-1$. We then have that $g''(t) = \frac{2 (3 (X_\star V)^2-2t (X_\star^\top v)(X_\star^\top y)+ (X_\star^\top y)^2)}{(t (X_\star^\top v) +(X_\star^\top y))^4}$. It suffices to show that $3 (X_\star^\top V)^2+ (X_\star^\top Y)^2 > 2t (X_\star^\top V)(X_\star^\top Y)$ for all $t$ such that $R_{Y}(t V) \in \mathcal{Y}$.
 Since $V \in T_{Y}\M$ we must have that $V^\top Y = 0$ which implies that $(V^\top X_\star)^2 = (V^\top (Y-X_\star))^2 \leq 2-2Y^\top X_{\star} \leq 2 \epsilon$. Hence we have that $2\abs{(X_\star^\top V)(X_\star^\top Y)} \leq 4\epsilon$. On the otherhand,
 we have that $3 (X_\star^\top V)^2+ (X_\star^\top Y)^2 > (X_\star^\top Y)^2 > (1-\epsilon)^2$. Thus the statement is satisfied for all $\abs{t} \leq \frac{(1-\epsilon)^2}{4\epsilon}$. Using the local equivalence of distances for PCA (Lemma \ref{lem:distequiv}), the conclusion holds for some constant $\delta>0$.
\end{proof}
A similar, but lengthier linear algebra computation, also shows the result for $k > 1$.

Theorem \ref{thm:oja_main} requires several other assumptions (namely Assumptions \ref{assump:HessianLip} and \ref{assump:noiseLip}) which are (surprisingly) tedious to verify even in the simple case when $f$ is the quadratic Rayleigh quotient. However, we note Lemma \ref{lem:tangent_rec_2}, for the streaming $k$-PCA problem, can also be derived from first principles, circumventing the need to directly check these assumptions.
\begin{remark} If $f$ is the Rayleigh quotient, the conclusion of Lemma \ref{lem:tangent_rec_2} follows (without having to directly verify Assumptions \ref{assump:HessianLip} and \ref{assump:noiseLip}), when the stream of matrices satisfies the almost sure bound $\Vert H_n\Vert\leq1$.
\end{remark}
\begin{proof}
As in the proof of Lemma \ref{lem:average_oja}, we use
\[
X_n=R_{X_\star}(\D_n)=(X_\star+\D_n)[\idm+\D_n^\top\D_n]^{-1/2}=X_\star+\D_n+O(\Vert\D_n\Vert^2),
\]
to simplify $\nabla f_{n+1}(X_n)$. This yields
\BEAS
\nabla f_{n+1}(X_n)
&=&(\idm-X_nX_n^\top)H_{n+1}X_n +O(\Vert\D_n\Vert^2)\\
&=&(\idm-(X_\star +\D_n)(X_\star+\D_n)^\top)H_{n+1}(X_\star +\D_n)\\
&=&(\idm-X_\star X_\star^\top)H_{n+1}X_\star+(\idm-X_\star X_\star^\top)H_{n+1}\D_n-\D_n X_\star^\top H_{n+1}X_\star\\
&&+ X_\star \D_n^\top H_{n+1}X_\star+O(\Vert\D_n\Vert^2).
\EEAS
Upon projecting back into $T_{X_\ast}\M$, the term $ X_\star \D_n^\top H_nX_\star$ vanishes and we obtain,
\BEAS
(\idm-X_\star X_\star^\top)\nabla f_{n+1}(X_n)
&=& \Hess f(X_\star)\D_n+\nabla f_{n+1}(X_\ast)+\xi_{n+1},
\EEAS
with $\xi_{n+1}=(\idm-X_\star X_\star^\top)H_{n+1}\D_n-\D_n X_\star^\top H_{n+1}X_\star$ satisfying $\E[\xi_{n+1}|\mathcal{F}_{n}]=0$ and $\E[\Vert\xi_{n+1}\Vert^2|\mathcal{F}_{n}]=O(\Vert \Delta_n\Vert^2)$ since $\E[\Vert H-H_{n+1}\Vert^2|\mathcal{F}_{n}]$ is bounded (recall that we assume $\norm{H_n} \leq 1$ a.s.).
\end{proof}
Using results from the work of \citet{AllenLi2017-streampca} and \citet{shamir2016fast} we can now argue under appropriate conditions that
the randomized power method for the streaming $k$-PCA problem will converge in expectation to a neighborhood of $X_\star$.
Namely,
\begin{lemma}[\cite{AllenLi2017-streampca} and \cite{shamir2016fast}]
 \label{lem:oja_converge}
  Let $X_n$ denote the iterates of the randomized power method evolving as in \eq{rand_power}.
  If Assumption \ref{assump:manifold} holds for $\X$ defined above with $\delta<1/4$ then,
  \[
    \E[d_{F}^2 (X_\star, X_n)] = O(\gamma_n^2).
  \]
\end{lemma}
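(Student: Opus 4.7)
The plan is to reduce this statement to the known sharp convergence rates for Oja-type iterations established by \citet{shamir2016fast, shamir16b} and \citet{AllenLi2017-streampca}, using the machinery already developed in this section to translate between the three natural notions of distance on $\mathcal{G}_{d,k}$ and between the different first-order updates. The argument proceeds in three steps.

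First, I would use Lemma \ref{lem:distequiv} to pass freely between $d_F^2(X_\star, X_n)$ (the projected Frobenius distance appearing in the bounds of \citet{shamir2016fast} and \citet{AllenLi2017-streampca}) and the retraction-based pseudo-distance $\Vert R_{X_\star}^{-1}(X_n)\Vert_F^2$. The hypothesis $\delta < 1/4$ in Assumption~\ref{assump:manifold} ensures that all principal angles $\theta_i$ between $X_n$ and $X_\star$ remain strictly below $\pi/4$ for every $n \geq 0$, so Lemma \ref{lem:distequiv} applies and these quantities differ only by absolute constants.

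Second, I would invoke Lemma \ref{lem:equiv_oja} to match the randomized power method update $X_n = R_{X_{n-1}}(\gamma_n H_n X_{n-1})$ with the classical Oja iteration $X_{n-1} + \gamma_n(I - X_{n-1}X_{n-1}^\top) H_n X_{n-1}$ up to an $O(\gamma_n^2)$ per-step discrepancy. Since $\gamma_n = C n^{-\alpha}$ with $\alpha \in (1/2, 1)$, the discrepancy terms are square-summable, and a Gronwall-type comparison of the two recursions shows that the convergence rate established for the Oja iteration transfers to the randomized power method, up to lower-order terms that do not dominate the stated bound.

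Third, I would directly invoke the sharp local convergence rates for Oja's algorithm for streaming $k$-PCA proved in \citet{shamir2016fast} (for $k=1$) and \citet{AllenLi2017-streampca} (for general $k$). Under the eigengap assumption $\lambda_k > \lambda_{k+1}$, the almost sure bound $\Vert H_n\Vert \leq 1$, and a warm-started initialization (which is precisely what Assumption~\ref{assump:manifold} with $\delta < 1/4$ guarantees), their theorems yield the desired control on $\E[\sin^2\Theta(X_n, X_\star)]$, which by the first step of the argument translates into the claimed bound on $\E[d_F^2(X_\star,X_n)]$.

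The main technical obstacle is dictionary-matching: carefully translating the step-size conventions, initialization hypotheses, and formulations of the ``warm'' regime used in \citet{shamir2016fast,AllenLi2017-streampca} into the exact form required here, and then verifying that the $O(\gamma_n^2)$ perturbations from Lemma \ref{lem:equiv_oja} can be absorbed without degrading the rate. A secondary subtlety is that their analyses are global while ours is purely local; but because we have already assumed by Assumption~\ref{assump:manifold} that the iterates never leave $\X$, we may apply their results directly in the eigengap regime without invoking the more delicate ``escape from saddle points'' portion of their proofs.
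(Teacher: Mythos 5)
Your proposal is correct and takes essentially the same route as the paper, which simply cites this as ``a direct adaptation of Lemma 10 by \citet{shamir2016fast} as explained by \citet[][Section 3]{AllenLi2017-streampca}.'' Your three steps spell out precisely what that adaptation entails: the distance translation via Lemma~\ref{lem:distequiv}, the matching of update rules via Lemma~\ref{lem:equiv_oja}, and the appeal to Assumption~\ref{assump:manifold} to work in the local (warm-start) regime so that the global escape-from-saddle machinery of the cited papers is unnecessary. The only small caveat is that your step two (the Gronwall-type transfer from Oja to the randomized power method) is not strictly needed, since the cited results already apply to the power-method update directly; it is a harmless piece of extra caution rather than a different argument.
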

\begin{proof}
  This a direct adaptation of Lemma 10 by \citet{shamir2016fast} as explained by \citet[][Section 3]{AllenLi2017-streampca}.
\end{proof}
Finally, we are able to present the proof of Theorem \ref{thm:oja_main}. Since the assumptions of Lemma \ref{lem:oja_converge} are satisfied by assumption, and we have asymptotic equivalence of distances from Lemma \ref{lem:distequiv}, Assumption \ref{assump:slowrate} is satisfied. Further, since Lemmas \ref{lem:oja_linear} and \ref{lem:average_oja} show the linearized process of the randomized power iterates is equivalent to that of Riemannian SGD up to $O(\gamma_n^2)$ corrections, distributional convergence immediately follows from these Lemmas and Theorem \ref{thm:main}. Lastly, we can compute the asymptotic variance.

We first compute the inverse of the Hessian of $f$. Let us consider the following basis of $T_{X_\star}\G$, $\{v_i e_j^\top \}_{k<i\leq d;j\leq k}$, where we denote by $v_i$ the eigenvector of $H$ associated with the eigenvalue $\lambda_i$ and $e_j$ the $j$th standard basis vector.
Indeed  $\{v_i e_j^\top \}_{k<i\leq d;j\leq k}$ is a linear independent set since   $\{v_i e_j^\top \}_{i\leq d;j\leq k}$ is a basis of $\mathbb R^{d,k}$. Moreover for $k<i\leq d$, $X_\star^\top v_i e_j^\top=0$ so $ v_i e_j^\top\in T_{X_\star}\G$. We conclude this set is a basis from a dimension count. We now compute the projection of the Hessian on this basis
\BEAS
\Hess f(X_\star)[v_ie_j^\top]&=& v_ie_j^\top X_\star^\top H X_\star - Hv_ie_j^\top\\
&=&
v_ie_j^\top \diag(\lambda_1,\dots, \lambda_k)-\lambda_i v_ie_j^\top\\
&=&(\lambda_j-\lambda_i) v_ie_j^\top.
\EEAS
Therefore
\[
[\Hess f(X_\star)]^{-1}[v_ie_j^\top]= \frac{v_ie_j^\top}{\lambda_j-\lambda_i}.
\]
We now reparametrize $\tilde H_n= H^{-1/2} H_n H^{-1/2} $ such that $\E[\tilde H_n]= \idm$. Thus
\BEAS
\nabla f_n(X_\star)
&=&  (\idm-X_\star X_\star ^\top)H_nX_\star \\
&=&(\idm-X_\star X_\star ^\top)H^{1/2} \tilde H_nH^{1/2} X_\star\\
&=&
\sum_{i=k+1}^d \sqrt{\lambda_i} v_iv_i^\top  \tilde H_n\sum_{j=1}^k \sqrt{\lambda_j}v_j e_j^\top\\
&=&
\sum_{j=1}^k\sum_{i=k+1}^d\sqrt{\lambda_i \lambda_j} [v_i^\top \tilde H_n v_j] v_i  e_j^\top.
\EEAS
This yields
\[
[\Hess f(X_\star)]^{-1}\nabla f_n(X_\star) = \sum_{j=1}^k\sum_{i=k+1}^d\frac{\sqrt{\lambda_i \lambda_j}}{\lambda_j-\lambda_i}[v_i^\top \tilde H_n v_j]v_ie_j^\top.
\]
And the asymptotic covariance becomes,
\begin{align}
& [\Hess f(X_\star)]^{-1}\E[\nabla f_n(X_\star)\nabla f_n(X_\star)^\top] \Hess f(X_\star)]^{-1}= \notag \\
& 
\sum_{j'=1}^k\sum_{i'=k+1}^d \sum_{j=1}^k\sum_{i=k+1}^d\frac{\sqrt{\lambda_i \lambda_j} \cdot \sqrt{\lambda_{i'} \lambda_{j'}}}{(\lambda_{j}-\lambda_{i}) \cdot (\lambda_{j'}-\lambda_{i'})} \E \left[\left(v_i^\top \tilde H_n v_j\right) \left(v_{i'}^\top \tilde H_n v_{j'}\right)\right](v_i e_j^\top) \otimes (v_{i'} e_{j'}^\top). \notag
\end{align}

It is interesting to note that the tensor structure of $C_{ii'jj'}$ significantly simplifies if we have that $H_n = h_n h_n^\top$ for $h_n \sim \mathcal{N}(0, \Sigma)$ -- that $H_n$ is comprised of a rank-one stream of Gaussians. Recall we have $\tilde{H}_n = H^{-1/2} H_n H^{-1/2} = H^{-1/2} x_n x_n^\top H^{-1/2} = x'_n (x'_n)^\top$.
So for a rank-one stream,
\begin{align}
  C_{ii'jj'} = \E \left[\left(v_i^\top \tilde H_n v_j\right) \left(v_{i'}^\top \tilde H_n v_{j'}\right)\right]  = \E \left[ \langle v_i, x'_n \rangle
  \langle v_{i'}, x'_n \rangle \langle v_j, x'_n \rangle \langle v_{j'}, x'_n \rangle \right]. \notag
\end{align}
Since $x'_n \sim \mathcal{N}(0, I_d)$ and the law of a jointly multivariate normal random variable is invariant under an orthogonal rotation, the joint distribution of the vector
$[\langle v_i, x'_n \rangle,
\langle v_{i'}, x'_n \rangle, \langle v_j, x'_n \rangle, \langle v_{j'}, x'_n \rangle] \sim \mathcal{N}(0, I_4)$ for $i \neq i'$ and $j \neq j'$. So the only non-vanishing terms become,
\begin{align}
  C_{ii'jj'} = \delta_{ii'} \delta_{jj'}, \notag
\end{align} and the asymptotic covariance reduces to,
\begin{align}
    C = \sum_{j=1}^k\sum_{i=k+1}^d\frac{{\lambda_i \lambda_j}}{(\lambda_j-\lambda_i)^2} (v_i e_j^\top) \otimes (v_{i} e_{j}^\top). \notag
\end{align}

This is precisely the same asymptotic variance given by \citet{reiss2016non} and matches the lower bound of \citet{CaiMaWu13} obtained for the (Gaussian) spiked covariance model. However, formally, our result requires the $H_n$ to be a.s. bounded.

%!TEX root = main.tex
\section{Additional Experiments: Numerical Counterexample to Oja Convergence with Constant Step Size}
\label{sec:counter}
 \begin{figure}[!ht]
\centering
  \includegraphics[width=0.8\linewidth]{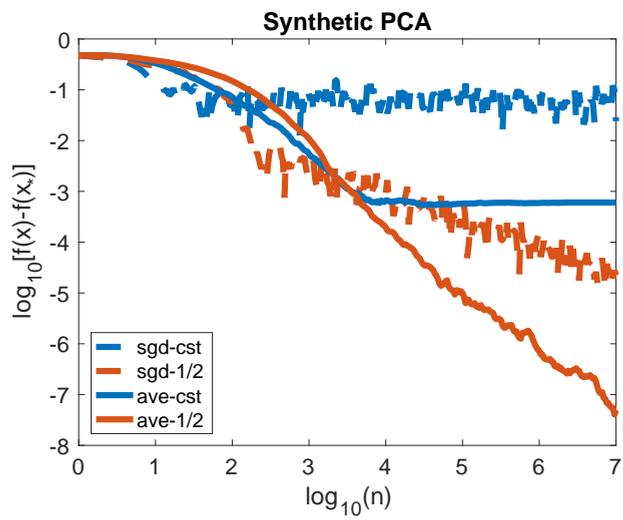}
  \vspace{-3.0cm}
  \caption{Counterexample for the convergence of averaged SGD with constant step size.}
     \label{fig:counter}
\end{figure}
In this section we present an additional experiment which shows empirically that averaged SGD with constant step-size does not always converge for the streaming $k$-PCA problem. We consider $d=2$ and  a covariance matrice $H$ with random eigenvectors and two eigenvalues $\frac{1\pm1/\pi}{4}$. The noise distribution of the stream $\{h_n h_n^\top \}_{n\geq0}$ uses a more involved construction. Consider $\alpha_n\sim\mathcal N(-\pi/2,\pi^2/4)$ and $\beta_n\sim\mathcal N(\pi/4,\pi^2/16)$ and $\tau_n\sim \mathcal{B}(1/2)$. Then define $\theta_n$ as:
\[
\theta_n= \tau_n \alpha_n+(1-\tau_n)\beta_n,
\]
and the stream $h_n$ to be
\[
h_n= \Big[\frac{\cos(\theta_n)}{\sqrt{(1-1/\pi)/2}}, \frac{\sin(\theta_n)}{\sqrt{(1+1/\pi)/2}}\Big].
\]
\myfig{counter}, compares the performance of averaged SGD with constant step size $\gamma=1$ and the decreasing step size $\gamma_n=\frac{1}{\sqrt{n}}$. We see that with constant step size both SGD and averaged SGD do not converge to the true solution. SGD oscillates around the solution in ball of radius $\sim \gamma$ and averaged SGD does converge but not to the correct solution (although is still contained in a ball of radius $\sim 10^{-4}$ around the correct solution). On the other hand, SGD with decreasing step size behaves just as well as with a Gaussian data stream. SGD converges to the solution at the slow rate $O(1/\sqrt{n})$, while averaged SGD converges at the fast rate $O(1/{n})$.

This interesting example shows that constant step size averaged SGD does not converge to the correct solution in all situations. However, it remains a open problem to investigate the convergence properties of constant step size SGD in the Gaussian case.

\end{document}